\documentclass[lettersize,journal]{IEEEtran}
\usepackage{cite}
\usepackage{amsmath,amssymb,amsfonts}
\usepackage{amsthm}
\def\softmax{\operatornamewithlimits{%
  \mathchoice{\vcenter{\hbox{ softmax}}}
             {\vcenter{\hbox{ softmax}}}
             {\mathrm{softmax}}
             {\mathrm{softmax}}}}
\usepackage{graphicx}
\usepackage{textcomp}
\usepackage{xcolor}
\usepackage{balance} 
\usepackage{dsfont}
\usepackage{booktabs}
\usepackage{float}
\usepackage{subfigure}
\usepackage{scrextend}
\usepackage{url}
\usepackage{algorithm}
\usepackage{algorithmicx,algpseudocode}
\usepackage{todonotes}
\usepackage{multirow}
\usepackage{algpseudocode}
\usepackage{booktabs}


\usepackage{array} 
\usepackage{tikz,balance}
\usepackage{enumitem,colortbl}
\urlstyle{same}

\usepackage{mathtools}

\newtheorem{theorem}{Theorem}

\newtheorem{assumption}{Assumption}
\newtheorem{case}{Case}


\newcommand{\sS}{\mathcal{S}}
\newcommand{\sA}{\mathcal{A}}
\newcommand{\sP}{\mathcal{P}}
\newcommand{\sR}{\mathcal{R}}
\newcommand{\sC}{\mathcal{C}}

\hyphenation{op-tical net-works semi-conduc-tor IEEE-Xplore}

\begin{document}

\title{Robust Dynamic Material Handling via Adaptive Constrained Evolutionary Reinforcement Learning}
\author{Chengpeng~Hu, Ziming~Wang,
Bo~Yuan,~\IEEEmembership{Member,~IEEE,}
Jialin~Liu,~\IEEEmembership{Senior Member,~IEEE,}
Chengqi~Zhang,~\IEEEmembership{Senior Member,~IEEE,}
Xin~Yao,~\IEEEmembership{Fellow,~IEEE}
\thanks{C. Hu is with Eindhoven University of Technology, Eindhoven, Netherlands. Z. Wang and B. Yuan are with the Southern University of Science and Technology, Shenzhen, China. J. Liu and X. Yao are with the Lingnan University, Hong Kong SAR, China. C. Zhang is with the Hong Kong Polytechnic University, Hong Kong SAR, China. 
}
\thanks{This work is accepted by IEEE Transactions on Neural Networks and Learning Systems}
}

\markboth{Journal of \LaTeX\ Class Files,~Vol.~14, No.~8, August~2021}%
{Shell \MakeLowercase{\textit{et al.}}: A Sample Article Using IEEEtran.cls for IEEE Journals}


\maketitle

\begin{abstract}
Dynamic material handling (DMH) involves the assignment of dynamically arriving material transporting tasks to suitable vehicles in real time for minimising makespan and tardiness.
In real-world scenarios, historical task records are usually available, which enables the training of a decision policy on multiple instances consisting of historical records.
Recently, reinforcement learning has been applied to solve DMH. Due to the occurrence of dynamic events such as new tasks, adaptability is highly required. Solving DMH is challenging since constraints including task delay should be satisfied. A feedback is received only when all tasks are served, which leads to sparse reward. Besides, making the best use of limited computational resources and historical records for training a robust policy is crucial. The time allocated to different problem instances would highly impact the learning process.
To tackle those challenges, this paper proposes a novel adaptive constrained evolutionary reinforcement learning (ACERL) approach, which maintains a population of actors for diverse exploration. ACERL accesses each actor for tackling sparse rewards and constraint violation to restrict the behaviour of the policy. Moreover, ACERL adaptively selects the most beneficial training instances for improving the policy. Extensive experiments on eight training and eight unseen test instances demonstrate the outstanding performance of ACERL compared with several state-of-the-art algorithms. Policies trained by ACERL can schedule the vehicles while fully satisfying the constraints. Additional experiments on 40 unseen noised instances show the robust performance of ACERL.  Cross-validation further presents the overall effectiveness of ACREL. Besides, a rigorous ablation study highlights the coordination and benefits of each ingredient of ACERL. 
\end{abstract}

\begin{IEEEkeywords}
Dynamic material handling, constrained optimisation, evolutionary reinforcement learning,
natural evolution strategy, experience-based optimisation.
\end{IEEEkeywords}

\section{Introduction}
\label{sec:intro}
In modern smart logistics such as flexible manufacturing systems and warehouse floors, the need of automated guided vehicles (AGVs) has grown fastly. Dynamic material handling (DMH)~\cite{zou2021effective,singh2022matheuristic} involves scheduling a fleet of AGVs to serve dynamically arriving transporting tasks in real time.
Transporting tasks typically include lifting and moving material between workstations using AGVs. The purpose of the DMH is to minimise makespan and tardiness, i.e., the maximal task finishing time and delay of tasks, in response to unexpected events and strict problem constraints.
Due to the unexpected occurrence of dynamic events such as vehicle breakdowns and new tasks in DMH,
the scheduling plans have to be determined frequently in real time~\cite{ouelhadj2009survey,kaplanouglu2015multi}. Problem constraints such as the delay of tasks should be also guaranteed to ensure the operation of manufacturing~\cite{hu2023dmh}.

Usually, some historical completed task records, e.g., previous task contexts and AGV information, are available in real life~\cite{hu2020deep,jeong2021reinforcement}.
Training a decision policy using multiple instances consisting of historical records enhances the generalisation, but the trade-off across different instances is introduced.
Selecting suitable instances at different training stages is crucial with consideration of limited computational budgets~\cite{dennis2020emergent}. It is also hard to determine the unique contribution of each task assignment to the performance of the entire system since the overall performance can only be determined once all tasks are completed, which leads to a sparse feedback.

Dispatching rule is a classic and common method for handling DMH~\cite{blackstone1982state,sabuncuoglu1998study}. Simple yet practicable mechanism makes them easy to deploy to real-world operations quickly. But this simplicity leads to limited performance and poor adaptability to real-world scenarios. Search-based methods, such as evolutionary algorithms (EAs) have been used for handling DMH when dynamic events occur, which restarts the search for a new solution~\cite{chryssolouris2001dynamic}. However, a new search needs to be run from scratch for every single new scenario. The long search time merely meets the requirement of a fast response.

Recently, reinforcement learning (RL) makes some promising progress in DMH~\cite{li2018simulation,hu2020deep,jeong2021reinforcement} by providing prompt on-line responses with trained polices. In RL setting, 
DMH is formulated as a Markov decision process (MDP), where the reward function is manually constructed based on the makespan~\cite{hu2020deep}.
It takes massive effort and human knowledge to construct a ``good'' reward function since RL-based methods~\cite{li2018simulation,jeong2021reinforcement} often suffer from sparse feedback~\cite{dewey2014reinforcement}.
However, those designed reward functions cannot guarantee the consistence with the original objective function and fail to generalise on multiple instances.

How to handle constraints such as the availability of AGVs and task delays in DMH is crucial. The work of \cite{hu2020deep} repeatedly sampled the task assignment until the sampled assignment is feasible. However, ensuring adherence to feasible task assignments is hard by re-sampling. It is also not plausible to satisfy the long-term task delay via simply penalising rewards~\cite{ng1999policy,achiam2017constrained}. To handle constraints, the work of \cite{hu2023dmh} formulated the DMH problem as a constrained MDP~\cite{altman1999constrained} and proposed reward constrained policy optimisation with masking (RCPOM) approach. However, RCPOM suffers from sparse feedback from both reward and constraint sides. And the robust performance across multiple DMH problem instances is not fully addressed. A policy may perform well on the given instances, while failing on others, as it might not have been sufficiently trained on certain instances with limited computational budget~\cite{hu2023dmh}.

In this paper, we consider DMH problem with uncertainties and sparse feedback in the context of multiple instances.
We proposed a robust adaptive constrained evolutionary reinforcement learning (ACERL) approach to achieve real-time decision-making in DMH with adaptability and effectiveness. Unlike regular RL methods, ACREL inherits the gradient-free characteristic of natural evolution strategies (NES)~\cite{salimans2017evolution}, which tackles sparse feedback in both rewards and constraint violation penalties intuitively. No gradient calculation related to the backpropagation is required.
ACREL maintains a population of actors for diverse exploration.
Intrinsic stochastic ranking using the rank-based fitness is proposed to evaluate the actors. Those rank-based fitness values facilitate the estimation of natural gradients, which implies the optimisation direction for maximising rewards and satisfying constraints simultaneously. The limited computational budget leads to a trade-off in the utilisation of multiple instances consisting of historical records for training a robust policy.
To tackle the trade-off, we propose a novel training mode that adaptively selects a subset of training instances with which a population of actors interacts. 
The selected instances introduce a bias in estimating the natural gradients after being ranked by the intrinsic stochastic ranking, which contributes to a remarkable performance among multiple unseen instances. 

Main contributions of this paper are summarised as follows:
\begin{itemize}
    \item We propose ACERL to address DMH with uncertainties and sparse feedback. ACERL balances the reward maximisation and constraint satisfaction, even when the agent receives only limited feedback. Given multiple instances with different contexts, ACERL still provides robust scheduling solutions in real time.

    \item We experimentally demonstrate the limitations of using a single instance or randomly selecting from multiple instances for training decision policies. Under this observation, we propose an adaptive training mode that deploys an adaptive instance sampler to break the trade-off of multiple instances. The limited computational resources are allocated by adaptively choosing the most beneficial instances for training a robust policy. 
    \item ACERL requires no domain knowledge and makes no assumption on reward or constraint-related functions. It is suitable to solve real-world problems with sparse feedback and constraints.
    \item Extensive experiments show that ACERL outperforms five state-of-the-art algorithms on eight training and eight unseen test instances. Additional experiments on 40 instances with uncertainties and leave-one-out cross-validation between heterogeneous training instances present its robust performance, in terms of maximising the reward and satisfying constraints. The functionality and coordination of each ingredient in ACERL are further demonstrated through a rigorous ablation study.
 
\end{itemize}

The remainder of this paper is organised as follows. Section~\ref{sec:back} introduces the DMH problem and related work. 
Section~\ref{sec:alg} details the proposed ACERL and its components. 
Section~\ref{sec:exp} presents the experimental studies.

Section~\ref{sec:conclusion} concludes and discusses some future directions.

\section{Background}
\label{sec:back}
Section \ref{back_dmh} describes and formulates DMH.
Then, related works on scheduling DMH and evolutionary reinforcement learning are presented in Sections \ref{back_sdmh} and \ref{back_erl}, respectively.

\subsection{Dynamic material handling}
\label{back_dmh}
DMH are widely found in manufacturing, warehouse and other systems with transporting scenarios~\cite{zou2021effective,singh2022matheuristic}. 
The problem is involved with transporting some goods from their storage sites to some delivery sites with regard to dynamic events including new tasks and vehicle breakdowns. 
A policy is responsible for assigning dynamically arriving transporting tasks to AGVs of different types in real-time. The objectives are minimising the makespan and restricting the task delay within a tolerant threshold. 
\subsubsection{Problem description}
Fig.~\ref{fig:dmh_ren} presents an example of a material handling scenario on a manufacturing floor.
The manufacturing floor is formed as a graph $G(\mathcal{L},\mathcal{T})$, where $\mathcal{L}$ and $\mathcal{T}$ denote the sets of sites and paths, respectively. Sites are stop or working points like pickup points, delivery points and parking points for AGVs. Each workstation serves as either a pickup or delivery point, where AGVs collect some material or transport carried material. The warehouse can only be the pickup point.
Task $u_1,u_2,\dots$ can be released at any moment. The total number of tasks $m$ is unknown initially. A task $u$ is determined by pickup point $u.s$, delivery point $u.e$, arrival time $u.o$ and expiry time $u.\tau$, when it is released.
A fleet of AGVs $\mathcal{V}$ is arranged to complete tasks by a policy $\pi$. 
An AGV has one of the three possible states, namely \textit{Idle}, \textit{Working} and \textit{Broken}. Only available AGVs, i.e., in \textit{Idle} state, can serve tasks, which is an \textit{instantaneous constraint}. An AGV can handle one and only one task at once while \textit{Working}. If an AGV is broken, it should release the assigned task and be repaired in place for a certain amount of time $v.rp$ before being available again.

The longest total task finishing time among all AGVs after finishing serving $m$ tasks, \textit{makespan} $F_m(\pi)$, is formulated in Eq. \eqref{eq:makespan}: 
\begin{equation}
F_{m}(\pi) = \max_{v \in \mathcal{V}^{\pi}} FT(u^v_{|HL(v)|},v),\label{eq:makespan}
\end{equation}
where $HL(v)$ refers to the historical list of all tasks served by $v$ and $FT(u,v)$ is the time of finishing serving task $u$ by $v$.

The average delay of all tasks, \textit{tardiness} $F_t(\pi)$, is defined as  formulated as:
\begin{equation}
F_{t}(\pi) = \frac{1}{m}\sum_{v\in\mathcal{V}^{\pi}}\sum_{i=1}^{|HL(v)|}\max\{FT(u^v_{i},v)-u^v_{i}.o-u^v_{i}.\tau,0\},\label{eq:tardiness}
\end{equation}
where $\mathcal{V}^{\pi}$ denotes the fleet scheduled by policy $\pi$ and $u^v_{i}$ represent the task $u_i$ served by AGV $v$.

\begin{figure}[htbp]
    \centering
    \includegraphics[width=.8\linewidth]{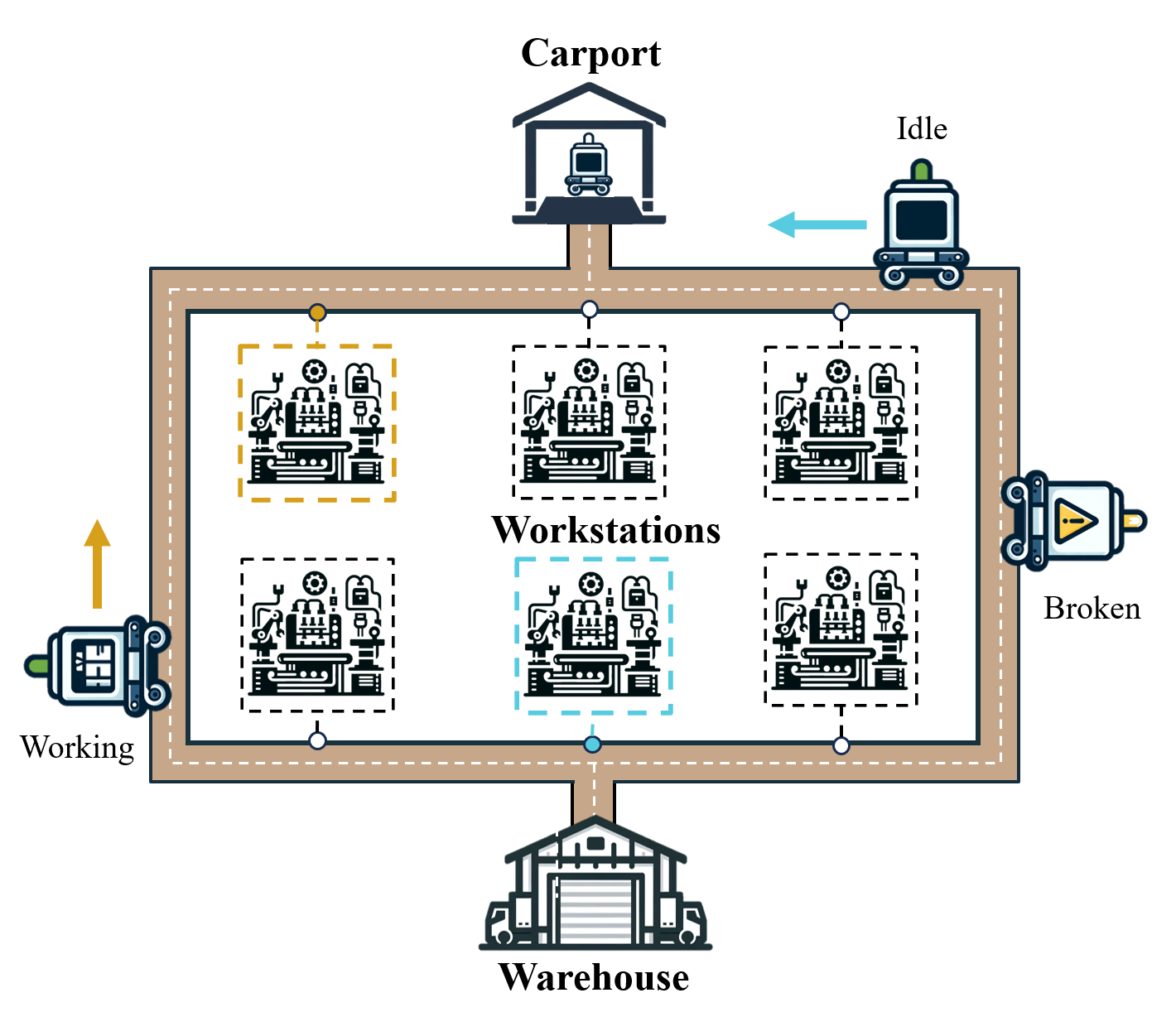}
    \caption{An illustration of DMH with AGVs that are in one of the three possible states \textit{Idle}, \textit{Working} and \textit{Broken} at different time step}.
   \label{fig:dmh_ren}
\end{figure}
\subsubsection{Constrained MDP formulation}

DMH is formulated as a constrained MDP (CMDP)~\cite{hu2023dmh,altman1999constrained}, denoted by the tuple $\langle\sS,\sA,\sR,\sC,\sP,\gamma\rangle$, where $\sS$ is the set of states, $\sA$ is the set of actions, and $\sR:\sS\times \sA \times \sS \mapsto \mathbb{R}$ is the reward function. $\sC$ is the penalty functions related to the constraint with $\sC:\sS\times \sA \times \sS \mapsto \mathbb{R}$. $\sP:\sS\times \sA \times \sS \mapsto [0,1]$ is the transition probability function. 
$\gamma \in (0,1)$ is the discount factor. 
In DMH, the state space, $\sS$, is encoded by task and AGV information, such as the remaining time before timeout and waiting time of unassigned tasks in the task pool at decision time $t$. The action space, $\sA =\mathcal{D} \times \mathcal{V}_t $, is a hybrid space combining AGVs information $\mathcal{V}_t$ at $t$ and dispatching rules $\mathcal{D}$; for instance, it involves deciding a specific dispatching rule to assign a task to a chosen AGV. To maintain consistency between the reward function and objective function, the policy receives only the negative number of the final makespan as its reward, i.e., $-F_m$, while receive zero at other time. Similarly, the penalty function, $\sC$, is constructed based on the tardiness, where the policy gets $-F_t$ once all tasks are served~\cite{hu2022constrained}.

Policy $\pi(a_t|s_t)$ is the probability of taking action $a_t$ in state $s_t$ at time $t$. 
Usually, a cumulative constraint $\sC = g(c(s_0,a_0,s_{1}),\dots,c(s_{t},a_{t},s_{t+1}))$ is restricted by a threshold $\xi$ where $c(s,a,s'): \sS \times \sA \times \sS \mapsto \mathbb{R}$ is a per-step penalty. 
Let $J^{\pi}_{\sC}$ denote the expectation of the cumulative constraint, formulated as $J^{\pi}_{\sC} = \mathbb{E}_{\tau \sim \pi}[\sC]$, where $\tau\sim\pi$ denotes a trajectory $(s_0,a_0,s_1,a_1,s_2,\dots)$ sampled from $\pi$. 
In DMH, tardiness, formulated in Eq. \eqref{eq:tardiness}, is considered as the \emph{cumulative constraint}.
The instantaneous constraint considers if an action $a_t$ is legal at state $s_t$, i.e., only choose available vehicles.
The policy $\pi_\theta$, parameterised by $\theta$, aims at maximising the discounted cumulative reward while satisfying the constraints, formulated as~\cite{altman1999constrained}:
\begin{eqnarray}
\max_{\theta}~J_{\sR}^{\pi_\theta}&=&\mathbb{E}_{\tau \sim\pi_\theta}[\sum_{t=0}^{\infty} \gamma^t \sR(s_t,a_t,s_{t+1})], \label{eq:jr}\\
s.t.&& J^{\pi}_\sC\leq \xi, \\
&& a_t \text{ is legal}, \forall t=0,1,2,\dots,
\end{eqnarray}
where $\xi$ represents the constraint threshold.

\subsection{Scheduling dynamic material handling}
\label{back_sdmh}

Dispatching rules is a classic approach for DMH. Rules such as first come first serve (FCFS), earliest due date first (EDD), and nearest vehicle first (NVF), usually have simple hand-crafted mechanisms~\cite{sabuncuoglu1998study}. Due to their simplicity, they can be quickly implemented and deployed in simple manufacturing systems.
However, the dispatching rules can hardly be improved and adapted to complex situations. Motivated by the poor generalisation of using one single rule at once~\cite{blackstone1982state}, some work combined multiple dispatching rules to make decisions~\cite{tay2008evolving,chen2011multiple}. However, how to effectively coordinate multiple dispatching rules presents a new challenge.

Iterative methods can also be applied to solve DMH. According to the dynamic events, Iterative methods divide the scheduling problem into several subproblems. The process restarts to optimise each subproblem when a dynamic event occurs.
Liu et al.~\cite{liu2018dynamic} proposed a dynamic framework to schedule a fleet of robots for material transportation. When a new task arrives, the framework is triggered to search for a new scheduling solution with a mixed integer programming (MIP) model. Yan et al.~\cite{yan2018dynamic} integrated the MIP model to an iterated algorithm for job shop in material handling with multiple new tasks. 

Instead of using a MIP model, population-based search methods have been applied to solve DMH by searching subproblm.  
 Chryssolouris and Subramaniam~\cite{chryssolouris2001dynamic} assumed that the number of tasks is known and the operations of each job may vary. 
 Task assignments are encoded as permutation-based solutions, then a genetic algorithm searches for promising operation sequences whenever a dynamic event happens~\cite{chryssolouris2001dynamic}. 
Umar et al. ~\cite{umar2015hybrid} proposed a hybrid genetic algorithm with a weighted sum fitness function of multiple objectives using random key representation for DMH.
Wang et al.~\cite{wang2020proactive} optimised travel distance and energy consumption at the same time with the non-dominated sorting genetic algorithm. If a trolley breaks or a task is cancelled, the optimisation restarts~\cite{wang2020proactive}.
An adaptive parameter adjustment with discrete invasive weed
optimisation algorithm was investigated by Li et al.~\cite{li2022dynamic} to handle the dynamic scheduling, in which multiple dynamic events such as new task, emergent tasks and task cancellation are considered.
Although the aforementioned search-based methods may obtain promising solutions for the subproblems triggered by every occurrence of dynamic events, they are limited by the long optimisation time~\cite{chryssolouris2001dynamic}. Thus, they hardly perform a fast and adaptive response for some real-world scenarios. Moreover, search-based methods typically require a specific problem representation, which is challenging to directly transfer for optimising other problems, particularly those involving additional dynamic events and constraints.

It has been witnessed that RL presents a competitive level beyond humans on some sequential decision-making problems including video games~\cite{mnih2015human, hu2023dorl}, Go~\cite{silver2016mastering}, and robotic control~\cite{haarnoja2018soft}. RL-based methods have been applied to solve DMH problems for their advantages over sequential decision-making problems.
Chen et al.~\cite{chen2015reinforcement} proposed a Q$_\lambda$ algorithm with forecasted information. The RL-based dispatching policy decides the tasks of the dolly train, considering multiple loads.
Xue et al.~\cite{xue2018reinforcement} considered a flow shop scenario with multiple AGVs and applied Q-learning to find a suitable match between jobs and vehicles.
Kardos et al.~\cite{kardos2021dynamic} optimised the choice of workstations for products, using an RL-based method.
Instead of directly choosing tasks, Hu et al.~\cite{hu2020deep} adapted deep Q-learning (DQN) to choose a pair of dispatching rule and AGV, inspired by the work of Chen et al.~\cite{chen2011multiple}. The chosen rule then assigns a waiting task to the paired AGV, aiming at minimising makespan and delay ratio. 
Li et al.~\cite{li2025real} modelled the scheduling as a multi-agent scenarios, where dispatching tasks and selecting vehicles are controlled by two agents, respectively.

However, RL-based methods have to put extra effort to design dense reward functions for specific problems due to the inherent sparsity of real-world problems. Although there are some techniques like intrinsic rewards~\cite{bellemare2016unifying,houthooft2016vime} to address the issue, they introduce a bias to the true reward function. 
Besides, RL-based methods are not originally designed to handle constraints such as vehicle breakdowns, which makes them hard to be applied to real-world problems directly. To address those issues, Hu et al.~\cite{hu2023dmh} formulated the problem as a CMDP~\cite{altman1999constrained} and proposed a constrained RL method, named RCPOM, by incorporating reward shaping and invalid action masking into reward constraint policy optimisation (RCPO)~\cite{tessler2018reward}. 
Although RCPOM has shown superior performance on diverse DMH instances, it doesn't present enough constraint satisfaction~\cite{hu2023dmh}, due to the temporal credit assignment problem with both sparse rewards and constraint violation. Moreover, as the RL agent is exposed to multiple instances during training, a critical question arises regarding the optimal allocation of training resources across these instances. The proportion of time or episodes dedicated to each instance can significantly impact the learning process. As highlighted by~\cite{jiang2021prioritized}, different sampling proportions lead to variations in the distribution of experiences, which in turn introduces biases in gradient calculations. This bias can steer the policy optimisation in suboptimal directions, potentially compromising the agent's generalisation capabilities.

\subsection{Evolutionary reinforcement learning}
\label{back_erl}
Considering neural network optimisation (weights or architecture) as a black-box problem, evolutionary reinforcement learning (ERL) directly applies EAs~\cite{yao1999evolving} or integrates EA to RL to search for parameters of an actor~\cite{sigaud2022combining,bai2023evolutionary}. ERL typically uses a fitness-based metric for parent selection and survivor selection. It particularly works in the non-differentiable case since no gradient calculation related to the backpropagation is necessarily required.

Evolution strategy (ES), a typical EA for numerical optimisation, is often used to train neural network policies for RL tasks. Alimans et al.~\cite{salimans2017evolution} leveraged natural evolution strategies for policy optimisation. Noises are sampled from a factored Gaussian distribution and then added to the policy network to generate a population. The closed form of the gradient estimator is also given by Alimans et al.~\cite{salimans2017evolution}.
Such et al.~\cite{such2017deep} applied a genetic algorithm to evolve networks, in which the parameters of a neural network are treated as an individual. Conti et al.~\cite{conti2018improving} validated the effectiveness of novelty search and quality diversity assisted with ES, when meeting sparse and deceptive reward functions.
Yang et al.~\cite{yang2022evolutionary} proposed a cooperative co-evolution algorithm based on negatively correlated search to optimise parameters of policy network. 
Khadka and Tumer~\cite{khadka2018evolution} proposed a hybrid framework that combines EA and MDP-based RL, addressing the sparse reward and exploration issues. Hu et al.~\cite{hu2023ecrl} proposed an evolutionary constrained reinforcement learning algorithm for robotic control. Constraint handling technique is incorporated to the EA to handle constrained RL problems. However, on the other hand, the inherent population and elite survival prevent it from training on multiple scenarios at the same time.

\section{Adaptive constrained evolutionary reinforcement learning}
\label{sec:alg}
\begin{figure*}[!h]  
  \centering
  \includegraphics[width=0.9\linewidth]{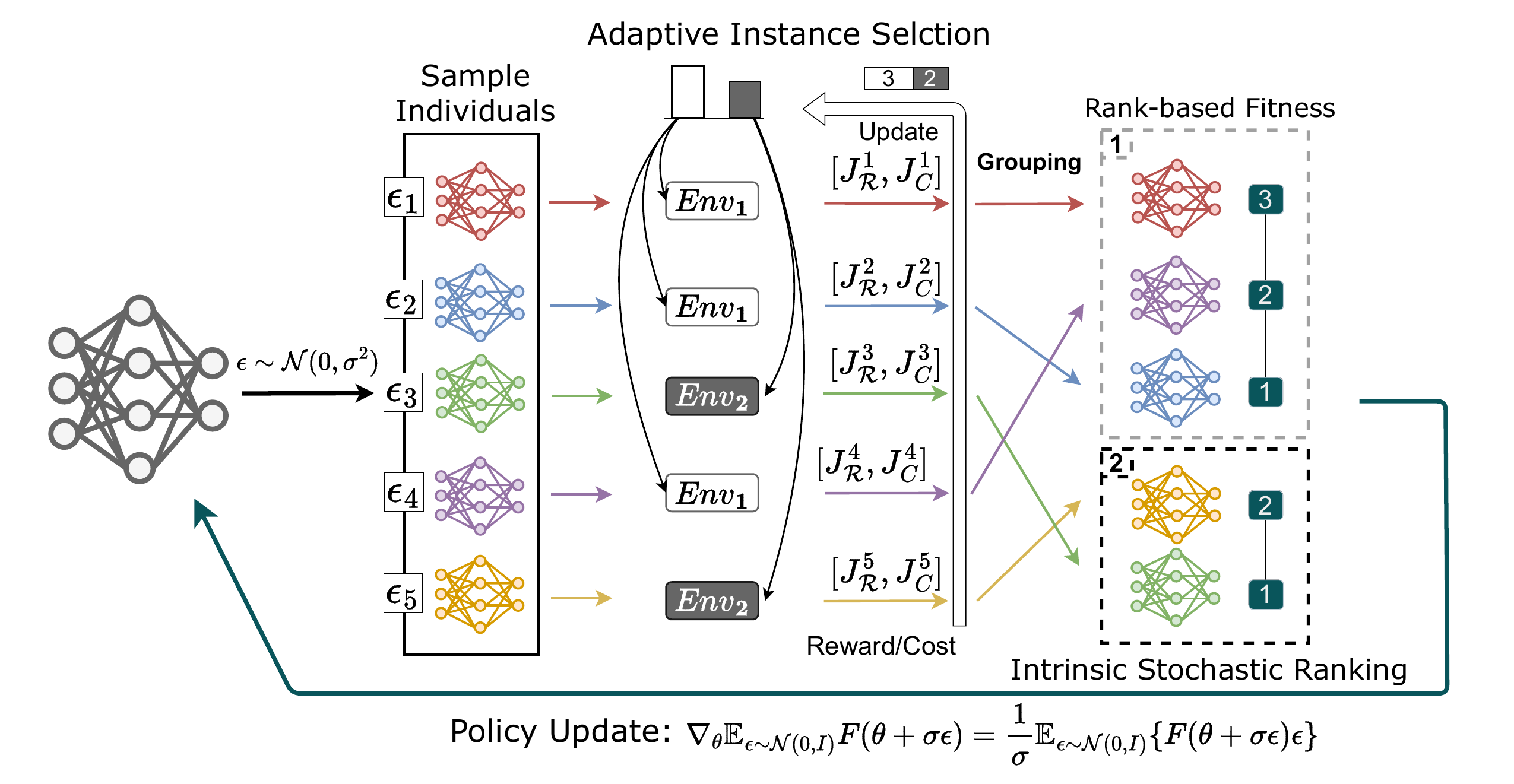}
  \caption{Illustration of ACERL framework. A population of actors is sampled with Gaussian noise. All actors then interact with the specific subset of training instances determined by the adaptive instance sampler (cf. Section \ref{sec:ais}). The collected rewards and selection number of training instances are used to update the instance sampler. The fitness of each actor is assigned according to its inner rank with the intrinsic stochastic ranking (cf. Section \ref{sec:isr}). A natural gradient ascent is applied to update the policy accordingly (cf. Section \ref{sec:pi}).}
  \label{fig:diagram_ACERL}
\end{figure*}

We propose adaptive constrained evolutionary reinforcement learning algorithm (ACERL)~\footnote{\label{ft:code}Code: \url{https://github.com/HcPlu/ACERL}} to optimise the parameters $\theta$ of the policy $\pi_{\theta}$ for scheduling. 
The framework and pseudo-code are presented in Fig.~\ref{fig:diagram_ACERL} and Algorithm~\ref{alg:acel}, respectively.

\begin{algorithm}[htbp]
\caption{ACERL.}
\label{alg:acel}
\begin{algorithmic}[1] 
\Require Generation number $G$, population size $\lambda$, number of Instances $K$, learning rate $\alpha$, noise standard deviation $\sigma$
\Ensure $\pi_\theta$
\State Initialise policy $\pi_{\theta}$
\State Initialise reward buffers $\mathcal{B}_\sR=\langle\mathcal{B}_i\rangle$, $i =\{1\,\dots, K\}$
\State Initialise number of selections $N = \langle N_i\rangle$, $i =\{1\,\dots, K\}$
\For{$n=1$ to $G$}
\State Initialise instance buffers $\mathcal{I}=\langle\mathcal{I}_i\rangle$, $i =\{1\,\dots, K\}$
    \State Sample noise $\epsilon_1 \cdots \epsilon_\lambda \in \mathcal{N}^{|\theta|} (0,I)$
    \For{$i=1$ to $\lambda$ }

        \State $\pi_{\theta_i} \leftarrow \pi_{\theta+\sigma\epsilon_i}$
        \State Sample instance $\eta \leftarrow \text{AIS}(\mathcal{B}_R,N)$  \Comment{\emph{Algorithm \ref{alg:ais}}}
        \State $J^{\pi_{\theta_i}}_\sR,J^{\pi_{\theta_i}}_\sC= \text{Evaluate}(\pi_{\theta_i}, \eta)$ \Comment{\emph{Algorithm \ref{alg:eva}}}
        \State $N_{\eta}\leftarrow N_{\eta}+1$
        \State $\zeta \leftarrow i$
        \State Store $J^{\pi_{\theta_i}}_\sR$ in $\mathcal{B}_\sR^\eta$
        \State Store $J^{\pi_{\theta_i}}_\sR,J^{\pi_{\theta_i}}_\sC,\zeta$ in $\mathcal{I}_j$
    \EndFor

    \State $f_1,\dots,f_{\lambda} = \text{ISR}(\mathcal{I})$  \Comment{\emph{Algorithm \ref{alg:isr}}}
    
    \State $\theta \leftarrow \theta+\alpha\frac{1}{\lambda\sigma}\sum^\lambda_i f_i\epsilon_i$
\EndFor

\end{algorithmic}
\end{algorithm}

ACERL models an actor (a neural network in our case) as an individual and maintains a population of those independent actors. Instead of evolving the population with genetic operators, at each generation, ACERL samples a population from a distribution based on the policy $\pi_{\theta}$.
The individuals and their corresponding fitnesses are used to update the distribution and discarded instantly later.

Note that all individuals of the population interact with the environment formed with a specific training instance. The corresponding training instances are chosen by an \textit{adaptive instance sampler} (Algorithm \ref{alg:ais}), resulted in an adaptive training process. It estimates the advantage of each \textit{candidate instance} according to its historical rewards based on the evaluation of the policy (Algorithm \ref{alg:eva}), and the number of selections. The advantage of each instance describes how good the instance is for policy improvement. The most beneficial instance for training the policy is selected according to the estimated soft probability formed with the advantages. If the policy shows inferior performance on certain instances, the likelihood of choosing those instances increases accordingly.

After obtaining the episodic rewards and penalties, we design an \textit{intrinsic stochastic ranking} method (Algorithm~\ref{alg:isr}) to group the sampled individuals that interact with the same training instance into the same buffer.
The fitness of each individual is assigned with its own rank index in the descending intrinsic ranked buffer. The ranking method balances the rewards and penalties, seeking to maximise the long-term reward with constraint satisfaction. The impact of fitness weight is further involved in breaking trade-off among multiple training instances. Finally, ACERL updates the policy with \textit{natural evolution strategies} according to sampled noises and the corresponding fitness values. 

The following subsections detail the core
ingredients of ACERL, including adaptive training, intrinsic stochastic ranking with rank-based fitness and natural evolution strategies.
.

\subsection{Efficient instance selection via adaptive training}\label{sec:ais}

To achieve a better computational resource allocation, we design an adaptive instance sampler (AIS) to train ACERL by selecting suitable training instances adaptively. The pseudo-code of AIS is shown in Algorithm~\ref{alg:ais}.
The idea behind AIS is that if the policies have performed well on a specific instance, then the probability of selecting this instance should be reduced and the likelihood of selecting the instances that were evaluated fewer times should increase.

Specifically, an inverted distance metric for evaluating the advantage of training instances is proposed:
\begin{equation}\label{eq:metric}
  u_\eta = \frac{1}{|\mathcal{B}_\sR^\eta|}\sum_{i=1}^{|\mathcal{B}_\sR^\eta|}
  \frac{\max \limits_{1\leq k \leq |\mathcal{B}_\sR^\eta|}J_{\sR}^k-J_{\sR}^i}{\max\limits_{1\leq k \leq |\mathcal{B}_\sR^\eta|}{J_{\sR}^k}-\min\limits_{1\leq k \leq |\mathcal{B}_\sR^\eta|}{J_{\sR}^k}}.
\end{equation}

The distance metric measures the performance of the policy on the training instance $\eta$ at the time horizon. Specifically, it calculates how much each episode’s reward deviates from the maximum reward, scaled by the batch’s reward range. A larger $u_\eta$ signifies poorer policy performance on $\eta$, which provides an insight that the instance should be chosen more.

The adaptive training takes the metric in Eq. \eqref{eq:metric} and the number of times of selecting each training instance into account by the widely used upper confidential bound (UCB)~\cite{agrawal1995sample,Hao2019Upper}:
\begin{equation}
UCB = u_i+\alpha_u\sqrt{\frac{\log(\sum_j^K N_j)}{N_i}},
\end{equation}
where $\alpha_u$ is the exploration factor, $K$ is the total number of training instances and $N_i$ is the number of instance $i$ being selected as the training candidate. 
With the UCB-based adaptive instance sampler, instances are chosen to train the policy more efficiently.
Thus the computational resource can be dynamically allocated in a proper way.

\begin{algorithm}[b]
\caption{Adaptive instance sampler, $\text{AIS}(\mathcal{B}_R,N)$.
}
\label{alg:ais}
\begin{algorithmic}[1] 
\Require Reward buffers $\mathcal{B}_\sR$, counts $ N$, exploration factor $\alpha_u$  
\Ensure $\eta$
\For{$\eta=1$ to $K$}
    \State $u_\eta =  \frac{1}{|\mathcal{B}_\sR^\eta|}\sum\limits_i^{|\mathcal{B}_\sR^\eta|}
    \frac{\max \limits_{1\leq k \leq |\mathcal{B}_\sR^\eta|}(J_{\sR}^k)-J_{\sR}^i}{\max\limits_{1\leq k \leq |\mathcal{B}_\sR^\eta|}{(J_{\sR}^k)}-\min\limits_{1\leq k \leq |\mathcal{B}_\sR^\eta|}{(J_{\sR}^k)}}.$

\EndFor
\State Sample $\eta = \softmax\limits_\eta(u_\eta+\alpha_u\sqrt{\frac{\log(\sum_\eta^M N_\eta)}{N_\eta}})$

\end{algorithmic}
\end{algorithm}

\subsection{Intrinsic stochastic ranking with rank-based fitness}\label{sec:isr}

Assessing a policy for DMH that involves multiple instances and constraints can be challenging beyond the unconstrained optimisation problems~\cite{salimans2017evolution,conti2018improving}. Regular methods use a weighted sum of the objective value and penalties for constraint violation as the reshaped reward function, which, however, introduces the challenge of adjusting the weights~\cite{watanabe2004evolutionary}.

Stochastic ranking (SR)~\cite{runarsson2000stochastic} is an effective constraint handling technique, which has been successfully applied to constrained optimisation~\cite{runarsson2003constrained} and combinatorial optimisation~\cite{tang2009memetic}. SR makes no assumption on the problem, while only one parameter is introduced and is easy to tune~\cite{runarsson2000stochastic}.
Inspired by the work of~\cite{runarsson2000stochastic}, we propose intrinsic stochastic ranking (ISR) with rank-based fitness, shown in Algorithm \ref{alg:isr}, to handle the constraints in the case of sparse feedback.
During the evolution process, the individuals that interact with the same training instance are collected and stored in the same buffer.
Then, all the individuals in the same buffer are ranked according to their rewards and penalties. 
Notably, if solutions provided by two individuals are either feasible or meet the probability threshold $p_f$, the solution with the higher reward is assigned a better rank, otherwise, the one with fewer constraint violations is given the higher rank.

\begin{algorithm}[!t]
\caption{$\text{Evaluate}(\pi)$ evaluates a policy $\pi_{\theta}$ by interacting with a given environment.}
\label{alg:eva}
\begin{algorithmic}[1] 
\Require Policy $\pi_{\theta}$, Instance $\eta$
\Ensure $J^{\pi_{\theta}}_\sR,J^{\pi_{\theta}}_\sC$
\State $J^{\pi_{\theta}}_\sR \leftarrow 0$
\State $J^{\pi_{\theta}}_\sC \leftarrow 0$
\State Initialise environment $env$ with the instance $\eta$

\For{$t=0,1,2, \dots, T-1$}
    \State Sample action $a_t \in \pi_{\theta}(s_t)$
    \State Obtain $r_t,c_t,s_{t+1}$ from $env$ by acting $a_t$
    \State $J^{\pi_{\theta}}_\sR \leftarrow J^{\pi_{\theta}}_\sR  +r_t$
    \State $J^{\pi_{\theta}}_\sC \leftarrow J^{\pi_{\theta}}_\sC + c_t$

\EndFor
\end{algorithmic}
\end{algorithm}

An individual's fitness value is assigned based on the ranking in its own buffer by ISR.
Following the setting of~\cite{runarsson2000stochastic}, the penalty function related to the constraint violation is:
\begin{equation}
\phi(\pi)=\left(\max\{0,J^{\pi}_\sC-\xi\}\right)^2.
  \label{eq:isr_penalty}
\end{equation}

ISR remains some infeasible solutions with high fitness values besides giving priority to feasible solutions.
Although infeasible solutions cannot be executed to solve the problem, they may help escaping from some infeasible areas. ISR not only balances the rewards and penalties, but also allows the dynamic selection of suitable training instances with preference, which makes efficient use of the computational resource. Additionally, ISR only requires the final scalar value instead of temporal information, which aligns well with the sparse case.

\begin{algorithm}[!t]
\caption{\label{alg:isr} Intrinsic stochastic ranking, $\text{ISR}(\mathcal{I})$. $p_f\in (0,1)$ is the tolerate probability. $\phi(\pi)$ denotes the penalty.}
\begin{algorithmic}[1] 
\Require Instance buffers $\mathcal{I}_1,\cdots,\mathcal{I}_K$
\Ensure $f_1,\cdots,f_{\mu}$ 

\For{$k=1$ to $K$ }
    \State $\mu'=|\mathcal{I}_k|$
    \For{$i=1$ to $\mu'$}
        \State $l_i = i$
    \EndFor
    \State Get $\langle J^{\pi_{\theta_j}}_\sR,J^{\pi_{\theta_j}}_\sC,\zeta_j^k\rangle$ from $\mathcal{I}_k, j=1,\dots, \mu'$
\For{$i=1$ to $\mu'$}
    \For{$j=1$ to $\mu'-1$}
        \State Sample $\delta$ uniformly at random in $(0,1)$ \label{line:random}
        \If{$(\phi({\pi_{\theta_j}})=\phi({\pi_{\theta_{j+1}}})=0)$ or $(\delta < P_f)$} 
           \If{$J_\sR^{\pi_{\theta_j}}<J^{\theta_{j+1}}_\sR$}
            \State swap $l_j$ and $l_{j+1}$
        \EndIf \label{line:endrandom}
        \Else
                \If{$\phi(\pi_{\theta_j})>\phi(\pi_{\theta_{j+1}})$}
            \State swap $l_j$ and $l_{j+1}$
                    \EndIf
        \EndIf
    \EndFor
\EndFor
\For{$i=1$ to $\mu'$ }
        \State $f_{\zeta^k_{l_i}}=\mu'-i+1$
    \EndFor
\EndFor
\end{algorithmic}
\end{algorithm}

\subsection{Policy improvement through searching gradients}\label{sec:pi}
ACERL applies evolution strategies to update the policy $\pi_{\theta}$, which is parameterised by a neural network $\theta$. A population is sampled from a distribution over $p_{\psi}(\theta)$ with the given policy $\pi_{\theta}$. The estimated gradient is given by Wierstra et al.~\cite{wierstra2014natural}:
\begin{equation}
  \nabla_{\psi} \mathbb{E}_{\theta\sim p_{\psi}}F(\theta) = \mathbb{E}_{\theta\sim p_{\psi}}F(\theta)\nabla_{\psi}\log(p_{\psi}(\theta)),
\end{equation}
where $F(\theta)$ is the fitness function related to the policy $\pi_{\theta}$. 
The minimal requirement of temporary information makes natural gradient intuitive to tackle sparse reward and long-term horizon, compared with value-based methods and policy gradient, without calculating the gradients for the backpropagation. The perturbations in the parameter space also facilitate the exploration for collecting more diverse experiences~\cite{salimans2017evolution}.

In the DMH, the fitness of each individual is given by the intrinsic stochastic ranking (Algorithm~\ref{alg:isr}).
More specifically, $p_{\psi}$ is a factored Gaussian distribution $\mathcal{N}(\psi,\sigma^2)$, in which $\psi$ is the mean value and $\sigma$ is the covariance. The expectation of fitness under the distribution considering $\theta$ as the mean parameter~\cite{salimans2017evolution} is written as Eq. \eqref{eq:acerl_r} :
\begin{equation}
  \mathbb{E}_{\theta\sim p_{\psi}}F(\theta) = \mathbb{E}_{\epsilon\sim \mathcal{N}(0, I)}F(\theta+\sigma\epsilon).\label{eq:acerl_r}
\end{equation}
Finally, the policy is optimised by the gradient ascent with a vanilla estimator:
\begin{equation}\nabla^{\epsilon}_{\theta}F(\theta+\sigma\epsilon) = \frac{1}{\sigma}\mathbb{E}_{\epsilon\sim \mathcal{N}(0, I)}\left[F(\theta+\sigma\epsilon)\epsilon\right],
\end{equation}
or an antithetic estimator:
\begin{equation}\nabla^{\epsilon}_{\theta}F(\theta+\sigma\epsilon) = \frac{1}{2\sigma}\mathbb{E}_{\epsilon\sim \mathcal{N}(0, I)}\left[(F(\theta+\sigma\epsilon)-F(\theta-\sigma\epsilon))\epsilon\right]. \label{eq:ane}
\end{equation}

All weights are perturbed in the case of factored Gaussian distribution, which is similar to the coupon collector problem in a continuous way.
Considering ACERL as a randomised finite difference method, the optimisation complexity scales linearly with the number of weights, i.e., $\mathcal{O}(|\theta|)$.
The expected covering time to ensure all weights dimensions are sufficiently perturbed is at least polynomial bound $\mathcal{O}(|\theta|\log{|\theta|})$.
Given the population size $\lambda$, the space complexity and the communication complexity are both $\mathcal{O}(\lambda|\theta|)$ for passing gradients since we only optimise the neural network. Notably, ACERL follows the optimisation scheme of classic evolution strategies. As such, its optimisation complexity, covering time, space and communication complexities align with existing theoretical bounds~\cite{wierstra2014natural,nesterov2017random}.

\section{Theoretical Analysis}

Although Choromansk et al.~\cite{choromanski2019complexity} and Liu et al.~\cite{liu2020self} show that the gradient estimator is close to the true gradient, they only derive the theoretical results in the unconstrained setting. We extend the analysis in \cite{choromanski2019complexity,liu2020self} and demonstrate the theoretical guarantee on the constrained optimisation with stochastic ranking.

First, we simplify the constrained problem formulation on the function landscape with one single instance for a better analysis as follows:
\begin{eqnarray}
    \max_{\theta}~F(\theta) \quad s.t.&& g(\theta) \leq \xi, 
\end{eqnarray}
where $F(\theta)$ is the objective function, $g(\theta)$ is the constraint function and $\xi$ is the constraint threshold.

Following the settings of Choromansk et al.~\cite{choromanski2019complexity}, the following assumptions of the regularities of $F(\theta)$ and $g(\theta)$ are made.

\begin{assumption}
\label{assum1}
$F$ and $g$ are $L$-Lipschitz, i.e., $\forall \theta, \theta'\in \mathbb{R}^d, |F(\theta)-F(\theta')| \leq L_f||\theta-\theta'||$, $\quad |g(\theta)-g(\theta')| \leq L_g||\theta-\theta'||$.
\end{assumption}
\begin{assumption}
\label{assum2}
    $F$ has a $\tau$-smooth third order derivative tensor with respect to $\sigma>0$, so that 
$F(\theta+\sigma \epsilon) = F(\theta)+\sigma\nabla F(\theta)^{\top}\epsilon+\frac{\sigma^2}{2}\epsilon^{\top}H(\theta)\epsilon+\frac{1}{6}\sigma^3f'''(\theta)\left[v,v,v\right]$
with $ v \in \left[0,\epsilon\right]$ satisfying $|F'''(v,v,v)|\leq\tau||v||^3\leq\tau||\epsilon||^3$, where $H(\theta)$ and $F'''(\theta)$ denote Hessian matrix and third derivative of $F$, respectively. Similarly, $g$ has a $\tau$-smooth third order derivative tensor with same regularities as $F$.
\end{assumption}
\begin{assumption}
\label{assum3}
$\sigma$ is small enough, i.e., $0<\sigma < \frac{1}{35}\sqrt{\frac{\mathcal{E}}{\tau d^3\max\{L_f,L_g,1\}}}$, where $\mathcal{E}>0$.
\end{assumption}

For a better analysis, we relax the penalty function $\phi(\theta)$ formulated in Eq.~\eqref{eq:isr_penalty} to a smooth approximation since $\phi(\theta)$ is not differentiable at $g(\theta)=\xi$:
\begin{equation}
\phi'(\theta)=\rho \ln{(1+e^{(g(\theta)-\xi)/\rho})},
\end{equation}
 where $\rho>0$. This relaxation nearly preserves the ordering of the inequality as $\rho \rightarrow 0$.
\begin{theorem}

There exists a sufficiently small $\rho>0$, such that 
\begin{eqnarray}
   \noindent &&\forall \theta, \theta'\in \mathbb{R}^d, \phi(\theta_i)\leq \phi(\theta_j) \iff \phi'(\theta_i)\leq \phi'(\theta_j),\nonumber \\
&&\quad~i.e.,~\rho\ln{(1+e^{(g(\theta_i)-\xi)/\rho})} \leq \rho \ln{(1+e^{(g(\theta_j)-\xi)/\rho})}. \nonumber
\end{eqnarray}
\end{theorem}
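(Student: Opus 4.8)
The plan is to observe that the claimed equivalence is really a statement about a monotone reparametrisation of the constraint slack $g(\theta)-\xi$. Define $h(x) = \rho\ln(1+e^{x/\rho})$ for $x\in\mathbb{R}$; then $\phi'(\theta) = h(g(\theta)-\xi)$, and the original penalty is $\phi(\theta) = (\max\{0,g(\theta)-\xi\})^2 = \bigl(\max\{0,x\}\bigr)^2$ evaluated at $x = g(\theta)-\xi$. So both $\phi$ and $\phi'$ are obtained by composing $g-\xi$ with a fixed scalar function, and the equivalence $\phi(\theta_i)\le\phi(\theta_j)\iff\phi'(\theta_i)\le\phi'(\theta_j)$ will follow if I can show that, on the relevant range of slack values, ordering by $(\max\{0,x\})^2$ agrees with ordering by $h(x)$. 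Since $x\mapsto(\max\{0,x\})^2$ is nondecreasing in $x$ and $h$ is \emph{strictly} increasing in $x$ (as $h'(x) = e^{x/\rho}/(1+e^{x/\rho})\in(0,1)$), the only place the two orderings can disagree is when two slack values $x_i\le x_j$ are both $\le 0$: there $\phi$ assigns the common value $0$ (a tie) while $h$ strictly separates them. The statement as written uses non-strict inequalities, so a tie in $\phi$ is consistent with either direction in $\phi'$; hence I would argue the equivalence holds verbatim, and additionally note that $h$ refines $\phi$ by breaking ties among feasible points, which is exactly the behaviour one wants.

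Concretely, the steps I would carry out are: (i) set $x_i = g(\theta_i)-\xi$, $x_j = g(\theta_j)-\xi$ and reduce the theorem to a one-variable claim about $x_i,x_j\in\mathbb{R}$; (ii) handle the case $x_i\le x_j$ with $x_j>0$, where $\phi(\theta_i) = (\max\{0,x_i\})^2 \le x_j^2 = \phi(\theta_j)$ and $h(x_i) < h(x_j)$ by strict monotonicity, so both sides of the $\iff$ hold; (iii) handle the case $x_i,x_j\le 0$, where $\phi(\theta_i)=\phi(\theta_j)=0$ so the left inequality holds in both directions, and $h$ being strictly increasing gives $h(x_i)\le h(x_j)\iff x_i\le x_j$, so whenever $x_i\le x_j$ the right inequality also holds — and by symmetry of the argument the reverse implication is fine too; (iv) note the remaining orderings ($x_i>x_j$) follow by swapping the roles of $i$ and $j$. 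Then I would address the role of $\rho$: invoke the limit $h(x)\to\max\{0,x\}$ as $\rho\to 0$ uniformly on compacta (which is where ``sufficiently small $\rho$'' enters — it guarantees $\phi'$ stays within any prescribed $\varepsilon$ of $\max\{0,x\}$, hence of a monotone surrogate of $\sqrt{\phi}$, on the bounded range of slacks encountered by the population), making precise the excerpt's remark that ``this relaxation nearly preserves the ordering.''

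The main obstacle is not any hard estimate but a matter of precisely what is being asserted: taken literally the equivalence is false if one reads it as ``$\phi$ and $\phi'$ induce the same total preorder'', because $\phi$ has ties (all feasible points collapse to $0$) that $\phi'$ breaks. The work is therefore in pinning down the correct reading — that every strict ordering forced by $\phi$ is preserved by $\phi'$, and conversely $\phi'$ never reverses a weak ordering of $\phi$ — and in showing the sign condition $h'>0$ together with the fact that the only disagreement occurs on the feasible set, where $\phi$ is constant. I would also be careful that the argument uses only monotonicity of the outer functions and makes no appeal to Assumptions~\ref{assum1}--\ref{assum3}; those are needed for the gradient-estimation results that follow, not for this ordering lemma, so I would flag that the lemma is self-contained.

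Finally, to make the $\rho$-dependence rigorous rather than asymptotic, I would fix the finite set of parameter vectors under comparison (the population is finite), let $X = \{g(\theta_k)-\xi\}_k$ be the corresponding finite set of slacks, and choose $\rho$ small enough that $h$ separates any two distinct elements of $X$ by at least, say, half their true gap and keeps $h(x)<\varepsilon$ for all $x<0$ with $|x|$ bounded away from $0$; this is possible since $h(x)=\rho\ln(1+e^{x/\rho})\le\rho e^{x/\rho}$ when $x<0$, which $\to 0$ with $\rho$. With such a $\rho$ the displayed inequality $\rho\ln(1+e^{(g(\theta_i)-\xi)/\rho})\le\rho\ln(1+e^{(g(\theta_j)-\xi)/\rho})$ is equivalent to $g(\theta_i)\le g(\theta_j)$, which in turn is equivalent to $\phi(\theta_i)\le\phi(\theta_j)$ on $X$ in the weak sense described, completing the proof.
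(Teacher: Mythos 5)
There is no paper proof to compare you against here: the paper states this theorem and immediately moves on, offering only the surrounding remark that the relaxation ``nearly preserves the ordering'' as $\rho\rightarrow 0$; no argument is given. So the only question is whether your argument establishes the claim as printed, and there it hits a real problem that you half-acknowledge but do not resolve. The literal statement --- for some small $\rho>0$, $\phi(\theta_i)\le\phi(\theta_j)\iff\phi'(\theta_i)\le\phi'(\theta_j)$ for all pairs --- is false for every $\rho>0$ as soon as $g$ takes two distinct values below $\xi$: if $g(\theta_j)<g(\theta_i)\le\xi$ then $\phi(\theta_i)=\phi(\theta_j)=0$, so the left inequality holds, while $\phi'(\theta_i)=\rho\ln(1+e^{(g(\theta_i)-\xi)/\rho})>\phi'(\theta_j)>0$, so the right one fails; shrinking $\rho$ shrinks the gap but never closes it. Your proposal is internally inconsistent on exactly this point: the first paragraph argues the equivalence ``holds verbatim'' because the inequalities are non-strict (it does not --- a tie on the left must still force the stated inequality on the right, and it doesn't), while your third paragraph correctly calls the literal reading false. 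A proof cannot do both, and as written step (iii) of your plan is where it breaks: in the case $x_i,x_j\le 0$ with $x_i>x_j$, the left side of the iff holds and the right side does not; the ``by symmetry'' remark glosses over precisely this configuration.

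What is salvageable --- and is presumably what the theorem is meant to assert --- you do identify correctly: since $h(x)=\rho\ln(1+e^{x/\rho})$ is strictly increasing, ordering by $\phi'$ coincides with ordering by $g$, hence $\phi'(\theta_i)\le\phi'(\theta_j)\Rightarrow\phi(\theta_i)\le\phi(\theta_j)$, and every strict ordering $\phi(\theta_i)<\phi(\theta_j)$ is preserved by $\phi'$. Both facts hold for every $\rho>0$, so ``sufficiently small $\rho$'' plays no role in the ordering claim at all; it only controls how closely $\phi'$ approximates $\max\{0,g-\xi\}$ (i.e.\ $\sqrt{\phi}$ rather than $\phi$ --- another point you correctly notice), which matters for the Lipschitz and smoothness properties used in the later bias bound, not for this lemma. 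Your closing sentence overreaches in the same way the theorem does (``$g(\theta_i)\le g(\theta_j)$ \dots\ is equivalent to $\phi(\theta_i)\le\phi(\theta_j)$'' fails on the feasible set), and the finite-population separation construction is unnecessary machinery for a statement quantified over all of $\mathbb{R}^d$. Your observation that Assumptions 1--3 are not needed for the ordering statement is right. The honest conclusion is that your monotonicity reduction proves the weakened, one-directional version plus preservation of strict orderings, and simultaneously shows the iff as printed cannot be proved; a clean write-up should state that, not hedge between the two readings.
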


Intuitively, the relaxed penalty function $\phi'(\theta)$, is also L-Lipschitz. Besides, Assumption~\ref{assum2} applies to $\phi'(\theta)$.

Recalling stochastic ranking, it assigns priority to individuals based on the objective and penalty function with probability $p_f$.
Thus, we consider the objective function relaxed by stochastic ranking as follows.
\begin{eqnarray}
    f_{SR}(\theta) = p_f F(\theta)- (1-p_f)\phi'(\theta), 
\end{eqnarray}
where $p_f$ is the probability with values in $[0,1]$. 
It is easy to see that Assumptions 1 and 2 also hold for $f_{SR}$.

Assuming two candidates $\pi_{\theta_i}$ and $\pi_{\theta_j}$ with $F(\theta_i)\leq F(\theta_j)$, all three possible scenarios are described as follows:
\begin{case} If $\phi'(\theta_i)=\phi'(\theta_j)=0$,
then $f_{SR}(\theta_i)\leq f_{SR}(\theta_j)$.
\end{case}

\begin{case} If $0\leq\phi'(\theta_j)\leq\phi'(\theta_i)$ and $0<\phi'(\theta_i)$, then $f_{SR}(\theta_i)\leq f_{SR}(\theta_j)$.
\end{case}

\begin{case} If $0\leq\phi'(\theta_i)\leq\phi'(\theta_j)$ and $0<\phi'(\theta_j)$, then
the ranking $f_{SR}(\theta_i)\leq f_{SR}(\theta_j)$ holds under the following inequality:
\begin{equation}
    p_f(F(\theta_i)-F(\theta_j))\leq(1-p_f)(\phi'(\theta_i)-\phi'(\theta_j)),\nonumber
\end{equation}
which indicates that the final ranking relies on $p_f$, as well as both reward and penalty values. 
\end{case}

With the above discussions, we show that $f_{SR}$  captures the behaviours of stochastic ranking.

Under the assumptions, we can derive an antithetic form that
\begin{eqnarray}
    \frac{f_{SR}(\theta+\sigma\epsilon)-f_{SR}(\theta-\sigma\epsilon)}{2\sigma} = \epsilon^{\top}\nabla f_{SR}(\theta)+\zeta(\theta), \nonumber
\end{eqnarray}
where $\zeta(\theta)\leq\frac{\tau}{6}\sigma^2||\epsilon||^3$. Given $f_{SR}$ is smooth with constant $\tau$ on the third order derivative tensor, we have
\begin{eqnarray}
    \left|\frac{f_{SR}(\theta+\sigma\epsilon)-f_{SR}(\theta-\sigma\epsilon)}{2\sigma}-\epsilon^{\top}\nabla f_{SR}(\theta)\right| \leq\tau \sigma^2||\epsilon||^3. \nonumber
\end{eqnarray}
Recall the gradient in the antithetic case (Eq.~\eqref{eq:ane}):
\begin{eqnarray}
    \nabla^{\epsilon}_{\theta}F(\theta+\sigma\epsilon) = \frac{1}{2\sigma}\mathbb{E}_{\epsilon\sim \mathcal{N}(0, I)}[(F(\theta+\sigma\epsilon)-F(\theta-\sigma\epsilon))\epsilon]. \nonumber
\end{eqnarray}
We have $\left|\frac{f_{SR}(\theta+\sigma\epsilon)-f_{SR}(\theta-\sigma\epsilon)}{2\sigma}-\epsilon^{\top}\nabla f_{SR}(\theta)\right| \leq\tau \sigma^2||\epsilon||^3$.
Then, we can derive the following inequality:
\begin{eqnarray}\label{eq:tausigma}
            || \nabla_{\theta}\mathbb{E}_{\epsilon\sim \mathcal{N}(0, I)}f_{SR}(\theta+\sigma\epsilon)-\nabla f_{SR}(\theta)|| \leq \mathbb{E}_{\epsilon}\tau \sigma^2||\epsilon||^4.
\end{eqnarray}

\begin{theorem}
    \label{theo3}
The bias of the gradient estimator under stochastic ranking is well bounded:
\begin{eqnarray}
        ||\nabla_{\theta}\mathbb{E}_{\epsilon\sim \mathcal{N}(0, I)}f_{SR}(\theta+\sigma\epsilon)-\nabla f_{SR}(\theta)|| \leq \mathcal{E}.
\end{eqnarray}
\end{theorem}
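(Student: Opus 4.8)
The plan is to derive Theorem~\ref{theo3} directly from inequality~\eqref{eq:tausigma}, which already reduces the bias of the antithetic estimator to the scalar quantity $\tau\sigma^2\,\mathbb{E}_{\epsilon}[\|\epsilon\|^4]$, by combining it with (i) an explicit bound on the fourth moment of the Euclidean norm of a standard Gaussian vector and (ii) the step-size restriction imposed in Assumption~\ref{assum3}. First I would compute the moment: since $\epsilon\sim\mathcal{N}(0,I_d)$, the variable $\|\epsilon\|^2=\sum_{i=1}^{d}\epsilon_i^2$ is $\chi^2_d$-distributed, so expanding the square and using $\mathbb{E}[\epsilon_i^4]=3$ and $\mathbb{E}[\epsilon_i^2]=1$ gives $\mathbb{E}_{\epsilon}[\|\epsilon\|^4]=d(d+2)\le 3d^2\le 3d^3$ for every $d\ge 1$. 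Substituting into~\eqref{eq:tausigma} then yields
\[
\|\nabla_{\theta}\mathbb{E}_{\epsilon\sim\mathcal{N}(0,I)}f_{SR}(\theta+\sigma\epsilon)-\nabla f_{SR}(\theta)\|\le 3\tau\sigma^2 d^3 .
\]

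Next I would plug in Assumption~\ref{assum3}. Squaring $\sigma<\tfrac{1}{35}\sqrt{\mathcal{E}/(\tau d^3\max\{L_f,L_g,1\})}$ gives $\sigma^2<\mathcal{E}/(1225\,\tau d^3 M)$ with $M:=\max\{L_f,L_g,1\}\ge 1$, hence $3\tau\sigma^2 d^3<3\mathcal{E}/(1225\,M)\le\mathcal{E}$ because $3/1225<1$ and $M\ge 1$, which is exactly the claimed bound. The sharper moment estimate $d(d+2)$ would go through identically with even more slack; this slack is what lets the somewhat generous constant $1/35$ and the $d^3$ factor in Assumption~\ref{assum3} simultaneously absorb this term and the Hessian/third-order remainder terms inherited from the analysis of~\cite{choromanski2019complexity,liu2020self}.

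The arithmetic above is routine, so the part I expect to require genuine care lies upstream of~\eqref{eq:tausigma}: one must verify that $f_{SR}(\theta)=p_fF(\theta)-(1-p_f)\phi'(\theta)$ really inherits the $L$-Lipschitz property of Assumption~\ref{assum1} and, more delicately, the $\tau$-smoothness of the third-order derivative tensor of Assumption~\ref{assum2} with the \emph{same} constant $\tau$ that appears in Assumption~\ref{assum3}. The softplus surrogate $\phi'(\theta)=\rho\ln\!\big(1+e^{(g(\theta)-\xi)/\rho}\big)$ has gradient bounded by $L_g$ uniformly in $\rho$, so the Lipschitz part is immediate, but its curvature scales like $1/\rho$; hence $\rho$ must be chosen small enough for the ordering-preservation established just above yet large enough that the induced third-order constant stays below the fixed $\tau$. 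I would make this choice of $\rho$ (a function of $L_g$ and the smoothness constant of $g$) explicit before invoking~\eqref{eq:tausigma}. Finally I would re-examine the passage from the pointwise estimate $\big|\tfrac{f_{SR}(\theta+\sigma\epsilon)-f_{SR}(\theta-\sigma\epsilon)}{2\sigma}-\epsilon^{\top}\nabla f_{SR}(\theta)\big|\le\tau\sigma^2\|\epsilon\|^3$ to~\eqref{eq:tausigma}: the antithetic estimator of~\eqref{eq:ane} weights this error by an additional factor $\epsilon$ before the expectation, so Jensen's inequality produces $\mathbb{E}_{\epsilon}[\tau\sigma^2\|\epsilon\|^3\cdot\|\epsilon\|]=\tau\sigma^2\,\mathbb{E}_{\epsilon}[\|\epsilon\|^4]$ — it is the fourth, not the third, moment that controls the bias, which is precisely why the $d^3$ (equivalently $d(d+2)$) dependence enters.
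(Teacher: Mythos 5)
Your proposal is correct and follows essentially the same route as the paper's proof: bound $\mathbb{E}_{\epsilon}[\|\epsilon\|^4]$ for a standard Gaussian (the paper uses $3d+d^2\le 3d^2$, you use the exact $d(d+2)$ and the looser $3d^3$, either of which suffices), substitute into the inequality preceding the theorem, and then absorb the resulting $3\tau\sigma^2 d^2$ (or $d^3$) term via the step-size restriction of Assumption~\ref{assum3}. Your additional remarks on the choice of $\rho$ and on why the fourth (not third) moment appears are careful elaborations of steps the paper treats as given, but they do not change the argument.
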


\begin{proof}
    Considering $\sigma \sim \mathcal{N}(0,I)$, we have $\mathbb{E}[\sigma(i)^4]=3$, $\mathbb{E}[\sigma(i)^2]=1$, $\forall i\in\{1,\dots,d\}$. The following inequality holds:
    \begin{eqnarray}
        \mathbb{E}_{\epsilon\sim \mathcal{N}(0, I)}[||\epsilon||^4]=&\mathbb{E}_{\epsilon}[\sum^d_{i=1}\epsilon(i)^4+\sum_{i\neq j}\epsilon(i)^2\epsilon(j)^2] \nonumber\\ 
        \leq& 3d+d^2 \leq 3d^2.\nonumber
    \end{eqnarray}
Thus, together with \eqref{eq:tausigma}
\begin{eqnarray}
            ||\nabla_{\theta}\mathbb{E}_{\epsilon\sim \mathcal{N}(0, I)}f_{SR}(\theta+\sigma\epsilon)-\nabla f_{SR}(\theta)|| &\leq& \mathbb{E}_{\epsilon}\tau \sigma^2||\epsilon||^4 \nonumber\\ 
            &\leq&3\tau\sigma^2d^2. \nonumber
\end{eqnarray}
Recall Assumption~\ref{assum3} that $ 0<\sigma < \frac{1}{35}\sqrt{\frac{\mathcal{E}}{\tau d^3\max\{L_f,L_g,1\}}}$.
Finally, we derive the result:
\begin{eqnarray}
    ||\nabla_{\theta}\mathbb{E}_{\epsilon\sim \mathcal{N}(0, I)}f_{SR}(\theta+\sigma\epsilon)-\nabla f_{SR}(\theta)|| &\leq \mathcal{E}. \nonumber
\end{eqnarray}

\end{proof}

\section{Experimental Results and Analysis}
\label{sec:exp}
We conduct several sets of experiments and an ablation study to comprehensively evaluate ACERL. The aims of experiments and compared methods are detailed as follows.
\begin{itemize}
 \item To validate the effectiveness of ACERL, we compare it on eight training instances and eight unseen test instances with several state-of-the-art methods and baselines categorised into groups including (i) ``MAPPO"~\cite{li2025real}, ``RCPOM''\cite{hu2023dmh} and soft actor-critic (SAC)~\cite{haarnoja2018soft} with the fixed Lagrangian multiplier, ``LSAC''\cite{tamar2012policy}; (ii)  ``SAC'' and ``PPO''\cite{schulman2017proximal}; (iii) MAPPO, RCPOM, LSAC, SAC and PPO equipped with the adaptive instance sampler denoted as ``AMAPPO",``ARCPOM'', ``ALSAC'', ``ASAC'' and ``APPO'', respectively; (iv) classic dispatching rules including ``FCFS'', ``EDD'', ``NVF'' and ``STD''\cite{sabuncuoglu1998study}; and (v) two random policies that randomly choose rules or tasks, denoted as ``MIX'' and ``Random'' on both training instances and test instances, respectively.
  \item To present the robust performance of ACERL, it is tested on 40 instances with increasing extent of perturbations which simulate different degrees of dynamic events.
  \item To further evaluate the performance and limitations of ACERL in out-of-distribution cases, a leave-one-out cross-validation is conducted by dividing the training dataset into subsidiary training datasets and test datasets.
  \item To examine each ingredient of ACERL including intrinsic stochastic ranking, rank-based fitness and adaptive training, an ablation study is performed to present their unique contributions.
\end{itemize}

\subsection{Experiment setting}
Experiment settings including hyperparameters, problem instances and metrics for performance assessment are described as follows.

\subsubsection{Implementation details}
The implementations of RL and constrained RL (CRL) policies, including SAC~\cite{haarnoja2018soft} and PPO~\cite{schulman2017proximal},
MAPPO~\cite{li2025real},
RCPOM~\cite{hu2023dmh} and LSAC~\cite{tamar2012policy} are adapted based on Tianshou framework\footnote{\label{foot:tianshou}https://github.com/thu-ml/tianshou}~\cite{weng2021tianshou}. Source codes of RCPOM are provided by \cite{hu2023dmh}.
 The network structure is formed by two hidden fully connected layers $128 \times 128$.
The discounted factor $\gamma$ is 0.97. 
The initial multiplier $\lambda$ and the learning rate of RCPOM are set as 0.001 and 0.0001, respectively.
Population size $\lambda$ is 256. The number of generations $G$ is 128.
The constraint threshold $\xi$ is set as 50. 
Other common hyperparameters are set following the default setting of Tianshou\footref{foot:tianshou}. All learning policies are trained for 1e6 steps with five different seeds and each is tested 30 times independently.

\subsubsection{Problem simulator and instances}
The experiments are conducted on publicly available DMH instances and simulator, DMH-GYM\footnote{\url{https://github.com/HcPlu/DMH-GYM}}, provided in~\cite{hu2023dmh}. 
The training instances (DMH-01 to DMH-08) are drawn from different distributions using a searching-based method, while test instances (DMH-09 to DMH-16) are generated by noising the training instances~\cite{hu2023dmh}.

\subsubsection{Evaluation metrics}
Three metrics are used to evaluate the policies, including the average normalised score of makespan $M$, the average
normalised score of tardiness $C$, and the average constraint satisfaction percentage $P$, formulated as:
\begin{equation}
M = \frac{1}{K}\sum_{j=1}^K \frac{F_m^{max}-F_m^{j}}{F_m^{max}-F_m^{min}},
  \label{eq:eq_m}
\end{equation}
\begin{equation}
C = \frac{1}{K}\sum_{j=1}^K \frac{F_t^{max}-F_t^{j}}{F_t^{max}-F_t^{min}},
  \label{eq:eq_C}
\end{equation}

\begin{equation}
  P = \frac{1}{K}\sum_{j=1}^K \mathds{1}_{F_t^{j}<\xi},
  \label{eq:eq_P}
\end{equation}
where $K$ is the number of instances and $\xi$ is the constraint threshold. 
$F_m^{max}$ and $F_m^{min}$ represent the maximal and minimal makespan values $F_m$ (Eq. \eqref{eq:makespan}) among all the policies. Similarly, $F_t^{max}$ and $F_t^{min}$ represent the maximal and minimal tardiness values $F_t$ (Eq. \eqref{eq:tardiness}) among all policies.
All metrics follow the principle that a larger value indicates better performance of the policy.

\subsection{Comparison with state-of-the-arts and baselines}
\label{sec:com}
Tables \ref{tab:comparision} and \ref{tab:comparision_unseen} present the experiment results of ACERL on training instances (DMH-01 to DMH-08) and test instances (DMH-09 to DMH-16), respectively, compared with advanced RL methods, CRL methods and classic dispatching rules. It is obvious that our proposed method, ACERL, statistically outperforms other algorithms on all the training and test instances except on DMH-07, DMH-13 and DMH-15. Compared to ACERL, the dispatching rule FCFS gets competitive makespan but statistically worse tardiness on DMH-07, while other compared algorithms perform statistically worse than ACERL. 
Similar observations are also found on DMH-13 and DMH-15.
Overall, ACERL has the best values in terms of normalised makespan $M$, normalised tardiness $C$ and constraint satisfaction percentage $P$, and achieves the overall best performance on all instances.

Besides, ACERL achieves 100\% and 97\% constraint satisfaction on training and test instances, respectively, which is much better than other algorithms. For example, RCPOM~\cite{hu2023dmh}, a state-of-the-art CRL method, only gets 60\% and 63\% constraint satisfaction on training and test instances, respectively. 

As an example, the training curves of DMH-01 are shown in Fig. \ref{fig:tc_0}. ACERL shows the best performance with fully satisfying the constraint in DMH-01. The training time of ACERL is about 30 minutes, while other RL and CRL-based methods take more than 1 hour for training 1e6 steps. Moreover, the computation time of ACERL for one single decision is about 2 ms, which meets the real-time requirement~\cite{hu2020deep}.
More training curves on various instances are provided in Section A of \emph{Supplementary Material}.

\begin{figure}[!h]
  \centering
  \subfigure{
		\centering
 		\includegraphics[width=0.9\columnwidth,trim=0 27mm 0 0, clip]{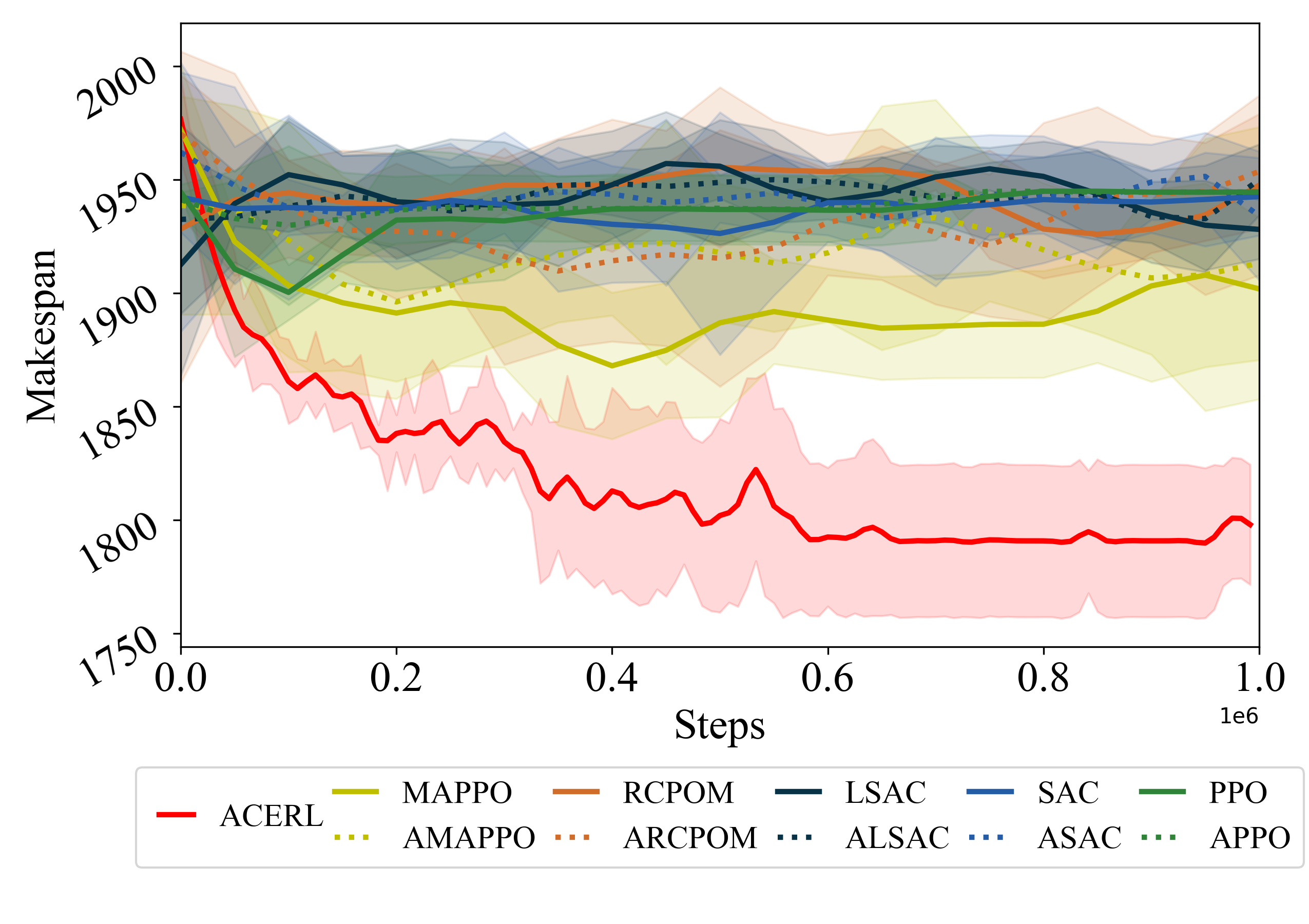}
 }	
 
   \subfigure{
		\centering
		\includegraphics[width=0.9\columnwidth]{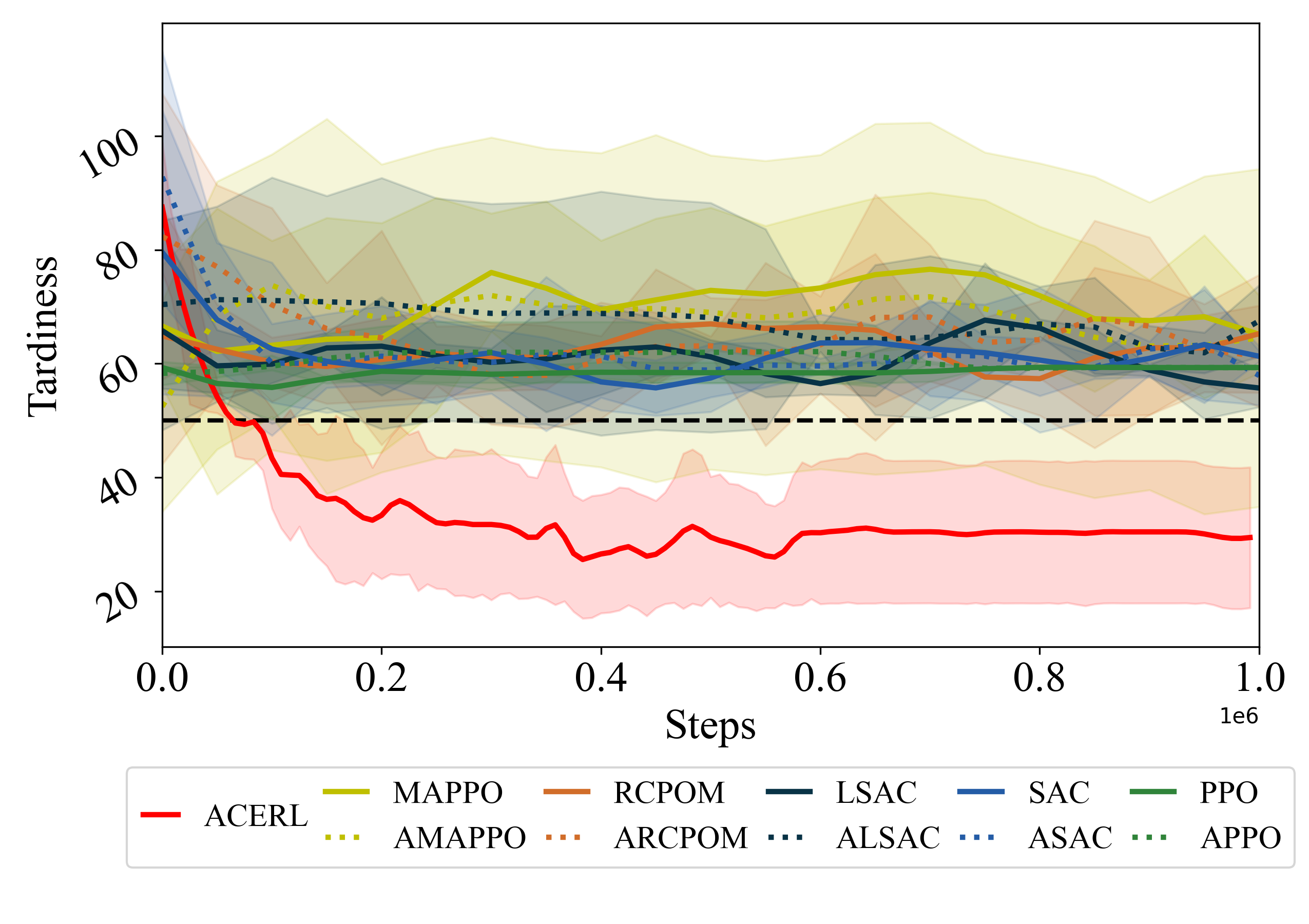}
	}

	\caption{\label{fig:tc_0}Training curves on DMH-01. ACERL (red curve) obtains the best makespan while fully satisfying the tardiness constraint.}
\end{figure}

\begin{table*}[t]
    \centering
        \caption{Average makespan and tardiness over 30 independent trials of five different seeds on training instances. Bold numbers indicate the best makespan and tardiness. ``+'',``$\approx$'' and ``-'' indicate the policy performs statistically better/similar/worse than ACERL policy. The number of policies that are ``better'', ``similar'' and ``worse'' than ACERL in terms of makespan and tardiness on each instance is summarised in the bottom row. The last column with header ``$M/C (P)$'' indicates the average normalised makespan, tardiness and percentage of constraint satisfaction. Horizontal rules in the table separate different groups of algorithms.}

    \resizebox{\textwidth}{!}{
      \setlength{\tabcolsep}{2pt}
      
    \begin{tabular}{c|c|c|c|c|c|c|c|c|c}
\toprule
\multirow{2}{*}{Algorithm}&DMH-01 & DMH-02 & DMH-03 & DMH-04 & DMH-05 & DMH-06 & DMH-07 & DMH-08 & \multirow{2}{*}{$M/C (P)$} \\
&$F_m$/$F_t$ & $F_m$/$F_t$& $F_m$/$F_t$& $F_m$/$F_t$& $F_m$/$F_t$& $F_m$/$F_t$& $F_m$/$F_t$& $F_m$/$F_t$&\\
\midrule
ACERL&\textbf{1797.8}/\textbf{29.5} & \textbf{1859.2}/30.7 & \textbf{1840.0}/28.6 & \textbf{1896.6}/35.7 & \textbf{1856.4}/\textbf{29.0} & \textbf{1864.4}/\textbf{35.5} & 1929.6/\textbf{9.5} & \textbf{1856.8}/\textbf{25.4}&\textbf{0.90}/\textbf{0.90} (\textbf{100\%})\\
\midrule
MAPPO&1869.2-/68.3- & 1926.6-/42.3- & 1925.6-/53.0- & 1908.2-/\textbf{32.2}+ & 1928.4-/42.1- & 1955.8-/79.4- & 1945.0$\approx$/20.4- & 1875.4-/30.2-&0.72/0.72 (62\%)\\
RCPOM&1875.5-/62.8- & 1932.6-/49.1- & 1912.1-/51.8- & 1947.7-/37.4$\approx$ & 1937.6-/44.4- & 1946.9-/58.5- & 1937.9-/12.0- & 1883.6-/27.9$\approx$&0.71/0.75 (60\%)\\
LSAC&1922.2-/65.7- & 1968.1-/53.4- & 1935.9-/49.2- & 2012.0-/54.6- & 1960.6-/48.1- & 1997.7-/73.1- & 1971.4-/15.6- & 1934.3-/38.1-&0.55/0.68 (56\%)\\
AMAPPO&1890.8-/73.2- & 1936.4-/43.3- & 1931.2-/42.9- & 1928.6-/42.4- & 1925.4-/47.1- & 1952.0-/90.5- & \textbf{1927.0}+/14.3- & 1897.6-/39.1-&0.70/0.69 (57\%)\\
ARCPOM&1885.8-/76.0- & 1948.3-/54.6- & 1922.1-/56.3- & 1979.2-/46.0- & 1905.0-/37.6- & 1948.5-/87.6- & 1979.0-/18.0- & 1928.1-/43.1-&0.63/0.65 (48\%)\\
ALSAC&1903.6-/61.7- & 1964.9-/51.6- & 1899.1-/45.2- & 1986.6-/47.8- & 1967.0-/54.3- & 2009.3-/73.7- & 1966.5-/16.8- & 1917.5-/38.3-&0.60/0.69 (53\%)\\
\midrule
SAC&1899.4-/59.6- & 1973.1-/52.5- & 1925.6-/43.9- & 1991.6-/51.3- & 1990.4-/52.9- & 2013.4-/80.2- & 1952.0-/14.7- & 1933.3-/37.2-&0.57/0.69 (57\%)\\
PPO&1877.4-/57.4- & 1954.1-/50.6- & 1920.2-/45.8- & 1979.5-/51.8- & 1955.4-/47.0- & 1978.9-/68.9- & 1973.3-/16.5- & 1920.3-/34.1-&0.61/0.71 (58\%)\\
ASAC&1913.2-/70.5- & 1985.9-/55.3- & 1944.8-/47.6- & 1985.5-/49.1- & 1987.7-/53.5- & 2027.5-/92.3- & 1950.2-/14.9- & 1934.1-/39.8-&0.55/0.65 (52\%)\\
APPO&1899.6-/72.9- & 1951.4-/51.3- & 1922.5-/54.3- & 1972.5-/42.0- & 1949.9-/54.3- & 1992.8-/97.7- & 1967.9-/19.3- & 1905.6-/34.8-&0.62/0.65 (50\%)\\
\midrule
MIX&1939.9-/60.3- & 2000.9-/60.4- & 1932.2-/45.0- & 2006.1-/52.8- & 2028.2-/64.2- & 2027.1-/69.5- & 1969.2-/16.1- & 1971.0-/46.9-&0.49/0.65 (54\%)\\
FCFS&2081.1-/90.2- & 2084.5-/82.3- & 2014.7-/64.2- & 2136.7-/105.0- & 2194.6-/122.5- & 2123.5-/85.2- & 1927.9$\approx$/11.2- & 1933.6-/27.4$\approx$&0.25/0.45 (37\%)\\
EDD&1903.2-/33.9$\approx$ & 1968.6-/\textbf{29.6}+ & 1977.8-/\textbf{26.5}$\approx$ & 1988.1-/32.8+ & 1950.7-/33.7$\approx$ & 2016.8-/46.8$\approx$ & 1940.5-/12.1- & 2020.8-/45.3-&0.52/0.85 (86\%)\\
NVF&1876.5-/69.6- & 1958.4-/56.4- & 1946.7-/49.6- & 2040.6-/66.7- & 1933.9-/56.3- & 1953.4-/78.5- & 1996.5-/21.8- & 1944.6-/35.5-&0.56/0.63 (51\%)\\
STD&1868.8-/66.6- & 1961.7-/49.2- & 1921.3-/51.7- & 1970.7-/35.6$\approx$ & 1917.1-/41.4- & 1955.4-/62.7- & 1983.7-/30.4- & 1883.5-/35.1-&0.65/0.70 (53\%)\\
Random&2098.7-/124.5- & 2113.8-/103.1- & 2091.2-/143.7- & 2135.1-/123.1- & 2149.3-/119.0- & 2159.1-/129.3- & 2083.0-/70.2- & 2067.8-/89.5-&0.02/0.01 (12\%)\\
\midrule
& (0/0/16) / (0/1/15)& (0/0/16) / (1/0/15)& (0/0/16) / (0/1/15)& (0/0/16) / (2/2/12)& (0/0/16) / (0/1/15)& (0/0/16) / (0/1/15)& (1/2/13) / (0/0/16)& (0/0/16) / (0/2/14)\\

\bottomrule
    \end{tabular}
    }

    \label{tab:comparision}
\end{table*}

\begin{table*}[t]
    \centering
        \caption{Average makespan and tardiness over 30 independent trials of five different seeds on test instances. Bold numbers indicate the best makespan and tardiness. ``+'',``$\approx$'' and ``-'' indicate the policy performs statistically better/similar/worse than ACERL policy. The number of policies that are ``better'', ``similar'' and ``worse'' than ACERL in terms of makespan and tardiness on each instance is summarised in the bottom row. The last column with header ``$M/C (P)$'' indicates the average normalised makespan, tardiness and percentage of constraint satisfaction. Horizontal rules in the table separate different groups of algorithms.}
    \resizebox{\textwidth}{!}{
      \setlength{\tabcolsep}{2pt}
    \begin{tabular}{c|c|c|c|c|c|c|c|c|c}
\toprule
\multirow{2}{*}{Algorithm} &DMH-09 & DMH-10 & DMH-11 & DMH-12 & DMH-13 & DMH-14 & DMH-15 & DMH-16 & \multirow{2}{*}{$M/C (P)$} \\
 &$F_m$/$F_t$ & $F_m$/$F_t$& $F_m$/$F_t$& $F_m$/$F_t$& $F_m$/$F_t$& $F_m$/$F_t$& $F_m$/$F_t$& $F_m$/$F_t$&\\
\midrule
ACERL&\textbf{1801.6}/\textbf{29.9} & \textbf{1878.4}/\textbf{30.6} & \textbf{1892.7}/34.7 & \textbf{1896.8}/35.6 & \textbf{1894.9}/\textbf{24.2} & \textbf{1865.0}/\textbf{36.1} & \textbf{1932.8}/\textbf{9.6} & \textbf{1857.0}/\textbf{25.8}&\textbf{0.89}/\textbf{0.92} (\textbf{97\%})\\
\midrule
MAPPO&1880.8-/68.7- & 1928.4-/43.4- & 1931.8-/49.7- & 1908.6-/\textbf{32.5}+ & 1923.2-/34.9- & 2034.4-/92.2- & 1972.8-/28.8- & 1875.8-/30.9-&0.70/0.73 (65\%)\\
RCPOM&1877.2-/63.2- & 1935.0-/49.2- & 1908.0-/51.7- & 1946.8-/37.6$\approx$ & 1928.2-/35.4- & 1951.9-/59.9- & 1970.4-/23.5- & 1886.0-/30.4-&0.71/0.76 (63\%)\\
LSAC&1918.0-/66.2- & 1966.5-/53.4- & 1934.4-/49.5- & 2011.9-/54.6- & 1961.6-/41.4- & 1996.1-/74.1- & 1972.1-/15.5- & 1932.2-/39.0-&0.57/0.70 (56\%)\\
AMAPPO&1879.8-/73.6- & 1937.6-/43.3- & 1937.6-/43.2- & 1930.0-/41.2- & 1924.8-/33.6- & 2056.4-/105.8- & 1954.8$\approx$/24.4- & 1894.0-/40.4-&0.67/0.69 (57\%)\\
ARCPOM&1902.3-/78.3- & 1949.0-/54.4- & 1920.3-/55.4- & 1974.0-/44.1- & 1908.3-/28.0- & 1963.4-/92.9- & 1975.1-/16.2- & 1931.5-/43.9-&0.65/0.68 (52\%)\\
ALSAC&1893.7-/62.1- & 1955.6-/49.8- & 1929.9-/48.3- & 2001.2-/47.5- & 1955.9-/45.2- & 2005.5-/73.1- & 1974.5-/20.4- & 1913.3-/38.7-&0.60/0.71 (55\%)\\
\midrule
SAC&1906.3-/62.0- & 1972.8-/52.5- & 1931.8-/44.9- & 1993.4-/51.2- & 1975.8-/49.3- & 2012.3-/81.3- & 1952.2-/14.7- & 1924.9-/38.0-&0.59/0.70 (58\%)\\
PPO&1875.4-/58.5- & 1952.8-/50.4- & 1916.3-/46.7- & 1981.7-/52.0- & 1956.3-/44.2- & 1976.6-/70.6- & 1975.2-/16.8- & 1918.9-/34.9-&0.64/0.73 (59\%)\\
APPO&1896.3-/73.2- & 1950.9-/51.1- & 1915.4-/55.1- & 1975.2-/41.6- & 1934.5-/38.5- & 1996.4-/98.8- & 1972.6-/20.2- & 1890.4-/34.5-&0.65/0.68 (56\%)\\
ASAC&1920.4-/72.3- & 1987.4-/55.2- & 1944.6-/48.4- & 1988.1-/49.0- & 1977.7-/47.8- & 2031.9-/95.3- & 1950.6-/14.7- & 1934.6-/41.0-&0.57/0.66 (53\%)\\
\midrule
MIX&1941.7-/63.9- & 2004.3-/60.2- & 1932.3-/46.3- & 2008.3-/52.7- & 2030.6-/62.4- & 2032.3-/72.5- & 1971.5-/16.2- & 1976.9-/49.4-&0.49/0.66 (54\%)\\
FCFS&2089.3-/92.5- & 2045.4-/75.7- & 1996.9-/67.6- & 2107.1-/95.2- & 2191.9-/121.7- & 2130.1-/89.9- & 1934.1$\approx$/11.1- & 1946.8-/35.8-&0.27/0.45 (35\%)\\
EDD&1996.9-/107.7- & 1976.3-/33.4$\approx$ & 1978.7-/\textbf{22.0}+ & 1997.6-/36.9$\approx$ & 1962.1-/37.0- & 1993.5-/44.6- & 1934.8$\approx$/11.0- & 2052.3-/69.4-&0.48/0.74 (73\%)\\
NVF&1847.5-/57.2- & 1958.8-/54.3- & 1926.6-/51.8- & 2021.7-/65.9- & 1939.2-/41.2- & 1975.5-/89.8- & 2003.5-/19.3- & 1933.8-/40.9-&0.59/0.66 (58\%)\\
STD&1894.1-/72.5- & 1958.2-/49.7- & 1923.1-/52.4- & 1985.2-/38.1$\approx$ & 1905.1$\approx$/31.4- & 1974.3-/71.9- & 1990.3-/33.0- & 1885.2-/38.6-&0.65/0.71 (55\%)\\
Random&2132.7-/135.3- & 2100.9-/99.5- & 2076.2-/144.4- & 2097.5-/106.3- & 2144.1-/102.6- & 2142.9-/123.6- & 2101.3-/81.1- & 2055.1-/94.1-&0.02/0.03 (11\%)\\
\midrule
& (0/0/16) / (0/0/16)& (0/0/16) / (0/1/15)& (0/0/16) / (1/0/15)& (0/0/16) / (1/3/12)& (0/1/15) / (0/0/16)& (0/0/16) / (0/0/16)& (0/3/13) / (0/0/16)& (0/0/16) / (0/0/16)\\

\bottomrule
    \end{tabular}
    }

    \label{tab:comparision_unseen}
\end{table*}

\subsubsection{Tackling sparse feedback}

Tables \ref{tab:comparision} and \ref{tab:comparision_unseen} show that none of the RL or CRL methods including MAPPO, SAC, PPO, RCPOM and LSAC, performs better than ACERL on all the 16 training and test instances.
SAC and PPO are even worse than the dispatching rule STD, which achieves the best $M=0.65$ among all dispatching rules. We attribute the phenomenon to the lack of temporary information. 
Regular environments and benchmarks considered in RL studies usually are well defined with suitable reward functions~\cite{OpenAIGym}. It is hard to guarantee that the same conditions are provided in real-world scenarios.
Objectives including makespan and tardiness are only obtained after all tasks are completed in DMH. Given an RL agent, it only receives one meaningful reward at the end of an episode, while getting zero for other steps. This also applies to constraint satisfaction part, as the cost function related to tardiness is also sparse. 
MAPPO~\cite{li2025real} and RCPOM~\cite{hu2023dmh} achieve similar results. RCPOM applies invariant reward shaping to address sparse feedback but still fails to achieve superior results. On the other hand, MAPPO suffers from the high requirement of training policies with a globally centralized critic.

ACERL is validated to handle sparse feedback by the experiment results.
Instead of stochastic gradient descent with value function or policy gradient, ACERL applies the natural gradient ascent to update a policy. The usage of rank-based fitness makes it possible to get rid of the temporary information that regular RL methods require since only the last episodic value is needed.
Thus, ACERL can tackle sparse feedback without reward shaping relying on domain knowledge.

\begin{table*}[h]
    \centering
        \caption{Performance on noised instances. ``$M/C (P)$'' indicates the average normalised makespan, tardiness and percentage of constraint satisfaction over 30 independent trials of five different seeds. Bold number indicates the best value in the corresponding column. Horizontal rules in the table separate different groups of algorithms.}
      \setlength{\tabcolsep}{5pt}
    \begin{tabular}{c|c|c|c|c|c|c|c}
\toprule
\multicolumn{1}{c|}{\multirow{2}{*}{Algorithm} }&DMH$\pm 0$ & DMH$\pm 5$ & DMH$\pm 10$ & DMH$\pm 15$ & DMH$\pm 20$ & DMH$\pm 25$ & DMH$\pm 30$  \\
 &$M/C (P)$ & $M/C (P)$& $M/C (P)$& $M/C (P)$& $M/C (P)$& $M/C (P)$  & $M/C (P)$\\
\midrule
ACERL& \textbf{0.90}/\textbf{0.90} (\textbf{100\%})& \textbf{0.89}/\textbf{0.92} (\textbf{97\%})& \textbf{0.90}/\textbf{0.94} (\textbf{80\%})& \textbf{0.94}/\textbf{0.94} (\textbf{95\%})& \textbf{0.87}/\textbf{0.92} (75\%)& \textbf{0.89}/\textbf{0.96} (\textbf{84\%})& \textbf{0.93}/\textbf{0.96} (\textbf{93\%})\\
\midrule
MAPPO& 0.72/0.72 (62\%)& 0.70/0.73 (65\%)& 0.79/0.83 (65\%)& 0.87/0.86 (85\%)& 0.83/0.89 (72\%)& 0.70/0.77 (55\%)& 0.78/0.82 (70\%)\\
RCPOM& 0.71/0.75 (60\%)& 0.71/0.76 (63\%)& 0.85/0.88 (64\%)& 0.85/0.88 (82\%)& \textbf{0.87}/0.91 (66\%)& 0.81/0.84 (60\%)& 0.87/0.87 (70\%)\\
LSAC& 0.55/0.68 (56\%)& 0.57/0.70 (56\%)& 0.71/0.81 (57\%)& 0.67/0.74 (56\%)& 0.71/0.81 (56\%)& 0.70/0.81 (60\%)& 0.74/0.79 (60\%)\\
AMAPPO& 0.70/0.69 (57\%)& 0.67/0.69 (57\%)& 0.78/0.82 (67\%)& 0.79/0.79 (70\%)& 0.86/0.85 (70\%)& 0.77/0.77 (60\%)& 0.83/0.84 (70\%)\\
ARCPOM& 0.63/0.65 (48\%)& 0.65/0.68 (52\%)& 0.83/0.81 (57\%)& 0.76/0.76 (61\%)& 0.82/0.83 (57\%)& 0.85/0.84 (63\%)& 0.84/0.82 (69\%)\\
ALSAC& 0.60/0.69 (53\%)& 0.60/0.71 (55\%)& 0.77/0.84 (63\%)& 0.76/0.78 (66\%)& 0.78/0.85 (60\%)& 0.72/0.80 (60\%)& 0.79/0.79 (68\%)\\
\midrule
SAC& 0.57/0.69 (57\%)& 0.59/0.70 (58\%)& 0.73/0.82 (59\%)& 0.70/0.76 (66\%)& 0.72/0.81 (57\%)& 0.71/0.80 (57\%)& 0.75/0.80 (62\%)\\
PPO& 0.61/0.71 (58\%)& 0.64/0.73 (59\%)& 0.77/0.83 (60\%)& 0.76/0.79 (63\%)& 0.80/0.85 (60\%)& 0.81/0.84 (60\%)& 0.83/0.83 (66\%)\\
ASAC& 0.55/0.65 (52\%)& 0.65/0.68 (56\%)& 0.70/0.78 (55\%)& 0.68/0.74 (63\%)& 0.68/0.77 (53\%)& 0.67/0.76 (55\%)& 0.69/0.76 (60\%)\\
APPO& 0.62/0.65 (50\%)& 0.57/0.66 (53\%)& 0.82/0.79 (58\%)& 0.78/0.75 (63\%)& 0.83/0.83 (61\%)& 0.84/0.82 (61\%)& 0.83/0.80 (64\%)\\
\midrule
MIX& 0.49/0.65 (54\%)& 0.49/0.66 (54\%)& 0.60/0.76 (54\%)& 0.60/0.73 (59\%)& 0.60/0.75 (47\%)& 0.63/0.77 (57\%)& 0.62/0.75 (53\%)\\
FCFS& 0.25/0.45 (37\%)& 0.27/0.45 (35\%)& 0.24/0.54 (37\%)& 0.18/0.37 (35\%)& 0.26/0.42 (30\%)& 0.26/0.45 (34\%)& 0.12/0.39 (32\%)\\
EDD& 0.52/0.85 (86\%)& 0.48/0.74 (73\%)& 0.55/0.84 (65\%)& 0.52/0.87 (80\%)& 0.43/0.73 (57\%)& 0.46/0.75 (62\%)& 0.46/0.76 (53\%)\\
NVF& 0.56/0.63 (51\%)& 0.59/0.66 (58\%)& 0.77/0.82 (61\%)& 0.76/0.77 (65\%)& 0.86/0.89 (\textbf{77\%})& 0.84/0.86 (62\%)& 0.92/0.89 (81\%)\\
STD& 0.65/0.70 (53\%)& 0.65/0.71 (55\%)& 0.86/0.83 (62\%)& 0.82/0.82 (76\%)& 0.82/0.85 (59\%)& 0.72/0.80 (58\%)& 0.82/0.84 (72\%)\\
Random& 0.02/0.00 (12\%)& 0.02/0.03 (11\%)& 0.00/0.00 (10\%)& 0.04/0.04 (13\%)& 0.04/0.01 (9\%)& 0.03/0.02 (7\%)& 0.01/0.00 (9\%)\\
\bottomrule
    \end{tabular}
    \label{tab:noised_sum}
\end{table*}

\subsubsection{Promising constraint satisfaction}
 Delay of task finishing time, i.e., tardiness, should be restricted within the given threshold in DMH. It also needs to guarantee safety while assigning tasks since only AGV in the idle state are available to serve tasks. They are considered as cumulative and instantaneous constraints and are handled as long-term behaviours and short-term behaviours, respectively. However, it is an intractable problem for RL methods to handle constraints~\cite{tessler2018reward}. The complexity arises from the need of maximising reward, i.e., minimising makespan while ensuring strict adherence to constraints. The balance is crucial that agents achieving high reward can still be outperformed by seemingly ``poor" agents that prioritise constraint satisfaction. This phenomenon shows the critical importance of constraint management in real-world applications, where violating operational limits (such as excessive tardiness) can have severe consequences like human safety.
 RCPOM~\cite{hu2023dmh} combines RCPO~\cite{tessler2018reward} and invalid action masking to deal with the tardiness and available vehicle constraints, respectively. Although the masking technique guarantees the satisfaction of the available vehicle constraint, it actually does not perform well with an inferior constraint satisfaction percentage in terms of tardiness than EDD, a time-prefer dispatching rule.

 Inherent nature of DMH systems introduces an additional layer of complexity through sparse feedback mechanisms. In DMH, meaningful rewards or penalty signals are  only available once all tasks are served. This sparsity in the feedback loop exacerbates the difficulty of policy optimisation, as the RL agent must learn to make decisions with limited immediate guidance on the long-term consequence of actions.
Tables~\ref{tab:comparision} and \ref{tab:comparision_unseen} show that regular CRL methods like LSAC fail to achieve high constraint satisfaction.
CRL methods including RCPOM and LSAC require the temporary penalty values to restrict the behaviour of the policy, which, however, are usually missing in DMH. This is why, although RCPOM shows promising results compared to RL and CRL methods, it has a worse constraint satisfaction in terms of tardiness than EDD, even with a higher makespan.

ACERL benefits from the balance of maximising rewards and satisfying constraints by the intrinsic stochastic ranking with rank-based fitness. Individuals are grouped into distinct buffers based on the selected problem instances. This grouping allows for more meaningful comparisons between solutions within similar problem contexts as performances among different problems is incomparable. Instead of using the weighted sum of rewards and penalties, ACERL estimates the fitness of each individual based on its rank. The fitness of each individual is determined by its rank within its instance-specific group, serving as an unified metric, which enables fair comparisons across potentially diverse problem instances. The natural gradient induced by these fitness metrics efficiently guides the policy update process with the most promising direction for improving both reward and constraint satisfaction simultaneously. The population evolves towards feasible regions of the policy space that offer superior performance across multiple problem instances, promoting the development of policies that can generalise well to unseen DMH scenarios.

\subsubsection{Stable performance across multiple instances}
DMH considers training an agent on multiple instances and generalising to multiple unseen instances. Each instance has its own context, like different objective value ranges and representations. This means the performance of the same agent on different instances is incomparable. The challenge lies not only in achieving high performance on individual instances but in developing a robust policy that can adapt to the underlying patterns and principles of the dynamic system.

As shown in Tables \ref{tab:comparision} and \ref{tab:comparision_unseen}, ACERL shows the best performance on all instances. The diverse problem instances present a challenge in achieving an overall good performance for a single policy.
There is no single dispatching rule that performs the best on all instances. EDD, a time-prefer rule has the minimal tardiness $F_t$ in DMH-02, DMH-03 and DMH-04 with 29.6, 26.5 and 32.8, respectively. But it rarely gets good results on makespan. A similar case happens on NVF, a distance-prefer rule, which performs fairly on makespan $F_m=1876.5$ but violates the tardiness constraint on DMH-01 with $69.6<50$. 

ACERL tackles the trade-off of multiple training instances. Historical metrics such as obtained episodic reward and the number of selecting an instance are collected. An inverted distance metric is used
to estimate how good the individual is in the view of the time horizon.
 We apply an UCB-based sampler to implement the adaptive training with the inverted distance metric and the number of selections. If an instance has been optimised well, i.e., the distance metric is small enough, then the proportion of the training instance decreases accordingly.
The sampled population then interacts with environments instantiated by the adaptively selected training instances, for which the policy benefits the most.
By adaptive training, the corresponding computational resource is allocated suitably. 
ACERL can explore and exploit more instances on which it performs badly to achieve an overall good performance. 
Besides, bias estimated by the intrinsic stochastic ranking provokes some optimisation pressure which guides the gradient directions and helps with the next round's instance selection.

\begin{table*}[ht]
    \centering
            \caption{Cross validation using leave-one-out on DMH-01 (highlighted with grey blocks): average makespan and tardiness over 30 independent trials of five different seeds on each instance. Bold numbers indicate the best makespan and tardiness. ``+'',``$\approx$'' and ``-'' indicate the policy performs statistically better/similar/worse than ACERL policy. The number of policies that are ``better'', ``similar'' and ``worse'' than ACERL in terms of makespan and tardiness on each instance is summarised in the bottom row. The last column with header ``$M/C (P)$'' indicates the average normalised makespan, tardiness and percentage of constraint satisfaction. Horizontal rules in the table separate different groups of algorithms.}
    \resizebox{\textwidth}{!}{
      \setlength{\tabcolsep}{1pt}
    \begin{tabular}{c|c|c|c|c|c|c|c|c|c}
\toprule
\multirow{2}{*}{Algorithm} & \cellcolor[rgb]{.9,.9,.9}DMH-01 & DMH-02 & DMH-03 & DMH-04 & DMH-05 & DMH-06 & DMH-07 & DMH-08 & \multirow{2}{*}{$M/C (P)$} \\
 &\cellcolor[rgb]{.9,.9,.9} $F_m$/$F_t$ & $F_m$/$F_t$& $F_m$/$F_t$& $F_m$/$F_t$& $F_m$/$F_t$& $F_m$/$F_t$& $F_m$/$F_t$& $F_m$/$F_t$&\\
\midrule
ACERL&\cellcolor[rgb]{.9,.9,.9}1914.1/54.0 & \textbf{1861.8}/\textbf{28.6} & \textbf{1834.8}/31.0 & \textbf{1919.8}/37.8 & \textbf{1870.0}/35.3 & \textbf{1878.8}/\textbf{38.6} & \textbf{1927.0}/\textbf{9.1} & \textbf{1864.4}/34.1&0.97/0.94 (87\%)\\
\midrule
MAPPO&\cellcolor[rgb]{.9,.9,.9}\textbf{1867.4}+/62.9- & 1908.8-/36.7- & 1907.0-/47.0- & 1945.2-/41.1- & 1962.2-/55.4- & 2023.8-/102.3- & 1927.0$\approx$/10.8- & 1881.2-/39.2$\approx$&0.82/0.76 (55\%)\\
RCPOM&\cellcolor[rgb]{.9,.9,.9}1900.7+/72.5- & 1957.5-/54.1- & 1925.1-/57.5- & 1970.1-/46.0- & 1919.0-/39.5$\approx$ & 1976.1-/85.9- & 1985.0-/27.2- & 1929.3-/42.1-&0.71/0.71 (49\%)\\
LSAC&\cellcolor[rgb]{.9,.9,.9}1926.8$\approx$/69.7- & 1970.5-/57.4- & 1924.5-/49.9- & 1963.3-/44.4- & 1940.4-/42.6- & 1996.9-/78.2- & 1952.9-/15.6- & 1901.6-/30.9+&0.72/0.77 (52\%)\\
AMAPPO&\cellcolor[rgb]{.9,.9,.9}1904.2+/66.1$\approx$ & 1949.2-/46.3- & 1889.8-/42.6- & 1950.6-/39.9- & 1892.4-/\textbf{33.0}+ & 1989.8-/79.2- & 1932.8-/13.4- & 1889.0-/\textbf{23.4}+&0.81/0.83 (62\%)\\
ARCPOM&\cellcolor[rgb]{.9,.9,.9}1901.3$\approx$/59.1$\approx$ & 1988.2-/53.1- & 1927.1-/46.4- & 1986.8-/51.6- & 1996.5-/50.6- & 1991.6-/67.8- & 1955.7-/17.1- & 1932.7-/34.2-&0.67/0.78 (54\%)\\
ALSAC&\cellcolor[rgb]{.9,.9,.9}1901.3$\approx$/56.4- & 1968.8-/53.1- & 1936.9-/42.7- & 1985.3-/49.1- & 1968.0-/51.2- & 1990.3-/72.1- & 1959.6-/13.3- & 1933.5-/34.5$\approx$&0.68/0.79 (58\%)\\
\midrule
SAC&\cellcolor[rgb]{.9,.9,.9}1893.1+/64.5- & 1972.1-/58.3- & 1925.5-/43.6- & 1971.6-/44.3- & 1936.2-/41.8- & 1978.5-/72.2- & 1955.2-/16.4- & 1925.9-/33.1+&0.73/0.78 (53\%)\\
PPO&\cellcolor[rgb]{.9,.9,.9}1901.9+/60.2$\approx$ & 1994.2-/56.6- & 1931.4-/43.9- & 1989.7-/50.9- & 1979.5-/54.4- & 2008.2-/84.8- & 1957.8-/17.3- & 1922.9-/37.1$\approx$&0.67/0.74 (53\%)\\
ASAC&\cellcolor[rgb]{.9,.9,.9}1901.6$\approx$/55.4$\approx$ & 1985.6-/54.6- & 1924.2-/44.2- & 2010.8-/53.0- & 1974.1-/47.7- & 1997.7-/70.1- & 1964.6-/14.9- & 1925.1-/36.8$\approx$&0.66/0.78 (58\%)\\
APPO&\cellcolor[rgb]{.9,.9,.9}1921.4$\approx$/62.0$\approx$ & 2006.3-/59.7- & 1942.5-/45.7- & 2001.9-/53.4- & 2005.3-/57.2- & 2057.6-/95.2- & 1948.0-/15.7- & 1940.7-/37.4$\approx$&0.60/0.71 (52\%)\\
\midrule
MIX&\cellcolor[rgb]{.9,.9,.9}1939.9-/60.3$\approx$ & 2000.9-/60.4- & 1932.2-/45.0- & 2006.1-/52.8- & 2028.2-/64.2- & 2027.1-/69.5- & 1969.2-/16.1- & 1971.0-/46.9-&0.57/0.72 (54\%)\\
FCFS&\cellcolor[rgb]{.9,.9,.9}2081.1-/90.2- & 2084.5-/82.3- & 2014.7-/64.2- & 2136.7-/105.0- & 2194.6-/122.5- & 2123.5-/85.2- & 1927.9$\approx$/11.2- & 1933.6-/27.4+&0.28/0.49 (37\%)\\
EDD&\cellcolor[rgb]{.9,.9,.9}1903.2$\approx$/\textbf{33.9}+ & 1968.6-/29.6$\approx$ & 1977.8-/\textbf{26.5}$\approx$ & 1988.1-/\textbf{32.8}+ & 1950.7-/33.7$\approx$ & 2016.8-/46.8$\approx$ & 1940.5-/12.1- & 2020.8-/45.3$\approx$&0.62/0.94 (86\%)\\
NVF&\cellcolor[rgb]{.9,.9,.9}1876.5+/69.6- & 1958.4-/56.4- & 1946.7-/49.6- & 2040.6-/66.7- & 1933.9-/56.3- & 1953.4-/78.5- & 1996.5-/21.8- & 1944.6-/35.5$\approx$&0.66/0.70 (51\%)\\
STD&\cellcolor[rgb]{.9,.9,.9}1868.8+/66.6$\approx$ & 1961.7-/49.2- & 1921.3-/51.7- & 1970.7-/35.6+ & 1917.1-/41.4- & 1955.4-/62.7- & 1983.7-/30.4- & 1883.5-/35.1$\approx$&0.77/0.78 (53\%)\\
Random&\cellcolor[rgb]{.9,.9,.9}2098.7-/124.5- & 2113.8-/103.1- & 2091.2-/143.7- & 2135.1-/123.1- & 2149.3-/119.0- & 2159.1-/129.3- & 2083.0-/70.2- & 2067.8-/89.5-&0.02/0.00 (12\%)\\
\midrule
& \cellcolor[rgb]{.9,.9,.9}(7/6/3) / (1/7/8)& (0/0/16) / (0/1/15)& (0/0/16) / (0/1/15)& (0/0/16) / (2/0/14)& (0/0/16) / (1/2/13)& (0/0/16) / (0/1/15)& (0/2/14) / (0/0/16)& (0/0/16) / (4/8/4)\\

\bottomrule
    \end{tabular}
    }

    \label{tab:l1o1}
\end{table*}

\subsection{Robust performance on noised instances}
\label{sec:noise}

 To further validate the effectiveness of ACERL, we validate the algorithm on some noised datasets. We extend the problem dataset by introducing random perturbations to the original data. The arrival time of each task is perturbed by this random noise, which is defined using the noise operator denoted as $\pm$. The number following $\pm$ specifies the magnitude of the perturbations.
The initial training dataset and test dataset are labeled as DMH$\pm0$ and DMH$\pm5$, respectively. Subsequently, we created extended instance datasets with increasing levels of perturbation strength, labelled as DMH$\pm10$, DMH$\pm15$, DMH$\pm20$, DMH$\pm25$, and DMH$\pm30$.
Table \ref{tab:noised_sum} summarises the experiment results on noised datasets. More results are detailed in Section B of \emph{Supplementary Material}.

It is observed that the performance rank of dispatching rules changes across different problem datasets. In DMH$\pm 0$, the original training dataset, EDD has the highest constraint satisfaction. However, its constraint satisfaction descends and is even worse than NVF in DMH$\pm 20$, DMH$\pm 25$ and DMH$\pm 30$. A similar case happens in makespan, NVF gradually achieves a better $M$ over STD. 
 ACERL outperforms other algorithms on several noised datasets, on which it has the best performance on metrics $M$, $C$ and $P$. The constraint satisfaction percentage of ACERL is much higher than the other algorithms, e.g., 100\% on DMH$\pm0$, 97\% on DMH$\pm5$, 95\% on DMH$\pm15$ and 93\% on DMH$\pm 30$. ACERL still holds the first place in makespan with respect to metric $M$.
Although RCPOM achieves a slightly higher $M$ with 0.88 over ACERL with 0.87, its constraint satisfaction percentage $P$ is rarely higher than ACERL's, i.e., 66\%$<$75\% in DMH$\pm20$.

MIX selects dispatching rules uniformly at random and guarantees a lower performance bound of assigning tasks. From Table \ref{tab:noised_sum}, MIX has better performance than FCFS in both makespan and tardiness and is better than EDD in makespan. Learning policies including ACERL, MAPPO, RCPOM as well as other RL and CRL policies aim at learning a suitable policy to choose a proper rule within a given state, i.e., improving the lower bound. But they can hardly make it due to the trade-off among multiple instances, as well as sparse feedback.

ACERL learns from the diverse experience sampled by the population. The adaptive selection of training instances prevents ACERL from focusing solely on specific instances during training, enabling a strong overall performance across multiple instances. Intrinsic stochastic ranking balances the conflict in terms of maximising rewards and satisfying constraints according to the episodic values including makespan and tardiness. The obtained rank-based fitness helps with the estimation of the natural gradient for updating the policy. ACERL is particularly well-suited to tackling sparse reward and penalty in DMH, as it requires only episode-level values to guide the learning process.

A notable strength of ACERL lies in maintaining robust performance even when the noised dataset gradually differs from the original training dataset. This robustness suggests that the learned policy captures fundamental principles of DMH rather than merely memorising specific instance characteristics. It indicates potential applications of ACERL in complex real-world scenarios with dynamics and constraints.

\subsection{Leave-one-out cross-validation}
\label{sec:cv}
To further explore ACERL's effectiveness and limitations, we divide the training set into subsidiary training and test sets by the leave-one-out method. For example, DMH-01 is stripped out as the test instance, and then the remaining instances (DMH-02 to 08) are used to train the policy. 
Table \ref{tab:l1o1} shows the experiment results by leaving DMH-01 out. More results are attached in Section C of \emph{Supplementary Material}. 

Still, our proposed method, ACERL achieves the highest $M$ value of 0.98 and 87\% constrained satisfaction. 
Across DMH-02 to DMH-08, ACERL achieves the best makespans ($F_m$).
However, we also find that ACERL does not perform well on the left instance DMH-01 as expected, with $F_m=1914.1$ and $F_t=54.0$. At the same time, STD, a dispatching rule achieves the best makespan value of 1868.8 and EDD achieves the lowest tardiness $F_t=33.9$. 
The phenomenon is attributed to the out-of-distribution. The diverse training instances are intentionally designed to facilitate a comprehensive evaluation, and thus instances may differ from each other significantly.
 It is not surprising to observe the limited performance of learning policies on DMH-01, which is separated from the training set as they have never encountered the instance before. At the same time, policies can truly learn something from the remaining training instances. Learning policies including ACERL, RCPOM and SAC show superior performance than the MIX policy, which selects dispatching rules randomly. Some special knowledge is able to transfer from some ``similar'' instances to the split instance. 
Although ACERL does not demonstrate superior performance in terms of makespan on the split instance, it still achieves the overall best performance.

\subsection{Ablation study}

\begin{table*}[!ht]
    \centering
        \caption{Ablation study on training instances (DMH-01 to DMH-08). Bold number indicates the best makespan and tardiness. ``$M/C (P)$'' indicates the average normalised makespan, tardiness and percentage of constraint satisfaction. Strategy denotes the ranking strategies including ISR, CF, R and F. Mode denotes training modes including AIS, Uniform and Random.
        Different groups of algorithms are divided by the row lines according to the ranking strategies and training modes. 
        }
    \resizebox{\textwidth}{!}{
      \setlength{\tabcolsep}{2pt}
    \begin{tabular}{lc|c|c|c|c|c|c|c|c|c}
\toprule
\multirow{2}{*}{Strategy}& \multirow{2}{*}{Mode}&DMH-01 & DMH-02 & DMH-03 & DMH-04 & DMH-05 & DMH-06 & DMH-07 & DMH-08 & \multirow{2}{*}{$M/C (P)$} \\
& &$F_m$/$F_t$ & $F_m$/$F_t$& $F_m$/$F_t$& $F_m$/$F_t$& $F_m$/$F_t$& $F_m$/$F_t$& $F_m$/$F_t$& $F_m$/$F_t$&\\
\midrule

ISR&AIS&1797.8/\textbf{29.5} & 1859.2/30.7 & 1840.0/28.6 & \textbf{1896.6}/35.7 & 1856.4/29.0 & 1864.4/35.5 & 1929.6/9.5 & 1856.8/25.4&0.90/0.90 (100\%)\\
ISR&Uniform&1817.1/47.5 & 1869.8/33.3 & 1839.4/25.8 & 1906.6/37.1 & \textbf{1842.4}/23.7 & 1873.8/\textbf{31.9} & 1932.8/10.0 & 1863.8/24.5&0.88/0.89 (91\%)\\
ISR&Random&1800.0/42.0 & 1881.9/36.9 & 1860.4/30.2 & 1902.2/34.3 & 1881.2/35.8 & 1865.6/37.7 & 1929.6/9.1 & \textbf{1852.2}/32.0&0.87/0.86 (94\%)\\
\midrule
CF&AIS&1802.0/35.2 & 1858.2/30.5 & 1848.4/20.2 & 1901.9/32.7 & 1855.8/21.8 & 1872.2/38.6 & 1932.8/6.0 & 1874.6/20.6&0.88/0.93 (95\%)\\
CF&Uniform&1802.5/41.8 & 1874.8/33.7 & 1840.0/24.6 & 1930.0/\textbf{32.0} & 1845.8/23.4 & 1875.0/38.2 & \textbf{1927.0}/\textbf{4.8} & 1864.0/25.7&0.88/0.90 (92\%)\\
CF&Random&1815.4/42.6 & 1883.2/39.6 & \textbf{1837.0}/\textbf{19.9} & 1930.2/35.3 & 1844.6/\textbf{20.2} & 1891.6/39.4 & 1932.8/7.0 & 1862.6/24.4&0.86/0.89 (92\%)\\
\midrule
R&AIS&1794.8/34.9 & 1858.2/\textbf{28.6} & 1840.0/22.5 & 1928.7/37.6 & 1863.0/33.7 & \textbf{1854.8}/33.2 & 1927.0/7.9 & 1860.8/\textbf{19.7}&0.89/0.91 (93\%)\\
R&Uniform&1824.2/43.9 & 1860.9/31.5 & 1849.0/27.2 & 1926.3/33.8 & 1861.8/30.7 & 1865.2/37.1 & 1927.0/7.2 & 1859.0/27.7&0.87/0.88 (91\%)\\
R&Random&1815.6/42.0 & \textbf{1849.4}/28.8 & 1839.6/36.2 & 1937.2/34.7 & 1857.0/26.2 & 1883.1/41.7 & 1927.0/7.6 & 1882.6/25.6&0.86/0.88 (87\%)\\
\midrule
F&AIS&\textbf{1794.0}/46.9 & 1866.8/32.4 & 1843.0/33.2 & 1928.2/39.0 & 1865.4/41.4 & 1856.0/35.0 & 1932.8/10.0 & 1854.8/24.6&0.88/0.86 (87\%)\\
F&Uniform&1807.4/52.7 & 1863.4/29.4 & 1844.5/29.5 & 1933.8/36.6 & 1874.2/36.9 & 1876.8/38.1 & 1927.0/10.3 & 1859.2/28.4&0.87/0.85 (88\%)\\
F&Random&1800.2/46.4 & 1881.8/38.4 & 1837.0/24.7 & 1924.6/38.2 & 1850.4/24.9 & 1872.0/38.1 & 1935.4/8.5 & 1857.4/25.7&0.88/0.87 (87\%)\\

\bottomrule
 \multicolumn{5}{l}{``ISR": Intrinsic stochastic ranking with rank-based fitness} & \multicolumn{5}{l}{``CF": Weighted sum of raw rewards and penalties} \\
  \multicolumn{5}{l}{``R": Raw reward with rank-based fitness} & \multicolumn{3}{l}{``F": Raw reward}\\
\multicolumn{5}{l}{``AIS": Adaptive instance sampler} & \multicolumn{3}{l}{``Uniform": Fixed instances}& \multicolumn{3}{l}{``Random": Randomly selecting instances}\\
    \end{tabular}
    }

    \label{tab:ablation}
\end{table*}

\begin{table*}[!t]
    \centering
        \caption{Ablation study on test instances (DMH-09 to DMH-16). Bold number indicates the best makespan and tardiness. ``$M/C (P)$'' indicates the average normalised makespan, tardiness and percentage of constraint satisfaction.  Strategy denotes the ranking strategies including ISR, CF, R and F. Mode denotes training modes including AIS, Uniform and Random.
        Different groups of algorithms are divided by the row lines according to the ranking strategies and training modes. 
        }
    \resizebox{\textwidth}{!}{
      \setlength{\tabcolsep}{2pt}
    \begin{tabular}{lc|c|c|c|c|c|c|c|c|c}
\toprule
\multirow{2}{*}{Strategy}& \multirow{2}{*}{Mode}&DMH-09 & DMH-10 & DMH-11 & DMH-12 & DMH-13 & DMH-14 & DMH-15 & DMH-16 & \multirow{2}{*}{$M/C (P)$} \\
& &$F_m$/$F_t$ & $F_m$/$F_t$& $F_m$/$F_t$& $F_m$/$F_t$& $F_m$/$F_t$& $F_m$/$F_t$& $F_m$/$F_t$& $F_m$/$F_t$&\\
\midrule

ISR& AIS&1801.6/\textbf{29.9} & 1878.4/30.6 & 1892.7/34.7 & \textbf{1896.8}/35.6 & 1894.9/24.2 & 1865.0/36.1 & 1932.8/9.6 & 1857.0/25.8&0.89/0.92 (97\%)\\
ISR&Uniform&1823.0/47.3 & 1891.2/34.2 & 1868.6/28.4 & 1913.7/37.2 & 1878.7/26.4 & 1867.6/\textbf{32.6} & 1934.8/10.0 & 1888.2/26.8&0.87/0.90 (92\%)\\
ISR&Random&1804.6/38.7 & 1879.6/36.5 & 1877.0/29.3 & 1919.6/34.2 & 1923.4/35.0 & 1871.8/37.2 & 1932.9/9.3 & \textbf{1852.0}/32.3&0.88/0.89 (93\%)\\
\midrule
CF&AIS&1802.0/35.5 & 1877.6/30.1 & 1876.0/23.8 & 1925.0/34.2 & 1887.1/\textbf{23.0} & 1888.2/44.1 & 1934.8/7.7 & 1874.4/\textbf{20.8}&0.87/0.93 (92\%)\\
CF&Uniform&1801.7/42.2 & 1884.5/33.7 & 1870.8/24.3 & 1931.4/\textbf{31.8} & \textbf{1853.8}/23.6 & 1874.0/38.6 & 1941.2/\textbf{6.5} & 1870.0/26.7&0.88/0.92 (92\%)\\
CF&Random&1816.0/42.9 & 1882.4/39.3 & 1870.8/\textbf{23.6} & 1940.8/35.2 & 1886.8/24.2 & 1906.0/43.0 & 1934.8/8.6 & 1863.6/24.7&0.85/0.90 (92\%)\\
\midrule
R&AIS&1807.2/34.6 & 1869.6/\textbf{29.6} & 1886.8/26.8 & 1932.2/37.3 & 1884.3/31.0 & \textbf{1854.2}/33.6 & \textbf{1930.0}/8.8 & 1874.6/21.4&0.88/0.93 (93\%)\\
R&Uniform&1824.8/44.4 & \textbf{1863.5}/31.6 & 1874.1/29.5 & 1955.6/34.6 & 1886.7/31.3 & 1886.0/35.3 & 1930.0/8.2 & 1858.8/28.0&0.86/0.90 (91\%)\\
R&Random&1813.5/42.4 & 1873.5/32.1 & 1916.9/46.3 & 1936.8/34.7 & 1885.0/29.2 & 1883.7/42.9 & 1930.0/7.7 & 1880.4/25.6&0.84/0.88 (82\%)\\
\midrule
F&AIS&1801.7/47.0 & 1876.2/34.7 & 1864.8/35.4 & 1927.8/38.7 & 1924.6/39.7 & 1856.4/35.6 & 1965.6/21.2 & 1862.6/25.9&0.86/0.85 (82\%)\\
F&Uniform&1802.8/45.4 & 1906.4/32.9 & 1876.1/37.4 & 1946.8/37.6 & 1915.5/37.3 & 1876.9/38.8 & 1995.4/31.2 & 1860.0/28.9&0.80/0.84 (80\%)\\
F&Random&\textbf{1800.2}/46.8 & 1881.0/38.0 & \textbf{1864.2}/27.5 & 1932.8/38.5 & 1935.0/36.4 & 1888.0/43.9 & 1937.6/8.7 & 1866.2/27.1&0.86/0.87 (82\%)\\
\bottomrule
 \multicolumn{5}{l}{``ISR": Intrinsic stochastic ranking with rank-based fitness} & \multicolumn{5}{l}{``CF": Weighted sum of raw rewards and penalties} \\
  \multicolumn{5}{l}{``R": Raw reward with rank-based fitness} & \multicolumn{3}{l}{``F": Raw reward}\\
\multicolumn{5}{l}{``AIS": Adaptive instance sampler} & \multicolumn{3}{l}{``Uniform": Fixed instances}& \multicolumn{3}{l}{``Random": Randomly selecting instances}\\

    \end{tabular}
    }

    \label{tab:ablation_unseen}
\end{table*}

To fully evaluate ACERL and analyse each ingredient, a rigorous ablation study is conducted. 
Compared strategies are organised into the following two groups.
\begin{itemize}
  \item Four ranking strategies are compared. 
  Intrinsic stochastic ranking with rank-based fitness is denoted as ``ISR''. Weighted sum of raw rewards and penalties is denoted as ``CF''. Rank-based fitness is denoted as ``R''. Raw reward is denoted as ``F''. 
  \item ``AIS'', ``Uniform'' and ``Random'' are three different training modes. ``AIS'' denotes the policy trained with the adaptive instance sampler. ``Uniform'' denotes each instance is equally selected and fixed during training. ``Random'' means instances are selected randomly.
\end{itemize}

\subsubsection{Effect of intrinsic stochastic ranking with rank-based fitness}

Experiment results are presented in Tables \ref{tab:ablation} and \ref{tab:ablation_unseen}. Ranking strategy ``F" can obtain good results on makespan, but it hardly satisfies the constraints with a high rate. Similarly, although ``R", a rank-based fitness strategy, has a slightly better makspan than ``F", it still does not perform well in satisfying the problem constraints, i.e. making the tardiness of tasks descend to a promising threshold. ``CF" that applies the regular weighted sum of makespan and tardiness, tries to improve the likelihood of satisfying the constraints, while only obtaining comparable result with ``F" and ``R".
The results from the ablation study verify the effectiveness of the ``ISR'' strategy, in which we obtain excellent performance on makespan in various training modes with better constraint satisfaction, compared with ``F", ``R" and ``CF". 
ACERL that uses ``ISR'' strategy can achieve $100\%$ constraint satisfaction on the training set in AIS mode.
Our proposed ``ISR'' strategy not only can eliminate the effects of scale between rewards and constraint values, but also can well consider reward maximisation and constraint satisfaction simultaneously.

\subsubsection{Necessity of adaptive training}
When a policy is trained on one single instance, i.e., without assuming multiple instances available as historical record, the policy performs well on the training instance but fails to generalise to new instances. Specifically, it results in a high makespan and struggles to meet constraints on different instances, which underscores the limitations of a model trained on just one instance.
Detailed experiment results are presented in Section A of \emph{Supplementary Material}.
Subsequently, we trained the policies on three training modes, including ``AIS'', ``Uniform'' and ``Random'' to verify the effectiveness of our adaptive approach. 
From Tables \ref{tab:ablation} and \ref{tab:ablation_unseen}, we can observe that using our proposed ``AIS'' as the training model results in a better makespan and a higher constraint satisfaction percentage.

 Compared with the training modes of ``Uniform'' and ``Random'', our proposed ``AIS'' training mode is able to dynamically and adaptively adjust its proposition of training instances during the training process. The suitable choice of training instances enables a more reasonable and efficient allocation of computational resources, hence helps ACERL to achieve an overall good performance across all instances.

\subsubsection{Coordination of ACERL}

 We demonstrated the effectiveness of our proposed ACERL and the unique contributions of its ingredients through rigorous and comprehensive ablation experiments, respectively. Overall, as observed in Tables \ref{tab:ablation} and \ref{tab:ablation_unseen}, our ACERL achieves the best makespan and the highest constraint satisfaction percentage compared to all other combinations of ranking strategies and training modes. The AIS training mode considers multiple instances and adaptively allocates resources based on performance. The ISR strategy effectively mitigates the effect of value scales and achieves a well-balanced trade-off between reward and constraint, resulting in excellent performance across all instances. All ingredients of ACERL including intrinsic stochastic ranking with the rank-based fitness and adaptive training cooperate well and are indispensable for solving the problem.

\section{Conclusion}
\label{sec:conclusion}
In this paper, we consider dynamic material handling problem with uncertainties and sparse feedback, where unexpected dynamic events and constraints present. Due to the dynamics, it is hard to determine the unique contribution of each task assignment, which leads to an issue of sparse feedback in both rewards and penalties. Although the existence of historical task records enables training a policy with multiple instances, the trade-off across training instances poses a challenge with limited computational resources.
To address the challenges, we proposed a robust adaptive constrained evolutionary reinforcement learning approach, called ACERL. 
ACERL leverages natural gradient ascent to update its parameters. The population-based exploration provides diverse experiences by sampling noised actors at each generation. Adaptive instance sampler keeps choosing the training instance from which the policy benefits the most for policy improvement. Reasonable computational resource allocation is allowed.
To balance the rewards and penalties, we present intrinsic stochastic ranking with rank-based fitness. Instead of common reward-based, real-valued fitness metrics, we incorporate rank-based fitness to estimate the natural gradient, which implies the optimisation direction of both maximising rewards and constraint satisfaction at the same time. Intrinsic stochastic ranking provides an independent ranking for different instances based on rewards and penalties, which enables a bias for adaptive instance selection.
Extensive experiments show that ACERL outperforms advanced reinforcement learning methods, constrained reinforcement learning methods and classic dispatching rules on eight training instances and eight test instances in terms of maximising rewards with constraint satisfaction. ACERL not only achieves the best makespan, but also fully satisfies tardiness constraints on nearly all instances.
Besides, ACERL is evaluated on five datasets with a total of 40 noised instances to simulate real-world scenarios. ACERL presents a robust and outstanding performance on all datasets.
Leave-one-out cross-validation is conducted on ACERL by splitting the training dataset into subsidiary training and test datasets, and presents the overall effectiveness of ACERL.
Furthermore, a comprehensive ablation study demonstrates the unique contribution of ACERL's core ingredients.

In future work, it is worth applying ACERL to more real-world problems such as vehicle routing and grid optimisation. It is also interesting to figure out delicate mechanisms for a better generalisation. Besides, theoretical aspect presents an intriguing avenue for future research, potentially bridging the gap between empirical success and theoretical understanding in constrained evolutionary reinforcement learning. Theoretical analysis will be a valuable yet challenging future work.

\bibliographystyle{IEEEtran}
\bibliography{main}

\begin{thebibliography}{10}
\providecommand{\url}[1]{#1}
\csname url@samestyle\endcsname
\providecommand{\newblock}{\relax}
\providecommand{\bibinfo}[2]{#2}
\providecommand{\BIBentrySTDinterwordspacing}{\spaceskip=0pt\relax}
\providecommand{\BIBentryALTinterwordstretchfactor}{4}
\providecommand{\BIBentryALTinterwordspacing}{\spaceskip=\fontdimen2\font plus
\BIBentryALTinterwordstretchfactor\fontdimen3\font minus \fontdimen4\font\relax}
\providecommand{\BIBforeignlanguage}[2]{{%
\expandafter\ifx\csname l@#1\endcsname\relax
\typeout{** WARNING: IEEEtran.bst: No hyphenation pattern has been}%
\typeout{** loaded for the language `#1'. Using the pattern for}%
\typeout{** the default language instead.}%
\else
\language=\csname l@#1\endcsname
\fi
#2}}
\providecommand{\BIBdecl}{\relax}
\BIBdecl

\bibitem{zou2021effective}
W.-Q. Zou, Q.-K. Pan, and L.~Wang, ``An effective multi-objective evolutionary algorithm for solving the {AGV} scheduling problem with pickup and delivery,'' \emph{Knowledge-Based Systems}, vol. 218, p. 106881, 2021.

\bibitem{singh2022matheuristic}
N.~Singh, Q.-V. Dang, A.~Akcay, I.~Adan, and T.~Martagan, ``A matheuristic for {AGV} scheduling with battery constraints,'' \emph{European Journal of Operational Research}, vol. 298, no.~3, pp. 855--873, 2022.

\bibitem{ouelhadj2009survey}
D.~Ouelhadj and S.~Petrovic, ``A survey of dynamic scheduling in manufacturing systems,'' \emph{Journal of scheduling}, vol.~12, no.~4, pp. 417--431, 2009.

\bibitem{kaplanouglu2015multi}
V.~Kaplano{\u{g}}lu, C.~{\c{S}}ahin, A.~Baykaso{\u{g}}lu, R.~Erol, A.~Ekinci, and M.~Demirta{\c{s}}, ``A multi-agent based approach to dynamic scheduling of machines and automated guided vehicles ({AGV}) in manufacturing systems by considering {AGV} breakdowns,'' \emph{International Journal of Engineering Research \& Innovation}, vol.~7, no.~2, pp. 32--38, 2015.

\bibitem{hu2023dmh}
C.~Hu, Z.~Wang, J.~Liu, J.~Wen, B.~Mao, and X.~Yao, ``Constrained reinforcement learning for dynamic material handling,'' in \emph{International Joint Conference on Neural Networks}, 2023, pp. 1--9.

\bibitem{hu2020deep}
H.~Hu, X.~Jia, Q.~He, S.~Fu, and K.~Liu, ``Deep reinforcement learning based {AGV}s real-time scheduling with mixed rule for flexible shop floor in industry 4.0,'' \emph{Computers \& Industrial Engineering}, vol. 149, p. 106749, 2020.

\bibitem{jeong2021reinforcement}
Y.~Jeong, T.~K. Agrawal, E.~Flores-Garc{\'\i}a, and M.~Wiktorsson, ``A reinforcement learning model for material handling task assignment and route planning in dynamic production logistics environment,'' \emph{Procedia CIRP}, vol. 104, pp. 1807--1812, 2021.

\bibitem{dennis2020emergent}
M.~Dennis, N.~Jaques, E.~Vinitsky, A.~Bayen, S.~Russell, A.~Critch, and S.~Levine, ``Emergent complexity and zero-shot transfer via unsupervised environment design,'' \emph{Advances in Neural Information Processing Systems}, vol.~33, pp. 13\,049--13\,061, 2020.

\bibitem{blackstone1982state}
J.~H. Blackstone, D.~T. Phillips, and G.~L. Hogg, ``A state-of-the-art survey of dispatching rules for manufacturing job shop operations,'' \emph{The International Journal of Production Research}, vol.~20, no.~1, pp. 27--45, 1982.

\bibitem{sabuncuoglu1998study}
I.~Sabuncuoglu, ``A study of scheduling rules of flexible manufacturing systems: A simulation approach,'' \emph{International Journal of Production Research}, vol.~36, no.~2, pp. 527--546, 1998.

\bibitem{chryssolouris2001dynamic}
G.~Chryssolouris and V.~Subramaniam, ``Dynamic scheduling of manufacturing job shops using genetic algorithms,'' \emph{Journal of Intelligent Manufacturing}, vol.~12, no.~3, pp. 281--293, 2001.

\bibitem{li2018simulation}
M.~P. Li, P.~Sankaran, M.~E. Kuhl, A.~Ganguly, A.~Kwasinski, and R.~Ptucha, ``Simulation analysis of a deep reinforcement learning approach for task selection by autonomous material handling vehicles,'' in \emph{Winter Simulation Conference}.\hskip 1em plus 0.5em minus 0.4em\relax IEEE, 2018, pp. 1073--1083.

\bibitem{dewey2014reinforcement}
D.~Dewey, ``Reinforcement learning and the reward engineering principle,'' in \emph{2014 AAAI Spring Symposium Series}, 2014, pp. 1--4.

\bibitem{ng1999policy}
A.~Y. Ng, D.~Harada, and S.~Russell, ``Policy invariance under reward transformations: Theory and application to reward shaping,'' in \emph{International Conference on Machine Learning}, vol.~99, 1999, pp. 278--287.

\bibitem{achiam2017constrained}
J.~Achiam, D.~Held, A.~Tamar, and P.~Abbeel, ``Constrained policy optimization,'' in \emph{International Conference on Machine Learning}.\hskip 1em plus 0.5em minus 0.4em\relax PMLR, 2017, pp. 22--31.

\bibitem{altman1999constrained}
E.~Altman, \emph{Constrained Markov decision processes: Stochastic modeling}.\hskip 1em plus 0.5em minus 0.4em\relax Routledge, 1999.

\bibitem{salimans2017evolution}
T.~Salimans, J.~Ho, X.~Chen, S.~Sidor, and I.~Sutskever, ``Evolution strategies as a scalable alternative to reinforcement learning,'' \emph{arXiv preprint arXiv:1703.03864}, 2017.

\bibitem{hu2022constrained}
Z.~Hu and W.~Gong, ``Constrained evolutionary optimization based on reinforcement learning using the objective function and constraints,'' \emph{Knowledge-Based Systems}, vol. 237, p. 107731, 2022.

\bibitem{tay2008evolving}
J.~C. Tay and N.~B. Ho, ``Evolving dispatching rules using genetic programming for solving multi-objective flexible job-shop problems,'' \emph{Computers \& Industrial Engineering}, vol.~54, no.~3, pp. 453--473, 2008.

\bibitem{chen2011multiple}
C.~Chen, L.-f. Xi, B.-h. Zhou, and S.-s. Zhou, ``A multiple-criteria real-time scheduling approach for multiple-load carriers subject to lifo loading constraints,'' \emph{International Journal of Production Research}, vol.~49, no.~16, pp. 4787--4806, 2011.

\bibitem{liu2018dynamic}
S.~Liu, P.~H. Tan, E.~Kurniawan, P.~Zhang, and S.~Sun, ``Dynamic scheduling for pickup and delivery with time windows,'' in \emph{IEEE 4th World Forum on Internet of Things}.\hskip 1em plus 0.5em minus 0.4em\relax IEEE, 2018, pp. 767--770.

\bibitem{yan2018dynamic}
P.~Yan, S.~Q. Liu, T.~Sun, and K.~Ma, ``A dynamic scheduling approach for optimizing the material handling operations in a robotic cell,'' \emph{Computers \& Operations Research}, vol.~99, pp. 166--177, 2018.

\bibitem{umar2015hybrid}
U.~A. Umar, M.~Ariffin, N.~Ismail, and S.~Tang, ``Hybrid multiobjective genetic algorithms for integrated dynamic scheduling and routing of jobs and automated-guided vehicle ({AGV}) in flexible manufacturing systems ({FMS}) environment,'' \emph{The International Journal of Advanced Manufacturing Technology}, vol.~81, pp. 2123--2141, 2015.

\bibitem{wang2020proactive}
W.~Wang, Y.~Zhang, and R.~Y. Zhong, ``A proactive material handling method for {CPS} enabled shop-floor,'' \emph{Robotics and Computer-integrated Manufacturing}, vol.~61, p. 101849, 2020.

\bibitem{li2022dynamic}
Z.~Li, H.~Sang, Q.~Pan, K.~Gao, Y.~Han, and J.~Li, ``Dynamic {AGV} scheduling model with special cases in matrix production workshop,'' \emph{IEEE Transactions on Industrial Informatics}, vol.~19, no.~6, pp. 7762--7770, 2023.

\bibitem{mnih2015human}
V.~Mnih, K.~Kavukcuoglu, D.~Silver, A.~A. Rusu, J.~Veness, M.~G. Bellemare, A.~Graves, M.~Riedmiller, A.~K. Fidjeland, G.~Ostrovski, S.~Petersen, C.~Beattie, A.~Sadik, I.~Antonoglou, H.~King, D.~Kumaran, D.~Wierstra, S.~Legg, and D.~Hassabis, ``Human-level control through deep reinforcement learning,'' \emph{Nature}, vol. 518, no. 7540, pp. 529--533, 2015.

\bibitem{hu2023dorl}
C.~Hu, Z.~Wang, T.~Shu, H.~Tong, J.~Togelius, X.~Yao, and J.~Liu, ``Reinforcement learning with dual-observation for general video game playing,'' \emph{IEEE Transactions on Games}, vol.~15, no.~2, pp. 202--216, 2023.

\bibitem{silver2016mastering}
D.~Silver, J.~Schrittwieser, K.~Simonyan, I.~Antonoglou, A.~Huang, A.~Guez, T.~Hubert, L.~Baker, M.~Lai, A.~Bolton, Y.~Chen, T.~Lillicrap, F.~Hui, L.~Sifre, G.~van~den Driessche, T.~Graepel, and D.~Hassabis, ``Mastering the game of {Go} without human knowledge,'' \emph{Nature}, vol. 550, pp. 354--359, Oct. 2017.

\bibitem{haarnoja2018soft}
T.~Haarnoja, A.~Zhou, P.~Abbeel, and S.~Levine, ``Soft actor-critic: Off-policy maximum entropy deep reinforcement learning with a stochastic actor,'' in \emph{International Conference on Machine Learning}.\hskip 1em plus 0.5em minus 0.4em\relax PMLR, 2018, pp. 1861--1870.

\bibitem{chen2015reinforcement}
C.~Chen, B.~Xia, B.-h. Zhou, and L.~Xi, ``A reinforcement learning based approach for a multiple-load carrier scheduling problem,'' \emph{Journal of Intelligent Manufacturing}, vol.~26, no.~6, pp. 1233--1245, 2015.

\bibitem{xue2018reinforcement}
T.~Xue, P.~Zeng, and H.~Yu, ``A reinforcement learning method for multi-{AGV} scheduling in manufacturing,'' in \emph{IEEE International Conference on Industrial Technology}.\hskip 1em plus 0.5em minus 0.4em\relax IEEE, 2018, pp. 1557--1561.

\bibitem{kardos2021dynamic}
C.~Kardos, C.~Laflamme, V.~Gallina, and W.~Sihn, ``Dynamic scheduling in a job-shop production system with reinforcement learning,'' \emph{Procedia CIRP}, vol.~97, pp. 104--109, 2021.

\bibitem{li2025real}
Y.~Li, Q.~Wang, X.~Li, L.~Gao, L.~Fu, Y.~Yu, and W.~Zhou, ``Real-time scheduling for flexible job shop with {AGVs} using multiagent reinforcement learning and efficient action decoding,'' \emph{IEEE Transactions on Systems, Man, and Cybernetics: Systems}, vol.~55, no.~3, pp. 2120--2132, 2025.

\bibitem{bellemare2016unifying}
M.~Bellemare, S.~Srinivasan, G.~Ostrovski, T.~Schaul, D.~Saxton, and R.~Munos, ``Unifying count-based exploration and intrinsic motivation,'' \emph{Advances in Neural Information Processing Systems}, vol.~29, p. 1479–1487, 2016.

\bibitem{houthooft2016vime}
R.~Houthooft, X.~Chen, Y.~Duan, J.~Schulman, F.~De~Turck, and P.~Abbeel, ``Vime: Variational information maximizing exploration,'' \emph{Advances in Neural Information Processing Systems}, vol.~29, p. 1117–1125, 2016.

\bibitem{tessler2018reward}
\BIBentryALTinterwordspacing
C.~Tessler, D.~J. Mankowitz, and S.~Mannor, ``Reward constrained policy optimization,'' in \emph{International Conference on Learning Representations}, 2019. [Online]. Available: \url{https://openreview.net/forum?id=SkfrvsA9FX}
\BIBentrySTDinterwordspacing

\bibitem{jiang2021prioritized}
M.~Jiang, E.~Grefenstette, and T.~Rockt{\"a}schel, ``Prioritized level replay,'' in \emph{International Conference on Machine Learning}.\hskip 1em plus 0.5em minus 0.4em\relax PMLR, 2021, pp. 4940--4950.

\bibitem{yao1999evolving}
X.~Yao, ``Evolving artificial neural networks,'' \emph{Proceedings of the IEEE}, vol.~87, no.~9, pp. 1423--1447, 1999.

\bibitem{sigaud2022combining}
O.~Sigaud, ``Combining evolution and deep reinforcement learning for policy search: a survey,'' \emph{ACM Transactions on Evolutionary Learning}, pp. 1--20, 2022.

\bibitem{bai2023evolutionary}
H.~Bai, R.~Cheng, and Y.~Jin, ``Evolutionary reinforcement learning: A survey,'' \emph{Intelligent Computing}, vol.~2, p. 0025, 2023.

\bibitem{such2017deep}
F.~P. Such, V.~Madhavan, E.~Conti, J.~Lehman, K.~O. Stanley, and J.~Clune, ``Deep neuroevolution: Genetic algorithms are a competitive alternative for training deep neural networks for reinforcement learning,'' \emph{arXiv preprint arXiv:1712.06567}, 2017.

\bibitem{conti2018improving}
E.~Conti, V.~Madhavan, F.~Petroski~Such, J.~Lehman, K.~Stanley, and J.~Clune, ``Improving exploration in evolution strategies for deep reinforcement learning via a population of novelty-seeking agents,'' \emph{Advances in Neural Information Processing Systems}, vol.~31, 2018.

\bibitem{yang2022evolutionary}
P.~Yang, H.~Zhang, Y.~Yu, M.~Li, and K.~Tang, ``Evolutionary reinforcement learning via cooperative coevolutionary negatively correlated search,'' \emph{Swarm and Evolutionary Computation}, vol.~68, p. 100974, 2022.

\bibitem{khadka2018evolution}
S.~Khadka and K.~Tumer, ``Evolution-guided policy gradient in reinforcement learning,'' in \emph{International Conference on Neural Information Processing Systems}.\hskip 1em plus 0.5em minus 0.4em\relax Curran Associates Inc., 2018, p. 1196–1208.

\bibitem{hu2023ecrl}
C.~Hu, J.~Pei, J.~Liu, and X.~Yao, ``Evolving constrained reinforcement learning policy,'' in \emph{International Joint Conference on Neural Networks}, 2023, pp. 1--8.

\bibitem{agrawal1995sample}
R.~Agrawal, ``Sample mean based index policies by o(log n) regret for the multi-armed bandit problem,'' \emph{Advances in Applied Probability}, vol.~27, no.~4, pp. 1054--1078, 1995.

\bibitem{Hao2019Upper}
B.~Hao, Y.~Abbasi-Yadkori, Z.~Wen, and G.~Cheng, ``Bootstrapping upper confidence bound,'' in \emph{Proceedings of the 33rd International Conference on Neural Information Processing Systems}, 2019.

\bibitem{watanabe2004evolutionary}
K.~Watanabe, M.~Hashem, K.~Watanabe, and M.~Hashem, ``Evolutionary optimization of constrained problems,'' \emph{Evolutionary Computations: New Algorithms and their Applications to Evolutionary Robots}, pp. 53--64, 2004.

\bibitem{runarsson2000stochastic}
T.~P. Runarsson and X.~Yao, ``Stochastic ranking for constrained evolutionary optimization,'' \emph{IEEE Transactions on Evolutionary Computation}, vol.~4, no.~3, pp. 284--294, 2000.

\bibitem{runarsson2003constrained}
T.~Runarsson and X.~Yao, ``Constrained evolutionary optimization,'' in \emph{Evolutionary Optimization}.\hskip 1em plus 0.5em minus 0.4em\relax Springer, 2003, pp. 87--113.

\bibitem{tang2009memetic}
K.~Tang, Y.~Mei, and X.~Yao, ``Memetic algorithm with extended neighborhood search for capacitated arc routing problems,'' \emph{IEEE Transactions on Evolutionary Computation}, vol.~13, no.~5, pp. 1151--1166, 2009.

\bibitem{wierstra2014natural}
D.~Wierstra, T.~Schaul, T.~Glasmachers, Y.~Sun, J.~Peters, and J.~Schmidhuber, ``Natural evolution strategies,'' \emph{The Journal of Machine Learning Research}, vol.~15, no.~1, pp. 949--980, 2014.

\bibitem{nesterov2017random}
Y.~Nesterov and V.~Spokoiny, ``Random gradient-free minimization of convex functions,'' \emph{Foundations of Computational Mathematics}, vol.~17, no.~2, pp. 527--566, 2017.

\bibitem{choromanski2019complexity}
K.~Choromanski, A.~Pacchiano, J.~Parker-Holder, Y.~Tang, and V.~Sindhwani, ``From complexity to simplicity: Adaptive {ES}-active subspaces for blackbox optimization,'' in \emph{Proceedings of the 33rd International Conference on Neural Information Processing Systems}, 2019, pp. 10\,299--10\,309.

\bibitem{liu2020self}
F.-Y. Liu, Z.-N. Li, and C.~Qian, ``Self-guided evolution strategies with historical estimated gradients.'' in \emph{the International Joint Conference on Artificial Intelligence}, 2020, pp. 1474--1480.

\bibitem{tamar2012policy}
A.~Tamar, D.~Di~Castro, and S.~Mannor, ``Policy gradients with variance related risk criteria,'' in \emph{International Conference on Machine Learning}, 2012, pp. 387--396.

\bibitem{schulman2017proximal}
J.~Schulman, F.~Wolski, P.~Dhariwal, A.~Radford, and O.~Klimov, ``Proximal policy optimization algorithms,'' \emph{arXiv preprint arXiv:1707.06347}, 2017.

\bibitem{weng2021tianshou}
J.~Weng, H.~Chen, D.~Yan, K.~You, A.~Duburcq, M.~Zhang, Y.~Su, H.~Su, and J.~Zhu, ``Tianshou: A highly modularized deep reinforcement learning library,'' \emph{Journal of Machine Learning Research}, vol.~23, no. 267, pp. 1--6, 2022.

\bibitem{OpenAIGym}
G.~Brockman, V.~Cheung, L.~Pettersson, J.~Schneider, J.~Schulman, J.~Tang, and W.~Zaremba, ``Open{AI} gym,'' \emph{arXiv preprint arXiv:1606.01540}, 2016.

\end{thebibliography}

\onecolumn
\clearpage
\begin{center}
    \Huge{Robust Dynamic Material Handling via Adaptive Constrained Evolutionary Reinforcement Learning}
\end{center}
\appendix

\setcounter{figure}{0}
\setcounter{table}{0}

\subsection{Supplementary experimental results of comparisons between ACERL and other algorithms}\label{sec:suptrain}
This section provides training curves, supplemented to Section V-D of the main manuscript.~Fig. \ref{fig:tcs} shows training curves of ACERL, MAPPO, RCPOM, LSAC, AMAPPO, ARCPOM, ALSAC, SAC, PPO, ASAC and APPO on training instances (DMH-01 to DMH-08). Each algorithm is trained with five different seeds and tested for 30 times independently. Fig. \ref{fig:tcs} demonstrate that ACERL (red) achieves faster and more stable convergence. During training, ACERL promptly satisfies the constraint at an early stage, whereas other algorithms 
struggle to confine the behaviours within the threshold.

\begin{figure}[ht]
  \centering
  \subfigure[Training curves on DMH-01 (Makespan).]{
		\centering

 		\includegraphics[width=0.4\columnwidth]{figures/training_curve/test-makespan-0.png}
		\label{DMH-01-m}\label{fig:makespan_0}
 }	
   \subfigure[Training curves on DMH-01 (Tardiness).]{

		\centering
		\includegraphics[width=0.4\columnwidth]{figures/training_curve/test-tardiness-0.png}
		\label{DMH-01-t.}\label{fig:tardiness_0}
	}

  \subfigure[Training curves on DMH-02 (Makespan).]{
		\centering

 		\includegraphics[width=0.4\columnwidth]{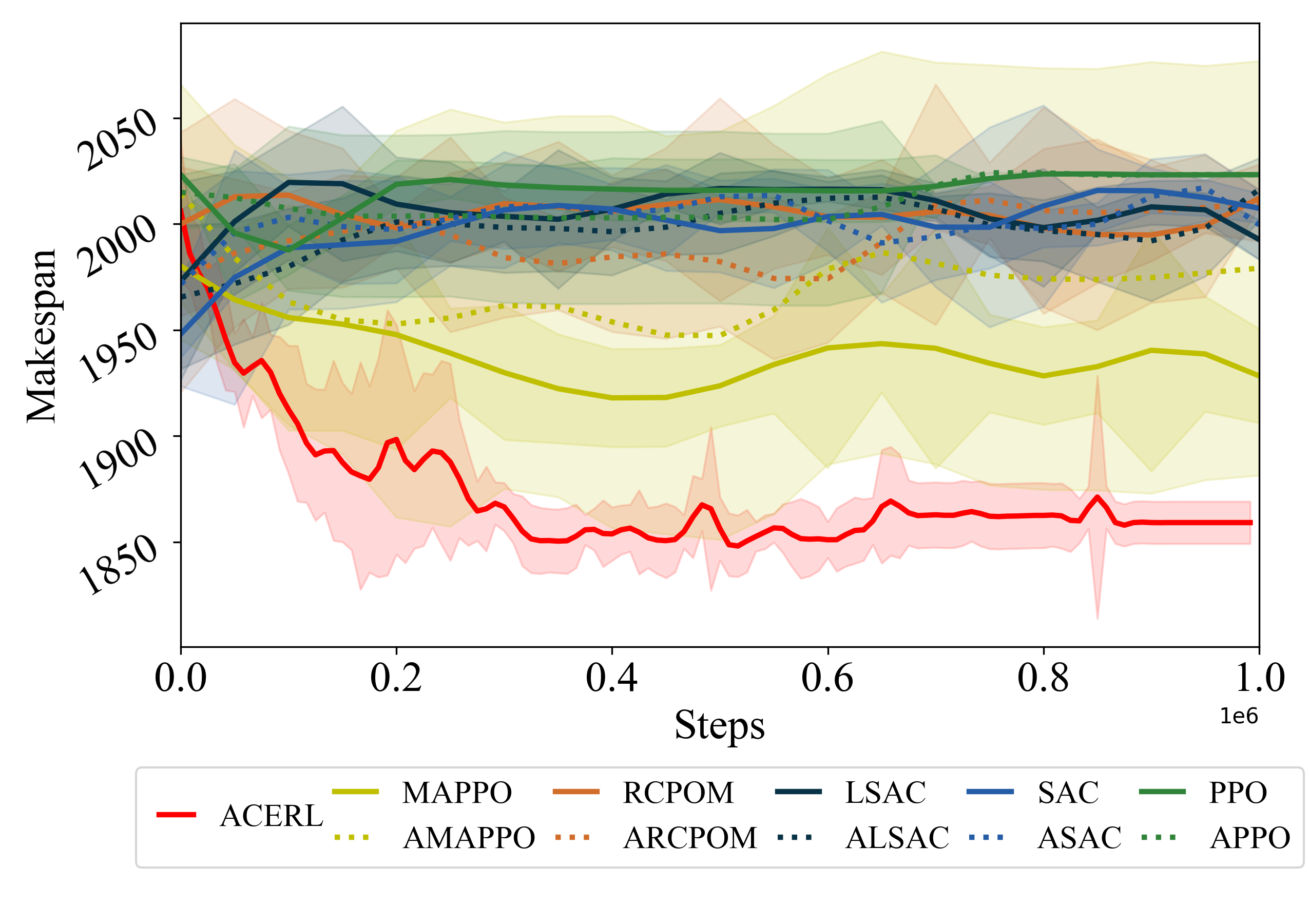}
		\label{DMH-02-m}\label{fig:makespan_1}
 }	
   \subfigure[Training curves on DMH-02 (Tardiness).]{

		\centering
		\includegraphics[width=0.4\columnwidth]{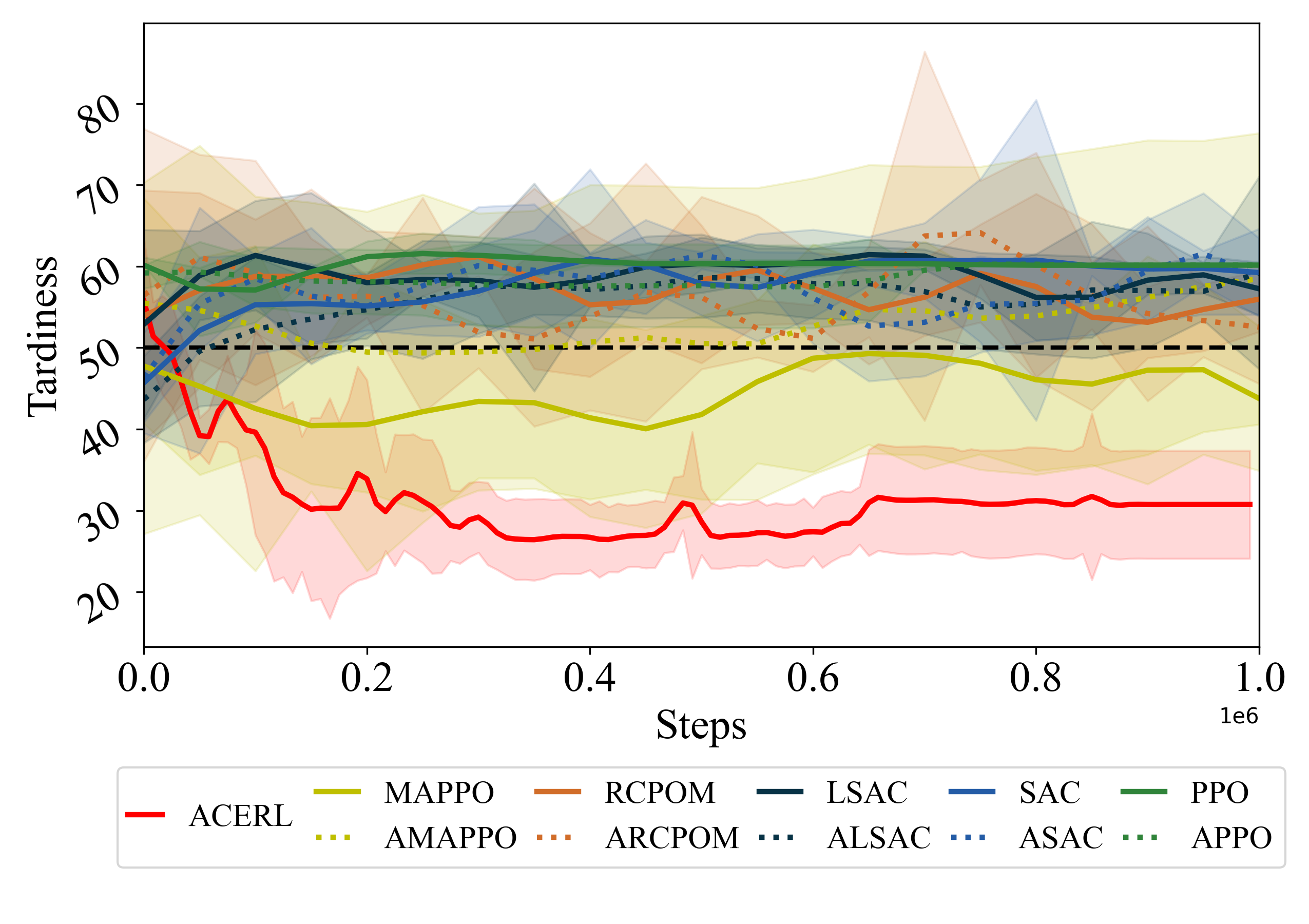}
		\label{DMH-02-t.}\label{fig:tardiness_1}
	}

\end{figure}

\begin{figure}[ht]
  \centering

  \subfigure[Training curves on DMH-03 (Makespan).]{
		\centering

 		\includegraphics[width=0.4\columnwidth]{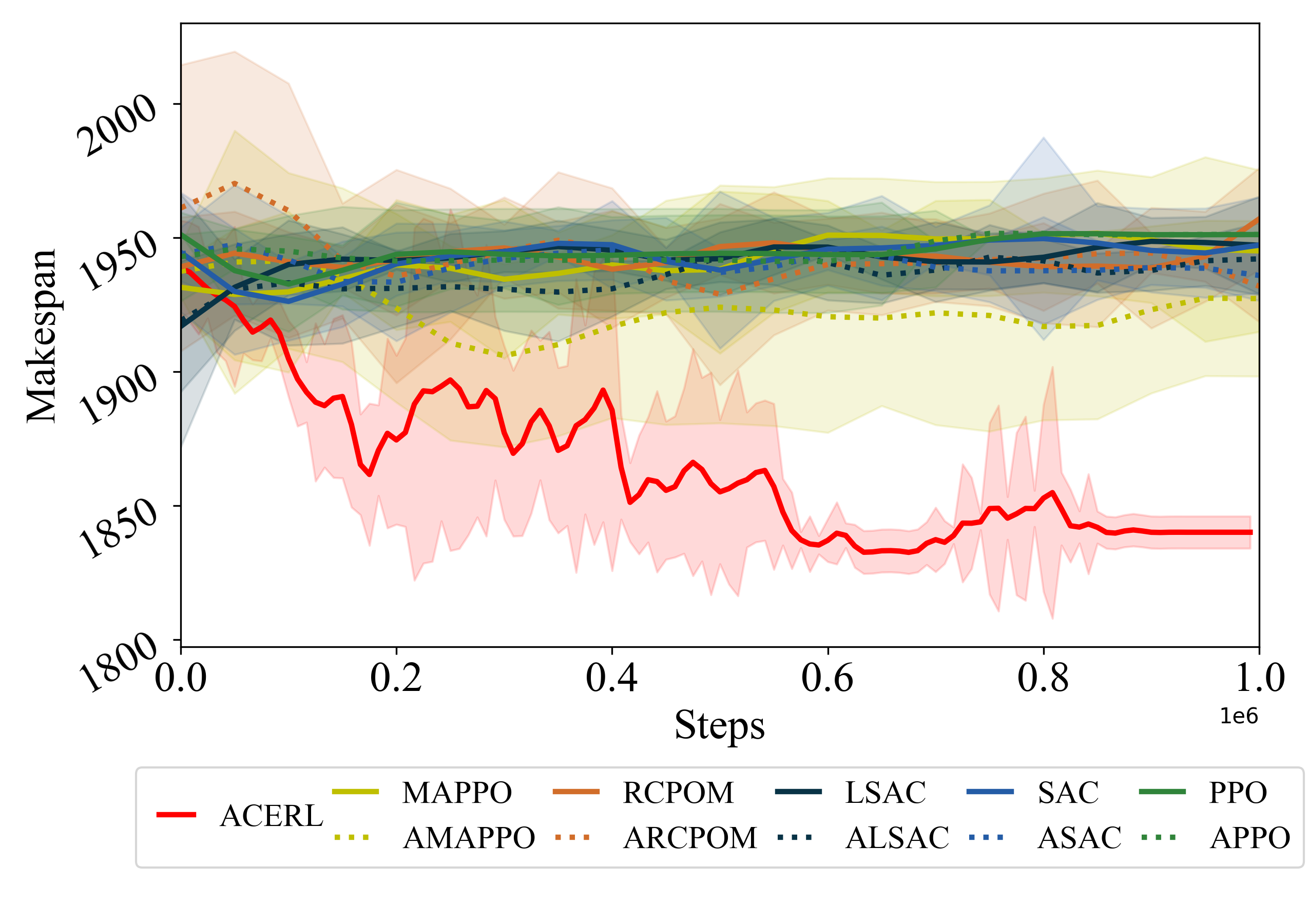}
		\label{DMH-03-m}\label{fig:makespan_2}
 }	
   \subfigure[Training curves on DMH-03 (Tardiness).]{
		\centering
		\includegraphics[width=0.4\columnwidth]{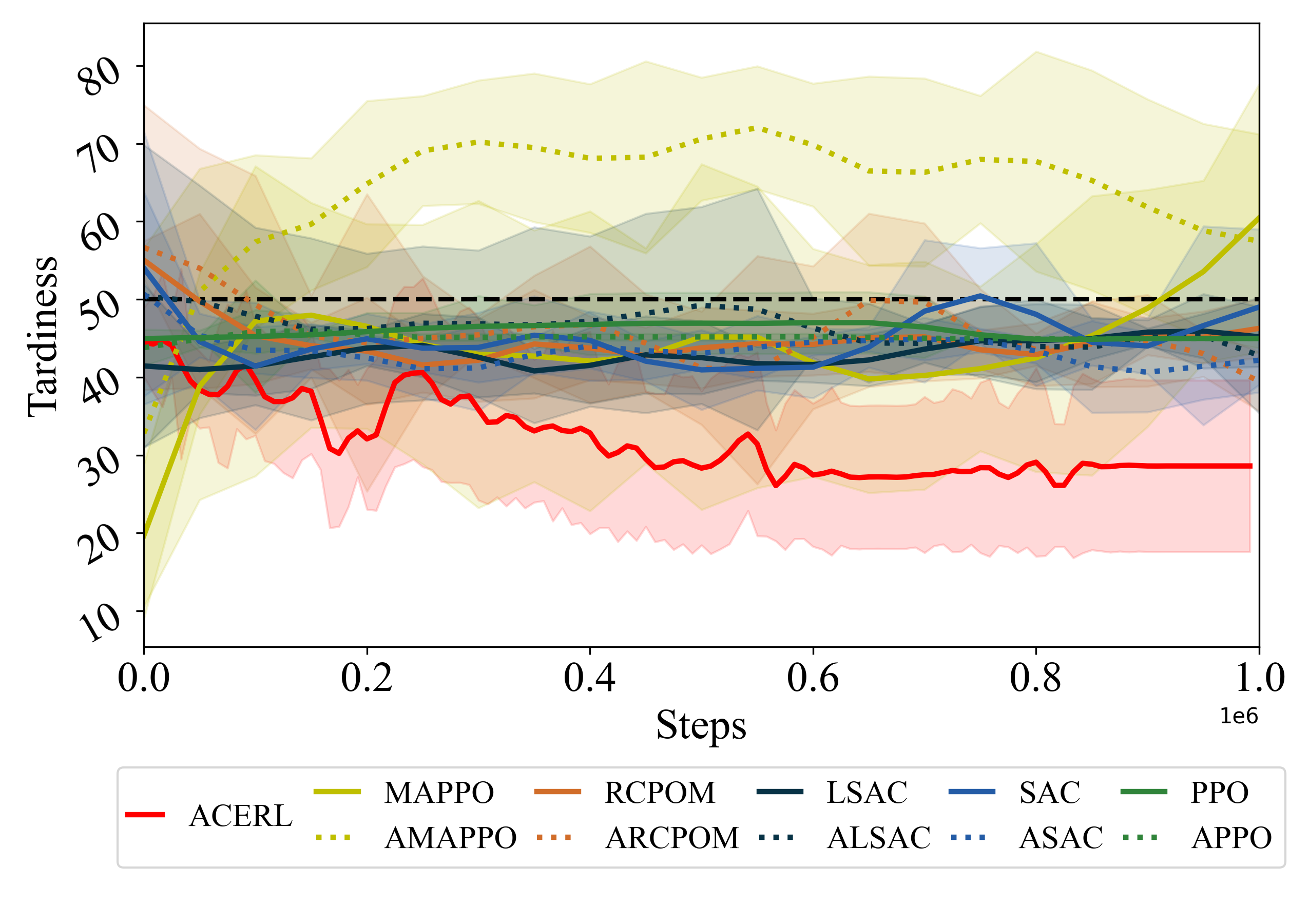}
		\label{DMH-03-t.}\label{fig:tardiness_2}
	}

   \subfigure[Training curves on DMH-04 (Makespan).]{
		\centering

 		\includegraphics[width=0.4\columnwidth]{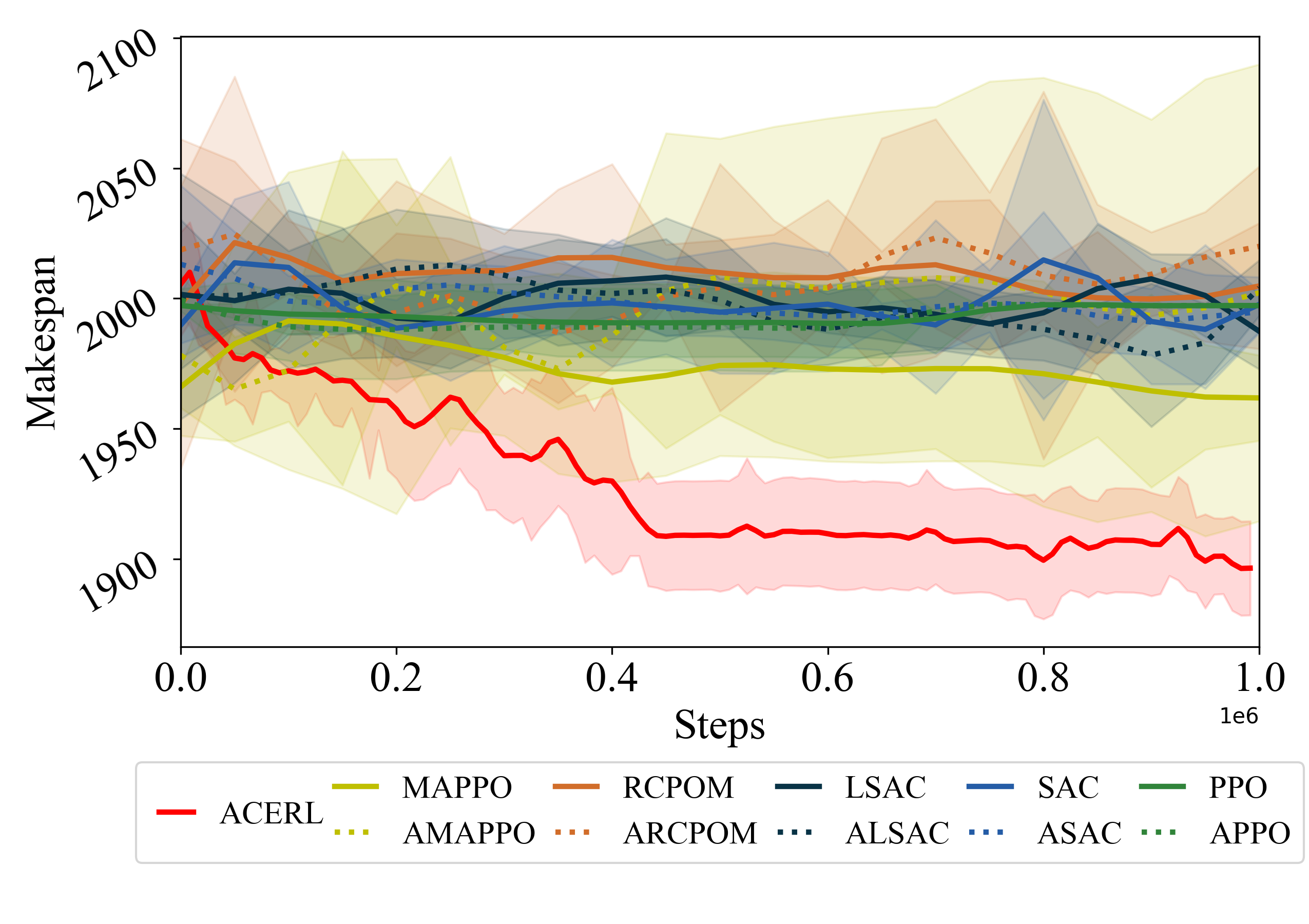}
		\label{DMH-04-m}\label{fig:makespan_3}
 }	
   \subfigure[Training curves on DMH-04 (Tardiness).]{
		\centering
		\includegraphics[width=0.4\columnwidth]{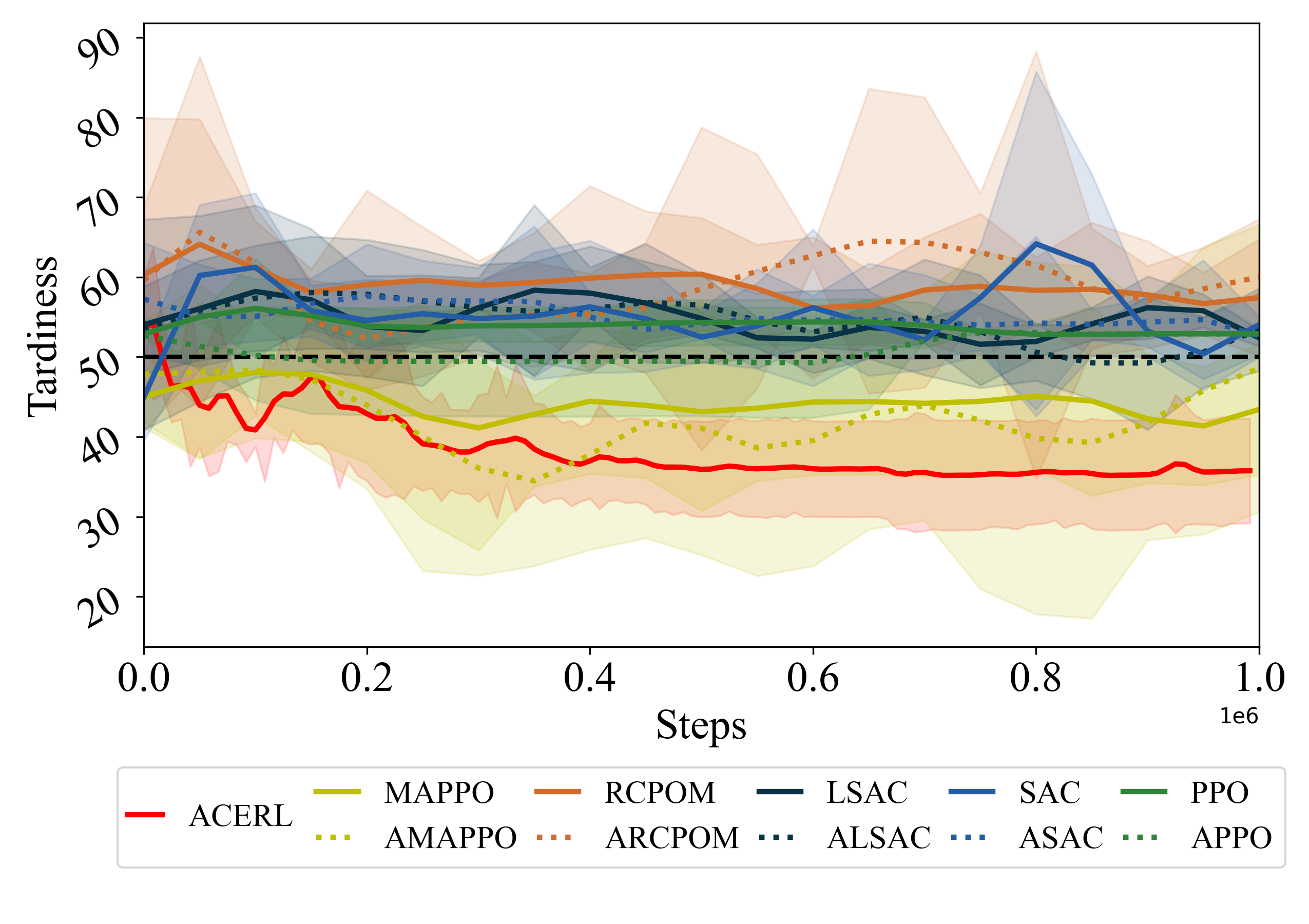}
		\label{DMH-04-m.}\label{fig:tardiness_3}
	}

     \subfigure[Training curves on DMH-05 (Makespan).]{
		\centering

 		\includegraphics[width=0.4\columnwidth]{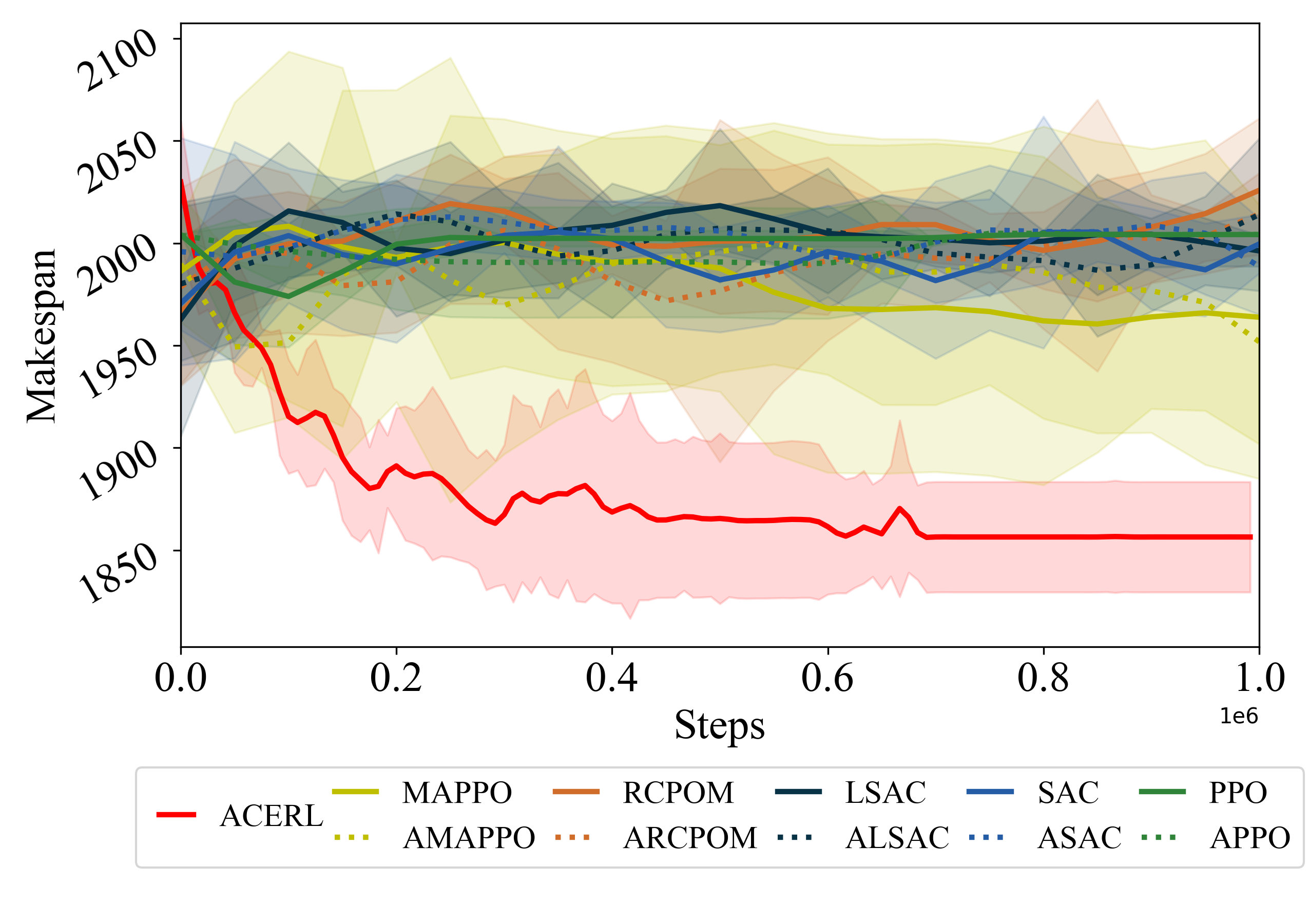}
		\label{DMH-05-m}\label{fig:makespan_4}
 }	
   \subfigure[Training curves on DMH-05 (Tardiness).]{
		\centering
		\includegraphics[width=0.4\columnwidth]{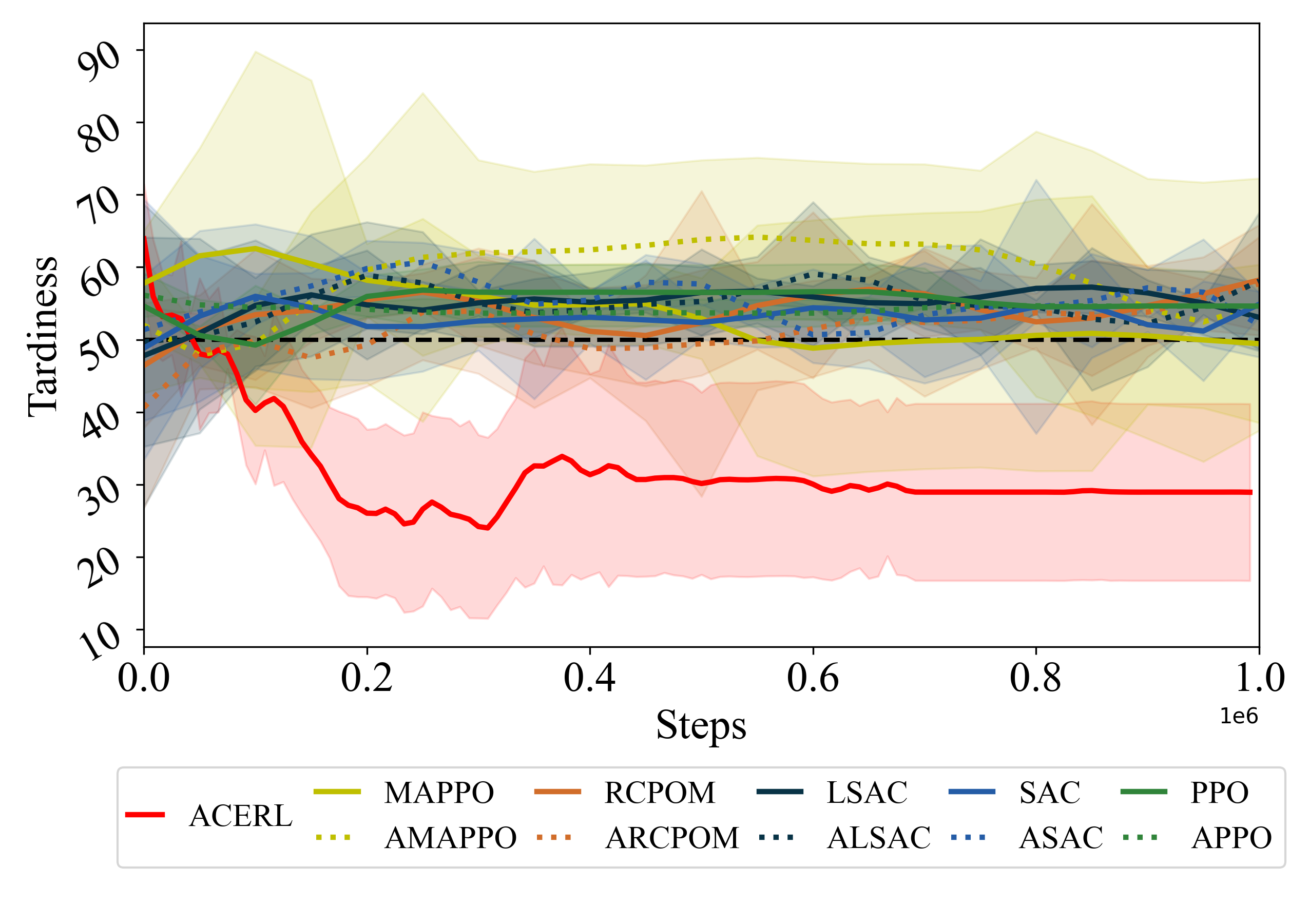}
		\label{DMH-05-t}\label{fig:tardiness_4}
	}

\end{figure}

\begin{figure}[ht]
  \centering

   \subfigure[Training curves on DMH-06 (Makespan).]{
		\centering

 		\includegraphics[width=0.4\columnwidth]{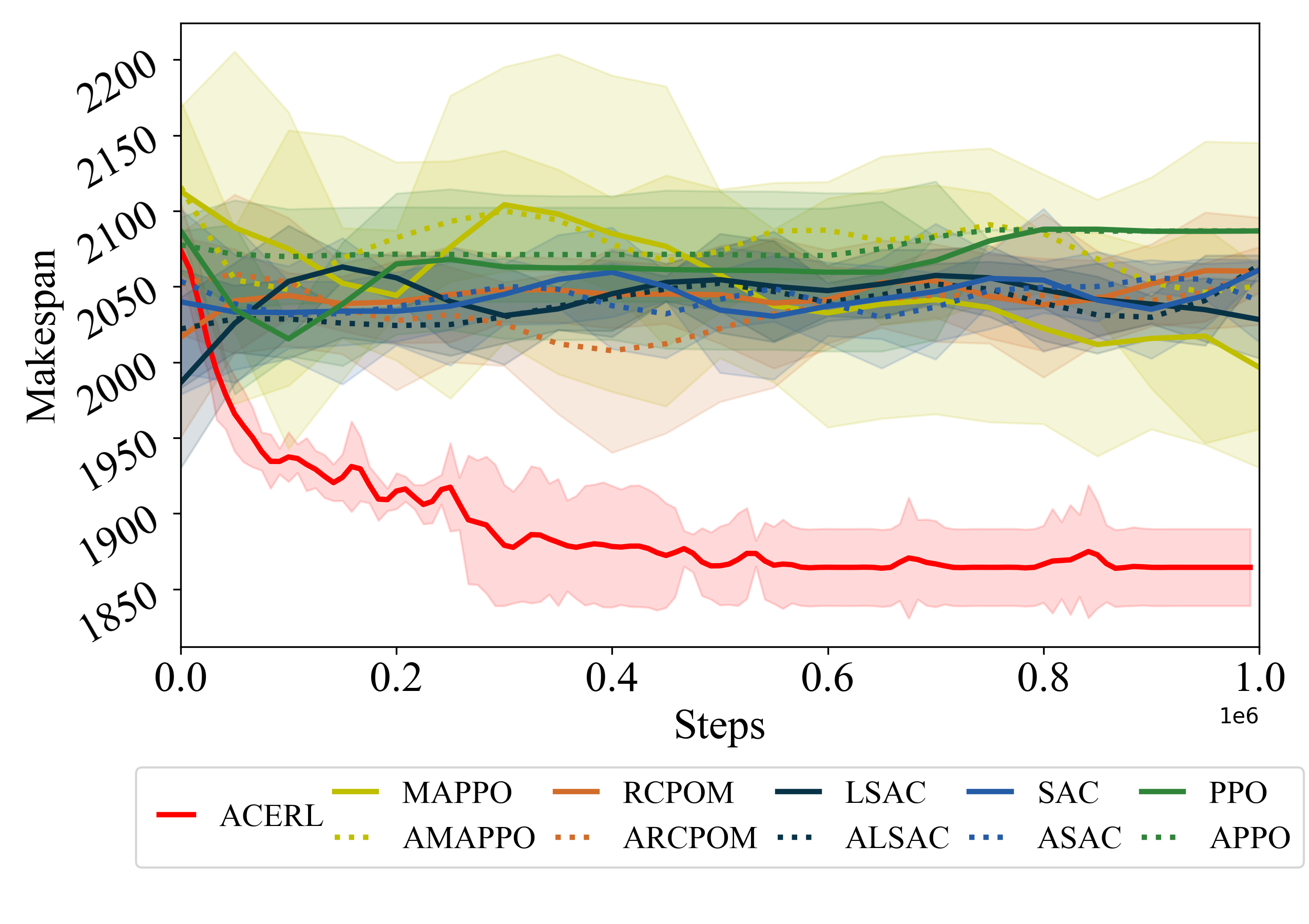}
		\label{DMH-06-m}\label{fig:makespan_5}
 }	
   \subfigure[Training curves on DMH-06 (Tardiness).]{
		\centering
		\includegraphics[width=0.4\columnwidth]{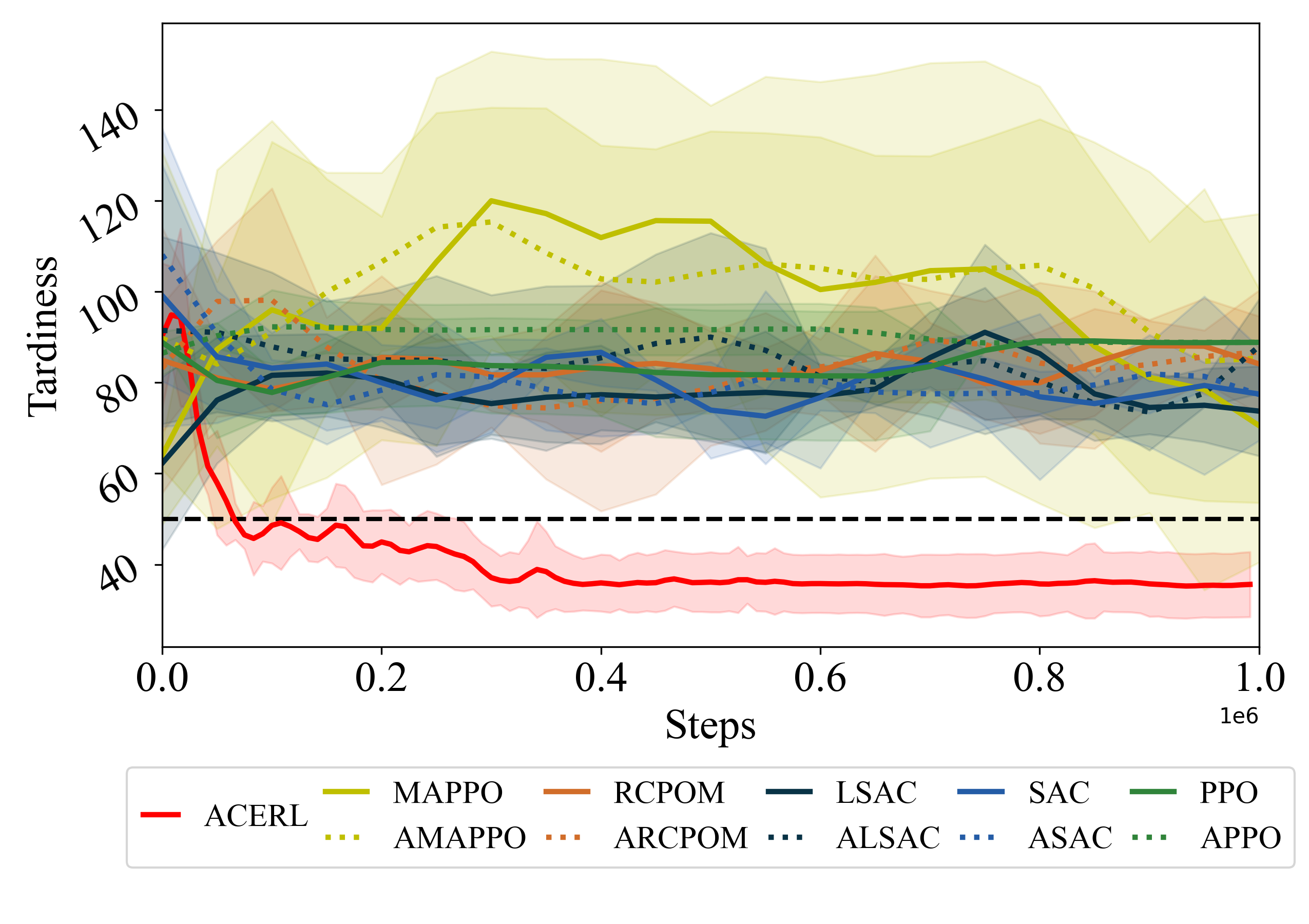}
		\label{DMH-06-t}\label{fig:tardiness_5}
	}

   \subfigure[Training curves on DMH-07 (Makespan).]{
		\centering

 		\includegraphics[width=0.4\columnwidth]{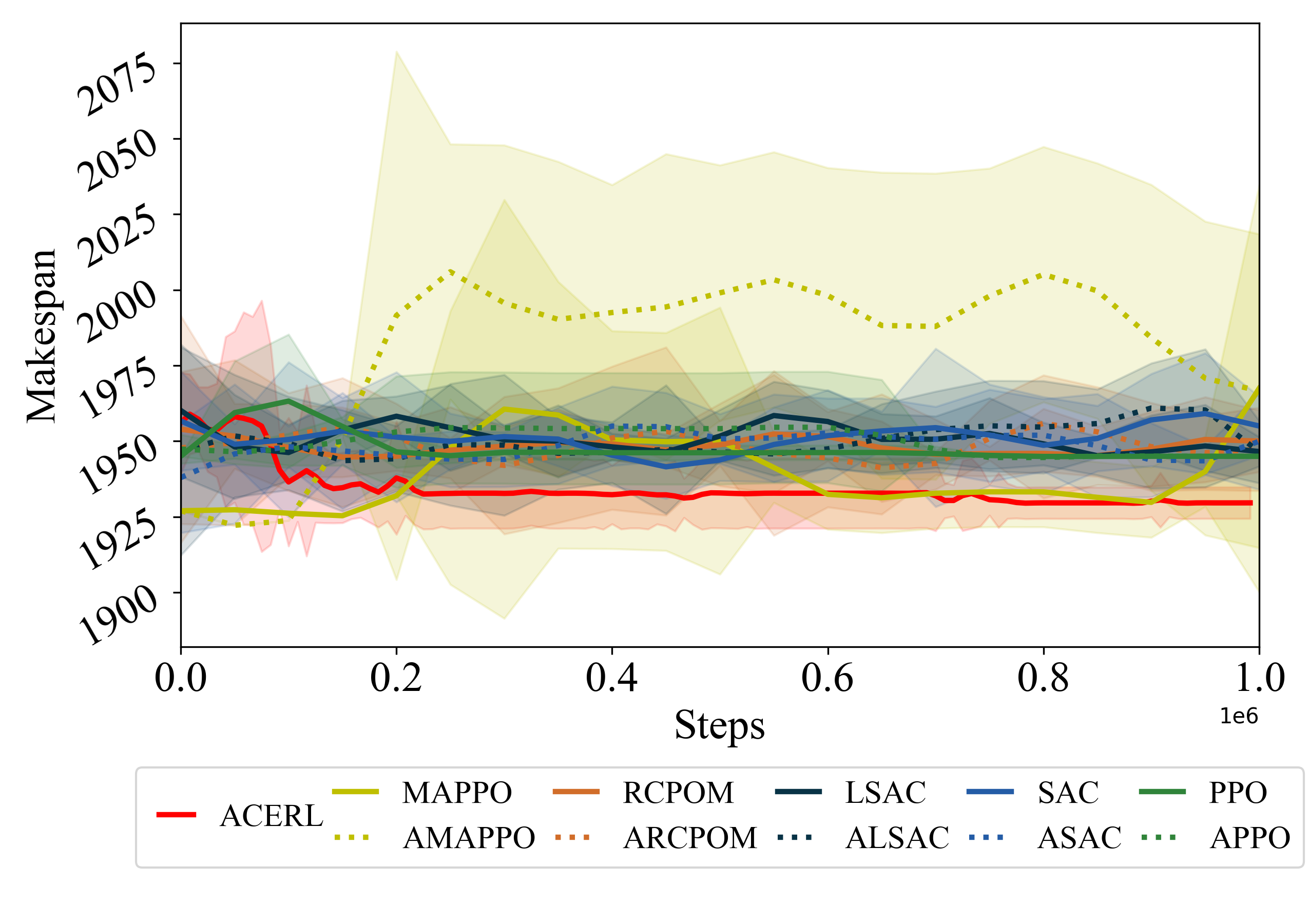}
		\label{DMH-07-m}\label{fig:makespan_6}
 }	
   \subfigure[Training curves on DMH-07 (Tardiness).]{
		\centering
		\includegraphics[width=0.4\columnwidth]{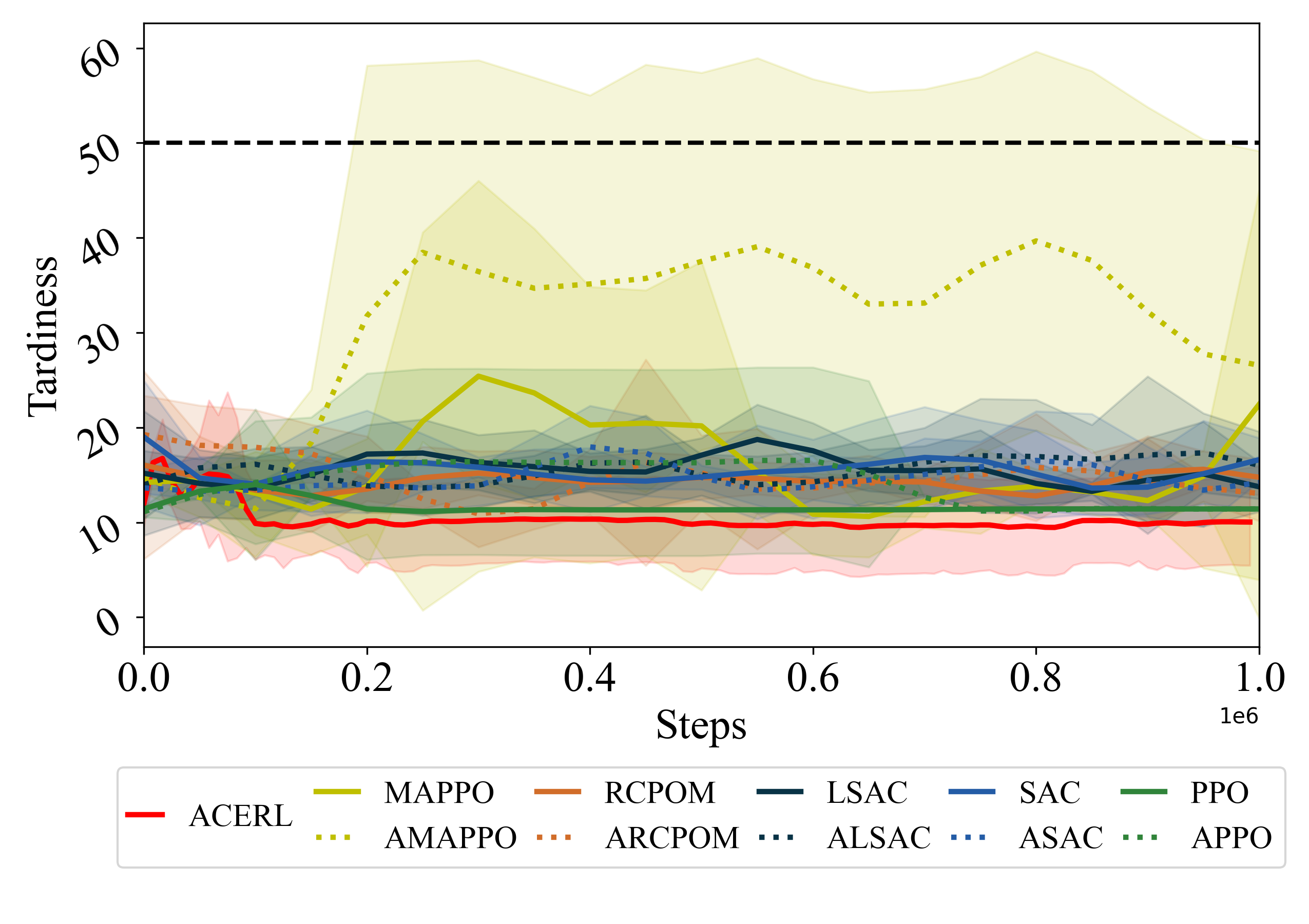}
		\label{DMH-07-t}\label{fig:tardiness_6}
	}

    \subfigure[Training curves on DMH-08 (Makespan).]{
		\centering

 		\includegraphics[width=0.4\columnwidth]{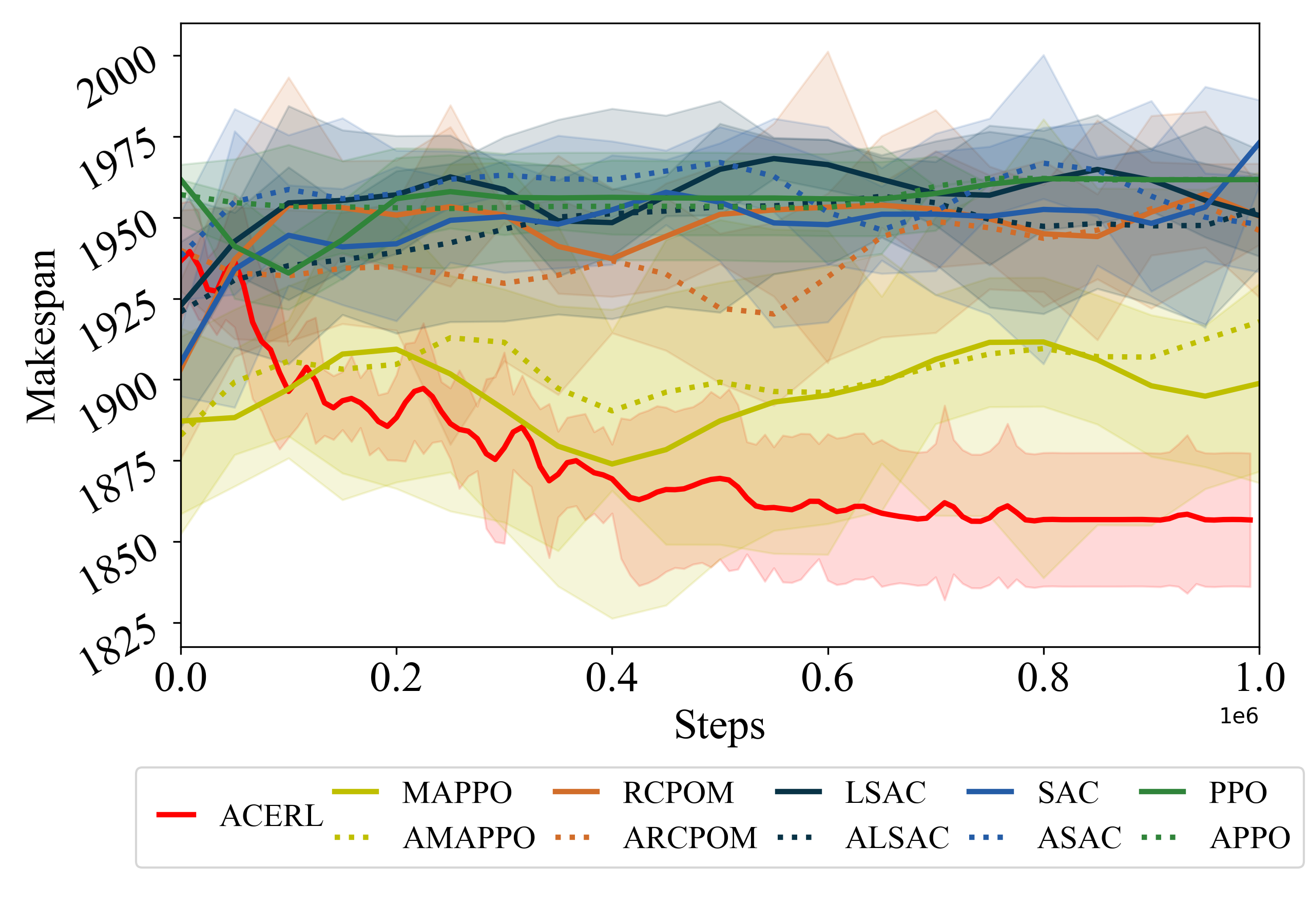}
		\label{DMH-08-m}\label{fig:makespan_7}
 }	
   \subfigure[Training curves on DMH-08 (Tardiness).]{
		\centering
		\includegraphics[width=0.4\columnwidth]{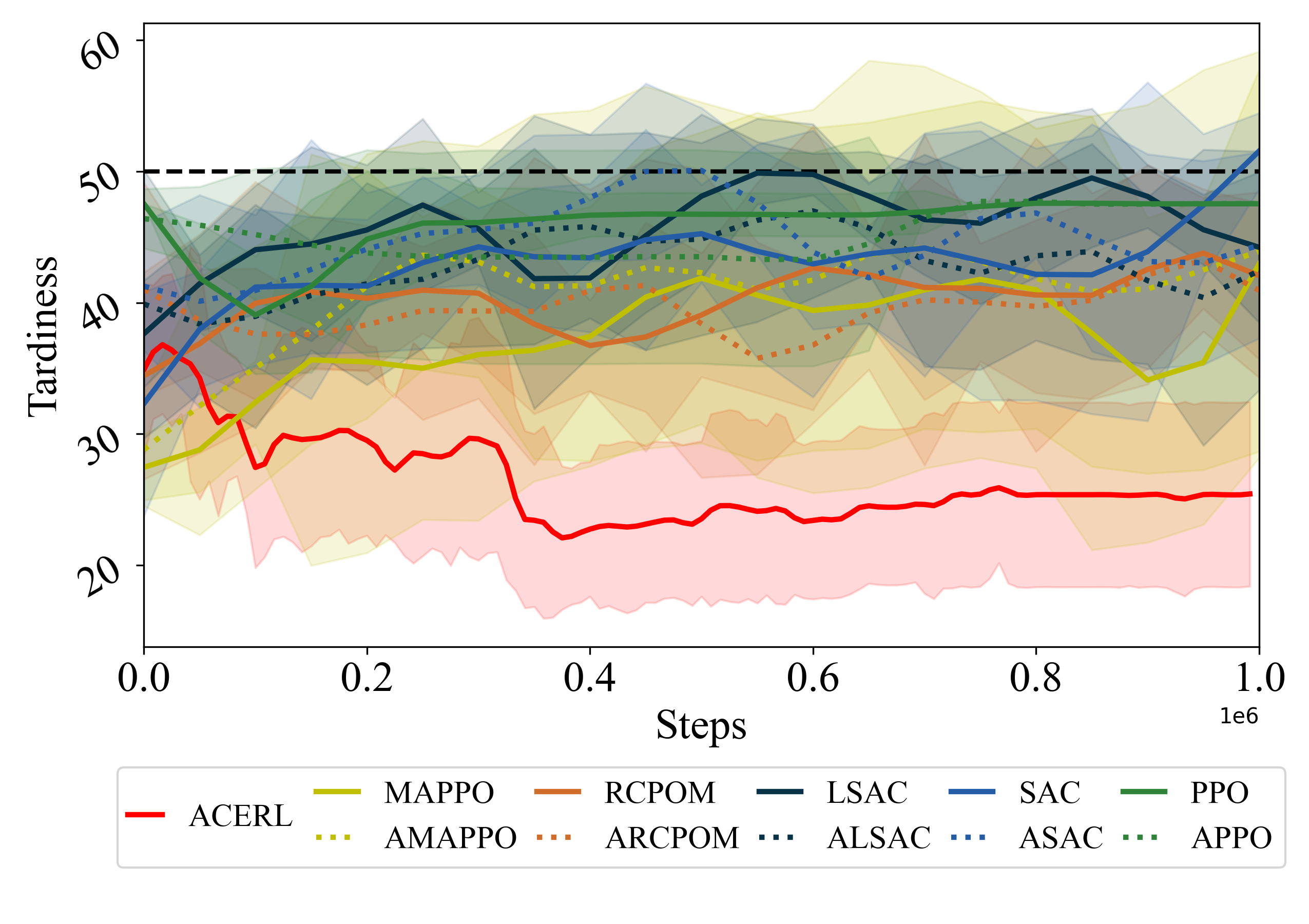}
		\label{DMH-08-t}\label{fig:tardiness_7}
	}

	\caption{\label{fig:tcs}Training curves of ACERL, RCPOM, LSAC, ARCPOM, ALSAC, SAC, PPO, ASAC and APPO. ACERL (red) performs the best.}
\end{figure}

\newpage
\onecolumn

\subsection{Supplementary experimental results of noised datasets}\label{sec:supnoise} 
Tables \ref{tab:noised+10}, \ref{tab:noised+15}, \ref{tab:noised+20}, \ref{tab:noised+25} and \ref{tab:noised+30} are results of experimental study presented in Section V-D of the main manuscript on noised datasets DMH$\pm 10$, DMH$\pm 15$, DMH$\pm 20$, DMH$\pm 25$ and DMH$\pm30$, respectively. Each table details the comparisons between ACERL and other state-of-the-arts and baselines. The results on the noised datasets denote that ACERL achieves good convergence on makespan and superior constraint satisfaction.

\begin{table*}[hb]
    \centering
        \caption{
        Performance on DMH$\pm$10: ``$M/C (P)$'' indicates the average normalised makespan, tardiness and percentage of constraint satisfaction over 30 independent trials of five different seeds.
        Bold number indicates the best value in the corresponding column. ``+'',``$\approx$'' and ``-'' indicate the policy performs statistically better/similar/worse than ACERL policy. The number of policies that are ``better'', ``similar'' and ``worse'' than ACERL in terms of makespan and tardiness on each instance is summarised in the bottom row. Horizontal rules in the table separate different groups of algorithms.}
    \resizebox{\textwidth}{!}{
      \setlength{\tabcolsep}{2pt}
    \begin{tabular}{c|c|c|c|c|c|c|c|c|c}
\toprule
\multirow{2}{*}{Algorithm}  &DMH$\pm10$-01 & DMH$\pm10$-02 & DMH$\pm10$-03 & DMH$\pm10$-04 & DMH$\pm10$-05 & DMH$\pm10$-06 & DMH$\pm10$-07 & DMH$\pm10$-08  & \multirow{2}{*}{$M/C (P)$} \\

& $F_m$/$F_t$ & $F_m$/$F_t$& $F_m$/$F_t$& $F_m$/$F_t$& $F_m$/$F_t$& $F_m$/$F_t$& $F_m$/$F_t$& $F_m$/$F_t$&\\
\midrule
ACERL&1867.2/48.5 & 1936.8/42.6 & \textbf{1900.5}/\textbf{34.4} & 1999.5/46.6 & \textbf{1891.8}/\textbf{34.4} & \textbf{1876.0}/\textbf{33.3} & 1961.0/\textbf{10.8} & \textbf{1886.0}/32.5&0.90/0.94 (80\%)\\
\midrule
MAPPO &1859.0+/55.4- & \textbf{1912.6}$\approx$/35.8+ & 1950.2-/53.3- & \textbf{1950.2}+/34.3+ & 1977.0-/45.2- & 2017.2-/93.3- & 1982.2-/23.7- & 1888.2$\approx$/29.3+&0.79/0.83 (65\%)\\
 RCPOM &1857.6+/43.3+ & 1942.6-/47.8- & 1924.3-/53.3- & 1960.3+/41.5+ & 1945.1-/42.0- & 1935.3-/57.3- & 1962.2-/13.0- & 1899.8$\approx$/32.9$\approx$&0.85/0.88 (64\%)\\
LSAC &1906.5-/52.8- & 1975.3-/47.2$\approx$ & 1932.7-/49.6- & 2013.5$\approx$/59.7- & 1980.4-/51.1- & 1995.4-/69.6- & 1964.2$\approx$/13.7- & 1923.8-/38.8-&0.71/0.81 (57\%)\\
AMAPPO &1837.6+/58.6- & 1917.6$\approx$/\textbf{31.6}+ & 1954.8-/44.9- & 1974.2+/39.9+ & 1978.4-/47.0- & 2033.4-/103.2- & 1953.8$\approx$/17.5- & 1925.6-/39.4$\approx$&0.78/0.82 (67\%)\\
ARCPOM &1875.2-/59.5- & 1932.3$\approx$/46.6- & 1922.9-/60.2- & 1965.2+/44.3$\approx$ & 1938.8-/42.2- & 1976.4-/77.9- & 1965.4$\approx$/17.6- & 1899.3-/35.1-&0.83/0.81 (57\%)\\
ALSAC &1875.1$\approx$/\textbf{39.4}+ & 1952.8-/45.6$\approx$ & 1913.9$\approx$/46.3- & 1993.8$\approx$/50.7$\approx$ & 1982.4-/54.8- & 1970.3-/66.0- & 1976.4-/17.9- & 1898.4$\approx$/37.5$\approx$&0.77/0.84 (63\%)\\
\midrule
SAC &1900.8-/53.2- & 1964.8-/47.6- & 1944.3-/46.1- & 1999.6$\approx$/51.4$\approx$ & 1989.4-/55.6- & 2006.8-/77.6- & 1950.4$\approx$/13.4- & 1922.3-/35.6$\approx$&0.73/0.82 (59\%)\\
PPO &1871.2$\approx$/50.4$\approx$ & 1947.8-/46.7$\approx$ & 1927.6-/46.4- & 1982.4+/53.0- & 1965.6-/50.4- & 1981.5-/71.2- & 1976.9-/16.2- & 1913.9-/32.9$\approx$&0.77/0.83 (60\%)\\
ASAC &1905.9-/58.8- & 1987.9-/50.4- & 1946.8-/47.0- & 1995.5$\approx$/52.0- & 1989.5-/54.2- & 2031.7-/90.3- & 1952.8$\approx$/15.5- & 1919.8-/35.4$\approx$&0.70/0.78 (55\%)\\
APPO &1872.6$\approx$/58.5- & 1925.8$\approx$/42.1$\approx$ & 1926.8-/54.7- & 1963.7+/41.6+ & 1956.5-/52.2- & 1997.8-/97.8- & 1968.0-/17.0- & 1888.8$\approx$/35.8-&0.82/0.79 (58\%)\\
\midrule
MIX &1923.1-/57.9- & 2009.3-/58.0- & 1940.9-/45.4- & 2017.9$\approx$/55.5$\approx$ & 2027.1-/64.5- & 2024.2-/69.2- & 1971.0$\approx$/16.6- & 1958.7-/42.5-&0.60/0.76 (54\%)\\
FCFS &2062.2-/112.8- & 2112.1-/92.2- & 2008.7-/56.3- & 2141.8-/99.1- & 2136.9-/86.4- & 2115.3-/72.1- & 2005.1-/17.9- & 1934.5-/\textbf{28.9}$\approx$&0.24/0.54 (37\%)\\
EDD &1947.5-/59.3- & 2025.8-/45.2$\approx$ & 1985.4-/51.6- & 2000.3$\approx$/43.4$\approx$ & 2067.3-/50.6- & 2036.1-/51.3- & \textbf{1932.2}$\approx$/12.5- & 2009.9-/44.9-&0.55/0.84 (65\%)\\
NVF &\textbf{1823.8}+/46.7$\approx$ & 1923.1$\approx$/45.3$\approx$ & 1936.7-/41.8- & 1996.5$\approx$/61.1- & 1958.6-/50.9- & 1933.5-/68.0- & 2010.0-/22.2- & 1920.1-/33.4$\approx$&0.77/0.82 (61\%)\\
STD &1861.3$\approx$/51.6$\approx$ & 1939.2$\approx$/52.1- & 1915.9$\approx$/47.1- & 1959.1+/\textbf{33.8}+ & 1963.9-/50.6- & 1960.6-/70.4- & 1941.1$\approx$/16.6- & 1902.7$\approx$/39.0-&0.86/0.83 (62\%)\\
Random &2096.2-/126.3- & 2110.4-/105.5- & 2097.5-/145.1- & 2142.0-/126.7- & 2158.1-/117.8- & 2167.4-/145.5- & 2043.5-/57.2- & 2072.5-/98.2-&0.00/0.00 (10\%)\\
\midrule
& (4/4/8) / (2/3/11)& (0/6/10) / (2/6/8)& (0/2/14) / (0/0/16)& (7/7/2) / (5/5/6)& (0/0/16) / (0/0/16)& (0/0/16) / (0/0/16)& (0/8/8) / (0/0/16)& (0/5/11) / (1/8/7)\\

\bottomrule
    \end{tabular}
    }

    \label{tab:noised+10}
\end{table*}

\begin{table*}[htbp]
    \centering
        \caption{Performance on DMH$\pm$15: ``$M/C (P)$'' indicates the average normalised makespan, tardiness and percentage of constraint satisfaction over 30 independent trials of five different seeds.
        Bold number indicates the best value in the corresponding column. ``+'',``$\approx$'' and ``-'' indicate the policy performs statistically better/similar/worse than ACERL policy. The number of policies that are ``better'', ``similar'' and ``worse'' than ACERL in terms of makespan and tardiness on each instance is summarised in the bottom row. Horizontal rules in the table separate different groups of algorithms.}
    \resizebox{\textwidth}{!}{
      \setlength{\tabcolsep}{2pt}
    \begin{tabular}{c|c|c|c|c|c|c|c|c|c}
\toprule
\multirow{2}{*}{Algorithm}  &DMH$\pm15$-01 & DMH$\pm15$-02 & DMH$\pm15$-03 & DMH$\pm15$-04 & DMH$\pm15$-05 & DMH$\pm15$-06 & DMH$\pm15$-07 & DMH$\pm15$-08  & \multirow{2}{*}{$M/C (P)$} \\

& $F_m$/$F_t$ & $F_m$/$F_t$& $F_m$/$F_t$& $F_m$/$F_t$& $F_m$/$F_t$& $F_m$/$F_t$& $F_m$/$F_t$& $F_m$/$F_t$&\\
\midrule
ACERL&\textbf{1821.4}/\textbf{26.0} & \textbf{1906.2}/35.5 & \textbf{1873.9}/30.3 & 1977.6/44.1 & \textbf{1901.2}/\textbf{25.3} & \textbf{1875.8}/\textbf{39.3} & 1974.8/17.0 & 1905.0/33.0&0.94/0.94 (95\%)\\
\midrule
MAPPO &1853.2-/62.7- & 1919.8-/38.7- & 1932.8-/31.7$\approx$ & \textbf{1955.4}+/\textbf{34.8}+ & 1952.2-/39.6- & 1970.0-/59.7$\approx$ & \textbf{1946.2}+/11.3+ & 1875.0+/31.7+&0.87/0.86 (85\%)\\
RCPOM &1889.0-/54.0- & 1923.7-/38.0$\approx$ & 1915.2-/45.6- & 1963.3+/38.2+ & 1932.7-/36.2- & 1932.2-/46.5- & 1971.0$\approx$/\textbf{10.1}+ & 1890.9+/32.4$\approx$&0.85/0.88 (82\%)\\
LSAC &1915.6-/62.7- & 1969.3-/43.0$\approx$ & 1947.6-/54.1- & 2004.3-/55.3- & 1954.8-/40.7- & 2012.6-/71.2- & 1996.4-/20.0$\approx$ & 1919.2$\approx$/39.7-&0.67/0.74 (56\%)\\
AMAPPO &1894.8-/70.8- & 1919.0-/37.4- & 1937.8-/38.6- & 1978.4$\approx$/39.8+ & 1961.8-/40.5- & 2010.8-/71.6- & 1952.6+/14.5$\approx$ & 1897.4+/41.3-&0.79/0.79 (70\%)\\
ARCPOM &1896.5-/72.0- & 1936.6-/44.4- & 1913.1-/40.2- & 1973.7$\approx$/43.9$\approx$ & 1925.5$\approx$/31.1- & 1974.2-/79.0- & 1996.3-/15.1$\approx$ & 1936.9-/47.6-&0.76/0.76 (61\%)\\
ALSAC &1871.1-/59.3- & 1939.2-/45.6- & 1928.4-/45.3- & 1982.7$\approx$/48.0- & 1940.1-/39.7- & 1982.8-/59.4- & 2002.3-/20.1- & 1907.3$\approx$/39.3-&0.76/0.78 (66\%)\\
\midrule
SAC &1899.0-/56.9- & 1958.8-/45.5- & 1929.5-/47.3- & 1998.7-/48.8- & 1979.2-/46.8- & 1996.9-/66.6- & 1987.5$\approx$/21.4- & 1921.6$\approx$/38.6-&0.70/0.76 (66\%)\\
PPO &1886.4-/54.8- & 1935.7-/42.2- & 1919.8-/49.9- & 1991.9-/53.1- & 1958.4-/43.6- & 1961.5-/58.5- & 1992.4-/20.8- & 1914.1$\approx$/35.3$\approx$&0.76/0.79 (63\%)\\
ASAC &1904.2-/67.6- & 1958.4-/43.9- & 1930.3-/46.6- & 1989.1$\approx$/45.2$\approx$ & 1985.1-/46.9- & 2015.8-/75.4- & 1996.0-/22.5- & 1925.3$\approx$/39.5-&0.68/0.74 (63\%)\\
APPO &1897.4-/70.8- & 1926.7-/38.0- & 1916.9-/54.1- & 1971.7$\approx$/40.6+ & 1953.0-/40.7- & 1974.7-/84.0- & 1983.7$\approx$/18.5$\approx$ & 1911.5$\approx$/38.8-&0.78/0.75 (63\%)\\
\midrule
MIX &1902.6-/53.0- & 1983.4-/52.1- & 1942.3-/46.2- & 2011.4-/52.0- & 2018.2-/57.6- & 2001.9-/60.1- & 2003.2-/22.6- & 1964.0-/46.7-&0.60/0.73 (59\%)\\
FCFS &1995.6-/77.4- & 2080.9-/90.7- & 2042.5-/75.6- & 2157.1-/109.5- & 2143.2-/121.7- & 2119.2-/85.5- & 2047.9-/36.2- & 1965.3-/37.9-&0.18/0.37 (35\%)\\
EDD &1907.2-/38.0- & 1955.1-/\textbf{26.6}+ & 1985.6-/\textbf{23.0}+ & 2030.9-/36.7+ & 2108.8-/53.2- & 2029.2-/63.4$\approx$ & 1981.3$\approx$/21.6- & 2009.0-/39.3$\approx$&0.52/0.87 (80\%)\\
NVF &1887.2-/66.3- & 1917.0$\approx$/45.1- & 1916.4-/30.2$\approx$ & 2051.2-/72.7- & 1957.5-/46.1- & 1922.6-/57.4- & 2012.0-/20.0$\approx$ & \textbf{1872.8}+/\textbf{30.0}$\approx$&0.76/0.77 (65\%)\\
STD &1903.4-/68.0- & 1926.8-/40.9- & 1900.0-/33.6$\approx$ & 1987.4$\approx$/40.7$\approx$ & 1963.9-/39.9- & 1936.6-/62.5- & 1965.7$\approx$/16.1$\approx$ & 1889.2+/33.9$\approx$&0.82/0.82 (76\%)\\
Random &2091.0-/114.2- & 2083.1-/93.7- & 2078.0-/131.9- & 2131.0-/118.3- & 2094.5-/93.3- & 2145.1-/124.7- & 2078.2-/69.3- & 2083.4-/93.9-&0.04/0.04 (13\%)\\
\midrule
& (0/0/16) / (0/0/16)& (0/1/15) / (1/2/13)& (0/0/16) / (1/3/12)& (2/6/8) / (5/3/8)& (0/1/15) / (0/0/16)& (0/0/16) / (0/2/14)& (2/5/9) / (2/6/8)& (5/6/5) / (1/5/10)\\

\bottomrule
    \end{tabular}
    }

    \label{tab:noised+15}
\end{table*}

\begin{table*}[htbp]
    \centering
        \caption{Performance on DMH$\pm$20: ``$M/C (P)$'' indicates the average normalised makespan, tardiness and percentage of constraint satisfaction over 30 independent trials of five different seeds.
        Bold number indicates the best value in the corresponding column. ``+'',``$\approx$'' and ``-'' indicate the policy performs statistically better/similar/worse than ACERL policy. The number of policies that are ``better'', ``similar'' and ``worse'' than ACERL in terms of makespan and tardiness on each instance is summarised in the bottom row. Horizontal rules in the table separate different groups of algorithms.}
    \resizebox{\textwidth}{!}{
      \setlength{\tabcolsep}{2pt}
    \begin{tabular}{c|c|c|c|c|c|c|c|c|c}
\toprule
\multirow{2}{*}{Algorithm}  &DMH$\pm20$-01 & DMH$\pm20$-02 & DMH$\pm20$-03 & DMH$\pm20$-04 & DMH$\pm20$-05 & DMH$\pm20$-06 & DMH$\pm20$-07 & DMH$\pm20$-08  & \multirow{2}{*}{$M/C (P)$} \\

&$F_m$/$F_t$ & $F_m$/$F_t$& $F_m$/$F_t$& $F_m$/$F_t$& $F_m$/$F_t$& $F_m$/$F_t$& $F_m$/$F_t$& $F_m$/$F_t$&\\
\midrule
ACERL&1865.8/55.7 & 1890.8/31.4 & 1930.9/42.4 & 1958.6/32.1 & 1957.1/36.2 & 1986.1/79.6 & 1949.4/7.4 & \textbf{1869.5}/\textbf{27.8}&0.87/0.92 (75\%)\\
\midrule
MAPPO &1878.8-/53.1$\approx$ & 1941.8-/39.1- & 1934.4$\approx$/45.4- & 1956.8+/35.4- & 1947.2$\approx$/37.8$\approx$ & 2029.8-/81.9- & 1939.4+/10.7- & 1872.8$\approx$/28.6$\approx$&0.83/0.89 (72\%)\\
RCPOM &1887.5-/56.2$\approx$ & 1928.5-/44.8- & 1917.1+/49.9- & 1934.0+/29.6+ & 1954.4$\approx$/35.5+ & 1950.9+/\textbf{69.1}+ & 1944.8+/\textbf{5.7}$\approx$ & 1894.2-/31.1-&0.87/0.91 (66\%)\\
LSAC &1901.3-/51.5$\approx$ & 1944.4-/47.0- & 1940.3$\approx$/50.2- & 1989.8-/55.4- & 1987.2-/45.9- & 2026.3-/81.0$\approx$ & 1969.2-/14.7- & 1941.1-/40.2-&0.71/0.81 (56\%)\\
AMAPPO &1880.2-/56.1$\approx$ & 1924.0-/42.9- & 1918.2+/47.1- & \textbf{1927.6}+/41.9- & 1954.0$\approx$/\textbf{35.3}$\approx$ & 2022.2-/88.3$\approx$ & 1943.8$\approx$/13.6- & 1889.6-/33.0-&0.86/0.85 (70\%)\\
ARCPOM &1871.8-/66.3- & 1919.2-/40.5$\approx$ & \textbf{1901.0}+/44.3$\approx$ & 1952.5$\approx$/38.9- & 1959.2$\approx$/42.1- & 2001.1$\approx$/89.2- & 1957.2-/8.1- & 1941.5-/42.8-&0.82/0.83 (57\%)\\
ALSAC &1866.2$\approx$/\textbf{40.9}+ & 1938.7-/49.1- & 1904.6+/46.6$\approx$ & 1984.5-/43.1- & 1967.3-/44.9- & 1998.1$\approx$/79.5$\approx$ & 1981.5-/15.5- & 1912.8-/41.2-&0.78/0.85 (60\%)\\
\midrule
SAC &1911.3-/56.4$\approx$ & 1965.6-/49.6- & 1940.7$\approx$/48.9- & 1989.1-/46.3- & 1992.3-/48.1- & 2019.7-/83.9$\approx$ & 1944.6+/11.4- & 1926.2-/37.4-&0.72/0.81 (57\%)\\
PPO &1878.9-/52.0+ & 1935.9-/44.3- & 1926.6$\approx$/50.0- & 1960.1$\approx$/46.8- & 1954.2$\approx$/44.2- & 1978.0$\approx$/72.1+ & 1970.2-/16.7- & 1915.2-/34.7-&0.80/0.85 (60\%)\\
ASAC &1922.6-/64.3- & 1976.7-/50.4- & 1941.9$\approx$/47.7- & 1990.4-/45.2- & 1996.3-/51.6- & 2044.7-/95.5- & 1957.3$\approx$/12.4- & 1940.6-/41.9-&0.68/0.77 (53\%)\\
APPO &1883.6-/62.0- & 1922.4-/40.7- & 1923.6$\approx$/55.9- & 1951.6$\approx$/35.4$\approx$ & 1941.1$\approx$/39.1$\approx$ & 1980.4$\approx$/86.2- & 1964.0-/13.9- & 1906.9-/37.7-&0.83/0.83 (61\%)\\
\midrule
MIX &1930.5-/58.4$\approx$ & 1981.7-/54.5- & 1958.5$\approx$/48.5$\approx$ & 2031.2-/59.8- & 2017.8-/55.9- & 2057.2-/84.5$\approx$ & 1964.7-/19.0- & 1952.4-/42.7-&0.60/0.75 (47\%)\\
FCFS &2097.5-/112.7- & 2117.2-/98.1- & 1982.6-/66.2- & 2131.8-/87.2- & 2096.4-/76.5- & 2222.7-/138.7- & 1960.3-/14.4- & 1982.8-/38.9-&0.26/0.42 (30\%)\\
EDD &1981.3-/66.8- & 1957.2-/31.5- & 2043.3-/92.1- & 2035.8-/40.4- & 2120.2-/61.1- & 2108.6-/102.6$\approx$ & 1959.9-/15.1- & 1985.9-/34.0-&0.43/0.73 (57\%)\\
NVF &\textbf{1812.6}+/48.2+ & \textbf{1861.6}+/\textbf{26.2}+ & 1904.8$\approx$/\textbf{30.3}+ & 1986.8-/54.8- & \textbf{1924.5}+/41.8$\approx$ & \textbf{1936.0}+/72.2$\approx$ & 2011.1-/21.7- & 1914.8-/34.7-&0.86/0.89 (77\%)\\
STD &1838.4+/58.6- & 1962.5-/52.5- & 1912.1+/65.4- & 1971.5$\approx$/\textbf{26.5}+ & 1994.8-/38.0$\approx$ & 1967.2$\approx$/69.8$\approx$ & \textbf{1936.6}+/7.6- & 1918.0-/41.2-&0.82/0.85 (59\%)\\
Random &2117.4-/120.9- & 2071.3-/92.6- & 2073.6-/129.8- & 2130.0-/120.8- & 2117.6-/100.8- & 2190.0-/154.2- & 2064.5-/70.0- & 2103.8-/113.0-&0.04/0.01 (9\%)\\
\midrule
& (2/1/13) / (3/6/7)& (1/0/15) / (1/1/14)& (5/8/3) / (1/3/12)& (3/4/9) / (2/1/13)& (1/6/9) / (1/5/10)& (2/5/9) / (2/8/6)& (4/2/10) / (0/1/15)& (0/1/15) / (0/1/15)\\
\bottomrule
    \end{tabular}
    }

    \label{tab:noised+20}
\end{table*}

\begin{table*}[htbp]
    \centering
        \caption{Performance on DMH$\pm$25: ``$M/C (P)$'' indicates the average normalised makespan, tardiness and percentage of constraint satisfaction over 30 independent trials of five different seeds.
        Bold number indicates the best value in the corresponding column. ``+'',``$\approx$'' and ``-'' indicate the policy performs statistically better/similar/worse than ACERL policy. The number of policies that are ``better'', ``similar'' and ``worse'' than ACERL in terms of makespan and tardiness on each instance is summarised in the bottom row. Horizontal rules in the table separate different groups of algorithms.}
    \resizebox{\textwidth}{!}{
      \setlength{\tabcolsep}{2pt}
    \begin{tabular}{c|c|c|c|c|c|c|c|c|c}
\toprule
\multirow{2}{*}{Algorithm}  &DMH$\pm25$-01 & DMH$\pm25$-02 & DMH$\pm25$-03 & DMH$\pm25$-04 & DMH$\pm25$-05 & DMH$\pm25$-06 & DMH$\pm25$-07 & DMH$\pm25$-08  & \multirow{2}{*}{$M/C (P)$} \\
& $F_m$/$F_t$ & $F_m$/$F_t$& $F_m$/$F_t$& $F_m$/$F_t$& $F_m$/$F_t$& $F_m$/$F_t$& $F_m$/$F_t$& $F_m$/$F_t$&\\
\midrule
ACERL&\textbf{1846.2}/\textbf{41.2} & \textbf{1941.1}/\textbf{41.9} & 1916.6/33.7 & 1967.2/41.9 & \textbf{1879.9}/\textbf{33.1} & \textbf{1905.1}/\textbf{40.9} & 2015.4/25.0 & 1914.6/30.1&0.89/0.96 (84\%)\\
\midrule
MAPPO &1858.8-/53.4- & 2031.8-/73.8- & 1940.0$\approx$/49.3- & \textbf{1910.4}+/\textbf{33.8}+ & 1989.2-/49.5- & 2063.2-/81.6- & 2024.8-/27.2- & 1887.4+/30.1$\approx$&0.70/0.77 (55\%)\\
RCPOM &1851.8-/49.2- & 1972.6-/61.9- & 1935.8$\approx$/52.6- & 1960.5+/43.7- & 1945.1-/42.8- & 1976.9-/66.5- & 2004.3$\approx$/19.4+ & 1886.3+/30.2$\approx$&0.81/0.84 (60\%)\\
LSAC &1875.7-/49.7$\approx$ & 1983.7-/46.9- & 1955.4-/50.3- & 1992.3-/61.2- & 1985.0-/49.7- & 2004.6-/72.4- & 1999.6+/24.7$\approx$ & 1940.2-/37.8-&0.70/0.81 (60\%)\\
AMAPPO &1860.0-/50.8$\approx$ & 2012.4-/60.2- & 1911.4+/50.6- & 1918.8+/41.0$\approx$ & 1947.8-/50.1- & 2024.8-/101.5- & 2024.6-/25.1$\approx$ & \textbf{1873.8}+/32.0-&0.77/0.77 (60\%)\\
ARCPOM &1865.9-/63.4- & 1967.1-/48.6- & \textbf{1898.2}+/42.2- & 1940.4+/42.8$\approx$ & 1935.3-/41.7- & 1978.3-/78.4- & 1988.0+/\textbf{16.5}+ & 1893.4+/38.6-&0.85/0.84 (63\%)\\
ALSAC &1928.4-/52.5- & 1967.1-/52.2- & 1904.1+/43.1- & 2002.7-/56.6- & 1986.1-/52.8- & 1994.2-/69.7- & 2023.1-/28.1- & 1905.3+/36.2-&0.72/0.80 (60\%)\\
\midrule
SAC &1884.2-/53.4- & 1979.9-/51.4- & 1928.2$\approx$/47.7- & 1994.6-/54.9- & 1981.8-/50.2- & 2009.3-/76.6- & 2008.2$\approx$/24.3$\approx$ & 1927.7$\approx$/36.4-&0.71/0.80 (57\%)\\
PPO &1867.4-/51.9- & 1967.3-/49.8- & 1911.6+/47.9- & 1985.3-/53.9- & 1947.0-/44.5- & 1977.1-/68.5- & 1984.2+/24.1$\approx$ & 1915.9$\approx$/31.8$\approx$&0.81/0.84 (60\%)\\
ASAC &1909.9-/59.9- & 1995.4-/53.9- & 1937.4-/49.3- & 1998.6-/53.4- & 1981.2-/50.3- & 2035.4-/90.5- & 2003.9+/23.4+ & 1929.7-/37.1-&0.67/0.76 (55\%)\\
APPO &1879.7-/63.5- & 1953.5-/43.8$\approx$ & 1915.7$\approx$/55.8- & 1960.9$\approx$/40.7$\approx$ & 1939.9-/46.1- & 1988.7-/87.9- & 1971.6+/19.6+ & 1894.2+/34.7-&0.84/0.82 (61\%)\\
\midrule
MIX &1912.7-/53.6- & 2004.9-/51.4- & 1946.6$\approx$/50.9- & 2017.4-/59.5- & 2013.9-/59.0- & 2024.7-/73.9- & 2009.0$\approx$/24.7$\approx$ & 1938.7$\approx$/38.3-&0.63/0.77 (57\%)\\
FCFS &2111.7-/106.9- & 2060.2-/82.3- & 2007.7-/84.8- & 2072.9-/75.8- & 2140.8-/110.0- & 2073.6-/85.3- & 2085.5-/37.5- & 1952.2-/\textbf{29.3}$\approx$&0.26/0.45 (34\%)\\
EDD &2100.6-/114.1- & 2015.6-/44.9$\approx$ & 2033.8-/65.8- & 2022.9-/36.4+ & 2013.9-/41.4- & 2012.8-/54.2- & \textbf{1955.7}+/19.2+ & 2032.2-/55.1-&0.46/0.75 (62\%)\\
NVF &1877.9-/54.1- & 1944.5$\approx$/43.1$\approx$ & 1918.8$\approx$/\textbf{27.5}+ & 2009.1-/65.0- & 1900.2-/42.0- & 1972.2-/77.4- & 1970.1+/19.9+ & 1904.4$\approx$/31.0$\approx$&0.84/0.86 (62\%)\\
STD &1851.4-/54.8- & 1959.0-/51.6- & 1943.4-/66.9- & 1967.3$\approx$/36.8$\approx$ & 2015.6-/51.9- & 1989.4-/69.2- & 2047.2-/25.4$\approx$ & 1913.3$\approx$/40.1-&0.72/0.80 (58\%)\\
Random &2088.7-/123.4- & 2096.9-/93.6- & 2084.5-/133.0- & 2146.7-/121.4- & 2106.9-/95.9- & 2153.2-/130.2- & 2111.2-/79.9- & 2101.4-/98.1-&0.03/0.02 (7\%)\\
\midrule
& (0/0/16) / (0/2/14)& (0/1/15) / (0/3/13)& (4/6/6) / (1/0/15)& (4/2/10) / (2/4/10)& (0/0/16) / (0/0/16)& (0/0/16) / (0/0/16)& (7/3/6) / (6/6/4)& (6/5/5) / (0/5/11)\\

\bottomrule
    \end{tabular}
    }

    \label{tab:noised+25}
\end{table*}

\begin{table*}[htbp]
    \centering
        \caption{Performance on DMH$\pm$30: ``$M/C (P)$'' indicates the average normalised makespan, tardiness and percentage of constraint satisfaction over 30 independent trials of five different seeds.
        Bold number indicates the best value in the corresponding column. ``+'',``$\approx$'' and ``-'' indicate the policy performs statistically better/similar/worse than ACERL policy. The number of policies that are ``better'', ``similar'' and ``worse'' than ACERL in terms of makespan and tardiness on each instance is summarised in the bottom row. Horizontal rules in the table separate different groups of algorithms.}
    \resizebox{\textwidth}{!}{
      \setlength{\tabcolsep}{2pt}
    \begin{tabular}{c|c|c|c|c|c|c|c|c|c}
\toprule
\multirow{2}{*}{Algorithm}  &DMH$\pm30$-01 & DMH$\pm30$-02 & DMH$\pm30$-03 & DMH$\pm30$-04 & DMH$\pm30$-05 & DMH$\pm30$-06 & DMH$\pm30$-07 & DMH$\pm30$-08  & \multirow{2}{*}{$M/C (P)$} \\

& $F_m$/$F_t$ & $F_m$/$F_t$& $F_m$/$F_t$& $F_m$/$F_t$& $F_m$/$F_t$& $F_m$/$F_t$& $F_m$/$F_t$& $F_m$/$F_t$&\\
\midrule
ACERL&\textbf{1837.2}/\textbf{25.6} & 1926.4/39.3 & \textbf{1892.5}/32.4 & 1944.0/43.6 & \textbf{1901.0}/\textbf{30.1} & 1943.5/\textbf{39.2} & 1949.8/\textbf{10.4} & 1876.2/\textbf{22.9}&0.93/0.96 (93\%)\\
\midrule
MAPPO &1882.4-/40.0- & 1969.8-/39.6$\approx$ & 1939.4-/31.5$\approx$ & \textbf{1937.4}$\approx$/50.2- & 2017.4-/59.0- & 2012.2-/67.6- & 1935.2+/19.7- & 1885.4$\approx$/32.5-&0.78/0.82 (70\%)\\
RCPOM &1851.3$\approx$/34.2- & 1953.8-/46.8- & 1910.3-/42.7- & 1938.4+/46.3- & 1969.7-/48.9- & 1938.8$\approx$/55.5- & 1948.4$\approx$/11.4- & 1884.1$\approx$/26.4$\approx$&0.87/0.87 (70\%)\\
LSAC &1891.7-/47.6- & 1974.3-/44.5- & 1931.7-/53.3- & 1990.3-/56.7- & 1980.9-/52.0- & 1982.7-/63.7- & 1950.4$\approx$/14.2- & 1918.3-/37.5-&0.74/0.79 (60\%)\\
AMAPPO &1873.8-/36.3- & 1988.2-/38.9$\approx$ & 1937.6-/31.2$\approx$ & 1953.0$\approx$/48.1- & 1982.4-/58.2- & 1959.8-/64.0- & \textbf{1923.8}+/15.4- & \textbf{1867.4}+/33.0-&0.83/0.84 (70\%)\\
ARCPOM &1844.6$\approx$/39.4- & 1925.8$\approx$/40.8$\approx$ & 1934.2-/40.1- & 1962.9-/44.4$\approx$ & 1928.1-/39.4- & 1963.7-/70.6- & 1956.3$\approx$/20.3- & 1903.4-/43.1-&0.84/0.82 (69\%)\\
ALSAC &1907.2-/44.4- & 1953.1-/46.8- & 1929.3-/45.1- & 1942.5$\approx$/49.7- & 1973.1-/52.2- & 1977.1-/63.1- & 1955.5-/16.3- & 1909.3-/41.7-&0.79/0.79 (68\%)\\
\midrule
SAC &1886.7-/43.5- & 1955.8-/43.9- & 1939.6-/47.8- & 1976.6-/54.5- & 1983.1-/52.1- & 1987.8-/64.8- & 1945.3$\approx$/14.1- & 1925.8-/36.9-&0.75/0.80 (62\%)\\
PPO &1863.3-/42.5- & 1931.6$\approx$/39.6$\approx$ & 1924.6-/50.6- & 1960.2-/51.3- & 1955.0-/49.1- & 1963.7-/61.3- & 1947.1$\approx$/14.9- & 1914.1-/32.5-&0.83/0.83 (66\%)\\
ASAC &1897.2-/44.3- & 1972.5-/45.1- & 1938.2-/46.8- & 1991.6-/52.6- & 1993.9-/55.1- & 2015.1-/74.9- & 1959.7$\approx$/19.6- & 1936.9-/39.1-&0.69/0.76 (60\%)\\
APPO &1857.8$\approx$/44.7- & 1942.5-/41.4- & 1925.6-/49.3- & 1956.8$\approx$/40.9+ & 1952.5-/52.6- & 1967.8-/76.3- & 1956.0-/17.3- & 1898.8-/34.1-&0.83/0.80 (64\%)\\
\midrule
MIX &1935.8-/48.8- & 1989.7-/46.6- & 1960.1-/53.5- & 1995.6-/55.9- & 2026.3-/62.7- & 2013.2-/64.7- & 1944.2$\approx$/15.2- & 1980.4-/47.2-&0.62/0.75 (53\%)\\
FCFS &2104.5-/107.6- & 2087.1-/92.7- & 2046.7-/94.1- & 2121.8-/107.0- & 2162.3-/120.8- & 2134.1-/77.1- & 2036.0-/18.0- & 2007.8-/39.1-&0.12/0.39 (32\%)\\
EDD &2020.6-/67.2- & 2021.5-/38.5+ & 2017.1-/87.8- & 2033.1-/\textbf{37.3}+ & 2037.3-/56.2- & 2027.6-/52.1- & 1979.4-/20.6- & 1991.0-/43.2-&0.46/0.76 (53\%)\\
NVF &1851.3$\approx$/34.0- & \textbf{1892.2}+/\textbf{28.3}+ & 1896.9$\approx$/\textbf{25.2}+ & 1981.7-/55.1$\approx$ & 1909.4$\approx$/41.5- & \textbf{1916.0}+/59.0- & 1936.8+/16.5- & 1912.1-/34.6-&0.92/0.89 (81\%)\\
STD &1862.7$\approx$/42.2- & 1968.7-/49.6- & 1903.5-/32.4$\approx$ & 1964.9-/39.3$\approx$ & 1980.6-/49.5- & 1966.9$\approx$/60.9- & 1935.2+/14.9- & 1899.0-/36.9-&0.82/0.84 (72\%)\\
Random &2115.1-/130.9- & 2104.1-/94.4- & 2097.8-/147.2- & 2137.6-/128.3- & 2171.4-/130.2- & 2119.8-/111.8- & 2057.6-/69.5- & 2086.2-/101.8-&0.01/0.00 (9\%)\\
\midrule
& (0/5/11) / (0/0/16)& (1/2/13) / (2/4/10)& (0/1/15) / (1/3/12)& (1/4/11) / (2/3/11)& (0/1/15) / (0/0/16)& (1/2/13) / (0/0/16)& (4/7/5) / (0/0/16)& (1/2/13) / (0/1/15)\\
\bottomrule
    \end{tabular}
    }

    \label{tab:noised+30}
\end{table*}

\newpage
\onecolumn

\subsection{Supplementary experimental results of cross-validation}\label{sec:suploo}
Tables \ref{tab:l1o2}, \ref{tab:l1o3}, \ref{tab:l1o4}, \ref{tab:l1o5}, \ref{tab:l1o6}, \ref{tab:l1o7} and \ref{tab:l1o8} are experiment results of leave-one-out cross-validation presented in Section V-C of the main manuscript by leaving DMH-01 to DMH-08 out, respectively. ACERL shows the best overall performance on all leave-one-out cross-validations for the highest average makespan $M$, average tardiness $C$ and constraint satisfaction percentages $P$. Although ACERL is not trained on the split instance, it still demonstrates high constraint satisfaction on the split instance, while other algorithms fail to do so.

\begin{table*}[htbp]
    \centering
        \caption{Cross validation using leave-one-out on DMH-02 (highlighted with grey blocks): average makespan and tardiness over 30 independent trials of five different seeds on each instance. Bold numbers indicate the best makespan and tardiness. ``+'',``$\approx$'' and ``-'' indicate the policy performs statistically better/similar/worse than ACERL policy. The number of policies that are ``better'', ``similar'' and ``worse'' than ACERL in terms of makespan and tardiness on each instance is summarised in the bottom row. The last column with header ``$M/C (P)$'' indicates the average normalised makespan, tardiness and percentage of constraint satisfaction. Horizontal rules in the table separate different groups of algorithms.}
    \resizebox{\textwidth}{!}{
      \setlength{\tabcolsep}{2pt}
    \begin{tabular}{c|c|c|c|c|c|c|c|c|c}
\toprule
\multicolumn{1}{c|}{\multirow{2}{*}{Algorithm} }&DMH-01 & \cellcolor[rgb]{.9,.9,.9}DMH-02 & DMH-03 & DMH-04 & DMH-05 & DMH-06 & DMH-07 & DMH-08 & \multirow{2}{*}{$M/C (P)$} \\
 &$F_m$/$F_t$ & \cellcolor[rgb]{.9,.9,.9}$F_m$/$F_t$& $F_m$/$F_t$& $F_m$/$F_t$& $F_m$/$F_t$& $F_m$/$F_t$& $F_m$/$F_t$& $F_m$/$F_t$&\\
\midrule
ACERL&\textbf{1788.2}/39.9 & \cellcolor[rgb]{.9,.9,.9}1945.9/45.9 & \textbf{1855.0}/30.9 & \textbf{1918.2}/\textbf{30.6} & 

\textbf{1866.0}/\textbf{27.4} & \textbf{1863.0}/\textbf{38.9} & 1932.8/\textbf{8.1} & 1860.7/\textbf{24.7}&0.97/0.96 (92\%)\\
\midrule
MAPPO&1886.0-/67.6- & \cellcolor[rgb]{.9,.9,.9}1928.8+/43.6$\approx$ & 1882.8-/52.7- & 1938.2-/36.3- & 1940.0-/46.0- & 2004.6-/66.2- & 1932.8$\approx$/14.5- & 1862.0$\approx$/27.3-&0.83/0.81 (64\%)\\
RCPOM&1912.5-/66.5- & \cellcolor[rgb]{.9,.9,.9}1973.9-/52.5$\approx$ & 1938.0-/49.9- & 1998.7-/49.7- & 1977.8-/44.5- & 2038.4-/90.6- & 1959.5-/11.2$\approx$ & 1917.7-/32.2-&0.64/0.75 (56\%)\\
LSAC&1887.8-/60.9- & \cellcolor[rgb]{.9,.9,.9}1958.6$\approx$/46.1$\approx$ & 1927.5-/46.3- & 1979.6-/42.0- & 1951.3-/44.6- & 1982.2-/52.7- & 1985.1-/22.2- & 1894.3-/26.8$\approx$&0.70/0.82 (65\%)\\
AMAPPO&1854.4-/58.9- & \cellcolor[rgb]{.9,.9,.9}\textbf{1905.4}+/39.6+ & 1922.2-/55.7- & 1938.2-/36.4- & 1968.0-/47.1- & 1955.4-/73.7- & 1950.8-/19.9- & \textbf{1857.8}+/27.1-&0.83/0.81 (64\%)\\
ARCPOM&1904.7-/66.4- & \cellcolor[rgb]{.9,.9,.9}1940.3$\approx$/43.2+ & 1928.7-/49.3- & 1971.8-/48.3- & 1942.8-/39.0- & 1992.7-/75.2- & 1945.0-/13.2- & 1923.1-/33.0-&0.73/0.79 (61\%)\\
ALSAC&1882.2-/62.2- & \cellcolor[rgb]{.9,.9,.9}1947.6$\approx$/49.4- & 1920.3-/46.6- & 1974.4-/42.6- & 1919.6-/40.0- & 1965.6-/69.1- & 1958.5-/16.2- & 1912.5-/40.3-&0.75/0.78 (56\%)\\
\midrule
SAC&1905.9-/62.3- & \cellcolor[rgb]{.9,.9,.9}1974.4-/50.0$\approx$ & 1926.8-/39.9- & 1993.6-/58.2- & 1992.0-/57.9- & 1998.3-/68.7- & 1935.5-/12.4- & 1934.6-/37.2-&0.67/0.76 (55\%)\\
PPO&1913.4-/60.8- & \cellcolor[rgb]{.9,.9,.9}1986.0-/57.1- & 1936.9-/43.2- & 1990.7-/49.5- & 1990.1-/54.4- & 2032.3-/92.3- & 1952.6-/15.6- & 1926.9-/37.3-&0.64/0.72 (55\%)\\
ASAC&1914.1-/67.6- & \cellcolor[rgb]{.9,.9,.9}1950.9$\approx$/49.8- & 1971.0-/65.8- & 2006.8-/54.0- & 1979.2-/54.5- & 1993.7-/58.7- & 1967.1-/17.3- & 1948.1-/47.4-&0.63/0.72 (51\%)\\
APPO&1915.5-/68.5- & \cellcolor[rgb]{.9,.9,.9}1973.8-/53.0- & 1939.1-/46.2- & 1982.7-/48.1- & 1981.5-/55.6- & 2018.3-/92.9- & 1957.6-/15.7- & 1919.2-/35.1-&0.66/0.72 (53\%)\\
\midrule
MIX&1939.9-/60.3- & \cellcolor[rgb]{.9,.9,.9}2000.9-/60.4- & 1932.2-/45.0- & 2006.1-/52.8- & 2028.2-/64.2- & 2027.1-/69.5- & 1969.2-/16.1- & 1971.0-/46.9-&0.56/0.71 (54\%)\\
FCFS&2081.1-/90.2- & \cellcolor[rgb]{.9,.9,.9}2084.5-/82.3- & 2014.7-/64.2- & 2136.7-/105.0- & 2194.6-/122.5- & 2123.5-/85.2- & \textbf{1927.9}+/11.2- & 1933.6-/27.4$\approx$&0.29/0.49 (37\%)\\
EDD&1903.2-/\textbf{33.9}+ & \cellcolor[rgb]{.9,.9,.9}1968.6-/\textbf{29.6}+ & 1977.8-/\textbf{26.5}$\approx$ & 1988.1-/32.8$\approx$ & 1950.7-/33.7- & 2016.8-/46.8- & 1940.5-/12.1- & 2020.8-/45.3-&0.61/0.93 (86\%)\\
NVF&1876.5-/69.6- & \cellcolor[rgb]{.9,.9,.9}1958.4$\approx$/56.4- & 1946.7-/49.6- & 2040.6-/66.7- & 1933.9-/56.3- & 1953.4-/78.5- & 1996.5-/21.8- & 1944.6-/35.5-&0.64/0.69 (51\%)\\
STD&1868.8-/66.6- & \cellcolor[rgb]{.9,.9,.9}1961.7$\approx$/49.2$\approx$ & 1921.3-/51.7- & 1970.7-/35.6- & 1917.1-/41.4- & 1955.4-/62.7- & 1983.7-/30.4- & 1883.5-/35.1-&0.75/0.77 (53\%)\\
Random&2098.7-/124.5- & \cellcolor[rgb]{.9,.9,.9}2113.8-/103.1- & 2091.2-/143.7- & 2135.1-/123.1- & 2149.3-/119.0- & 2159.1-/129.3- & 2083.0-/70.2- & 2067.8-/89.5-&0.02/0.00 (12\%)\\
\midrule
& (0/0/16) / (1/0/15)& \cellcolor[rgb]{.9,.9,.9}(2/6/8) / (3/5/8)& (0/0/16) / (0/1/15)& (0/0/16) / (0/1/15)& (0/0/16) / (0/0/16)& (0/0/16) / (0/0/16)& (1/1/14) / (0/1/15)& (1/1/14) / (0/2/14)\\

\bottomrule
    \end{tabular}
    }

    \label{tab:l1o2}
\end{table*}

\begin{table*}[htbp]
    \centering
        \caption{Cross validation using leave-one-out on DMH-03 (highlighted with grey blocks): average makespan and tardiness over 30 independent trials of five different seeds on each instance. Bold numbers indicate the best makespan and tardiness. ``+'',``$\approx$'' and ``-'' indicate the policy performs statistically better/similar/worse than ACERL policy. The number of policies that are ``better'', ``similar'' and ``worse'' than ACERL in terms of makespan and tardiness on each instance is summarised in the bottom row. The last column with header ``$M/C (P)$'' indicates the average normalised makespan, tardiness and percentage of constraint satisfaction. Horizontal rules in the table separate different groups of algorithms.}
    \resizebox{\textwidth}{!}{
      \setlength{\tabcolsep}{2pt}
    \begin{tabular}{c|c|c|c|c|c|c|c|c|c}
\toprule
\multicolumn{1}{c|}{\multirow{2}{*}{Algorithm} }&DMH-01 & DMH-02 &   \cellcolor[rgb]{.9,.9,.9}DMH-03 & DMH-04 & DMH-05 & DMH-06 & DMH-07 & DMH-08 & \multirow{2}{*}{$M/C (P)$} \\
 &$F_m$/$F_t$ & $F_m$/$F_t$& \cellcolor[rgb]{.9,.9,.9}$F_m$/$F_t$& $F_m$/$F_t$& $F_m$/$F_t$& $F_m$/$F_t$& $F_m$/$F_t$& $F_m$/$F_t$&\\
\midrule
ACERL&\textbf{1796.8}/37.5 & \textbf{1873.8}/36.8 & \cellcolor[rgb]{.9,.9,.9}1921.4/38.7 & \textbf{1889.0}/35.5 & \textbf{1862.0}/\textbf{28.8} & \textbf{1860.4}/\textbf{38.2} & \textbf{1927.0}/\textbf{7.4} & \textbf{1861.0}/\textbf{23.9}&0.98/0.97 (93\%)\\
\midrule
MAPPO&1883.2-/68.9- & 1914.2-/34.0$\approx$ & \cellcolor[rgb]{.9,.9,.9}1906.0+/54.9- & 1960.2-/42.5- & 1943.8-/53.4- & 1986.0-/77.3- & 1927.0$\approx$/13.2- & 1865.2$\approx$/37.2-&0.81/0.78 (55\%)\\
RCPOM&1907.5-/76.1- & 1952.5-/51.9- & \cellcolor[rgb]{.9,.9,.9}1923.7-/52.7- & 1984.0-/51.0- & 1949.4-/42.0- & 2000.9-/83.2- & 1938.2-/12.3- & 1912.8-/37.4-&0.71/0.74 (57\%)\\
LSAC&1927.3-/70.5- & 1991.3-/57.6- & \cellcolor[rgb]{.9,.9,.9}1944.5-/44.8$\approx$ & 1985.2-/50.5- & 1989.3-/53.8- & 2024.9-/85.4- & 1949.7-/13.3- & 1955.5-/38.1-&0.61/0.72 (55\%)\\
AMAPPO&1889.4-/82.7- & 1919.0-/37.4$\approx$ & \cellcolor[rgb]{.9,.9,.9}\textbf{1890.6}+/54.1- & 1958.6-/37.8- & 1909.0-/46.7- & 2045.8-/118.9- & 1927.0$\approx$/15.3- & 1883.4-/39.2-&0.79/0.70 (57\%)\\
ARCPOM&1902.1-/64.1- & 1978.4-/59.7- & \cellcolor[rgb]{.9,.9,.9}1951.1-/51.0- & 1997.4-/45.2- & 1944.2-/44.7- & 2011.1-/79.0- & 1979.3-/23.7- & 1928.6-/29.3-&0.63/0.74 (48\%)\\
ALSAC&1913.0-/62.0- & 1977.0-/55.5- & \cellcolor[rgb]{.9,.9,.9}1942.5-/58.0- & 1954.3-/55.6- & 1934.6-/41.4- & 2008.7-/79.4- & 1947.0-/12.9- & 1921.5-/35.8-&0.69/0.75 (59\%)\\
\midrule
SAC&1886.9-/58.2- & 1980.3-/52.0- & \cellcolor[rgb]{.9,.9,.9}1922.1$\approx$/45.5$\approx$ & 1990.9-/55.9- & 1952.4-/45.2- & 1983.0-/70.3- & 1955.5-/15.8- & 1922.2-/37.7-&0.69/0.77 (58\%)\\
PPO&1902.8-/61.7- & 1978.7-/54.5- & \cellcolor[rgb]{.9,.9,.9}1932.9-/42.3$\approx$ & 1989.7-/48.6- & 1966.5-/51.6- & 2005.3-/86.4- & 1963.6-/17.1- & 1922.9-/35.4-&0.66/0.74 (56\%)\\
ASAC&1898.8-/70.7- & 1960.9-/52.7- & \cellcolor[rgb]{.9,.9,.9}1929.3-/51.5- & 1975.7-/44.6- & 1928.9-/38.2- & 1970.5-/75.0- & 1957.2-/14.9- & 1906.4-/34.3-&0.72/0.77 (52\%)\\
APPO&1893.4-/64.9- & 1974.0-/52.9- & \cellcolor[rgb]{.9,.9,.9}1929.2-/44.8- & 1985.9-/49.1- & 1962.1-/50.3- & 2011.8-/96.6- & 1960.1-/14.3- & 1915.4-/35.3-&0.67/0.73 (54\%)\\
\midrule
MIX&1939.9-/60.3- & 2000.9-/60.4- & \cellcolor[rgb]{.9,.9,.9}1932.2$\approx$/45.0$\approx$ & 2006.1-/52.8- & 2028.2-/64.2- & 2027.1-/69.5- & 1969.2-/16.1- & 1971.0-/46.9-&0.56/0.71 (54\%)\\
FCFS&2081.1-/90.2- & 2084.5-/82.3- & \cellcolor[rgb]{.9,.9,.9}2014.7-/64.2- & 2136.7-/105.0- & 2194.6-/122.5- & 2123.5-/85.2- & 1927.9$\approx$/11.2- & 1933.6-/27.4$\approx$&0.29/0.49 (37\%)\\
EDD&1903.2-/\textbf{33.9}+ & 1968.6-/\textbf{29.6}+ & \cellcolor[rgb]{.9,.9,.9}1977.8-/\textbf{26.5}+ & 1988.1-/\textbf{32.8}+ & 1950.7-/33.7$\approx$ & 2016.8-/46.8$\approx$ & 1940.5-/12.1- & 2020.8-/45.3-&0.60/0.93 (86\%)\\
NVF&1876.5-/69.6- & 1958.4-/56.4- & \cellcolor[rgb]{.9,.9,.9}1946.7-/49.6- & 2040.6-/66.7- & 1933.9-/56.3- & 1953.4-/78.5- & 1996.5-/21.8- & 1944.6-/35.5-&0.64/0.69 (51\%)\\
STD&1868.8-/66.6- & 1961.7-/49.2- & \cellcolor[rgb]{.9,.9,.9}1921.3$\approx$/51.7- & 1970.7-/35.6$\approx$ & 1917.1-/41.4- & 1955.4-/62.7- & 1983.7-/30.4- & 1883.5$\approx$/35.1-&0.74/0.77 (53\%)\\
Random&2098.7-/124.5- & 2113.8-/103.1- &\cellcolor[rgb]{.9,.9,.9} 2091.2-/143.7- & 2135.1-/123.1- & 
2149.3-/119.0- & 2159.1-/129.3- & 2083.0-/70.2- & 2067.8-/89.5-&0.02/0.00 (12\%)\\
\midrule
& (0/0/16) / (1/0/15)& (0/0/16) / (1/2/13)& \cellcolor[rgb]{.9,.9,.9}(2/3/11) / (1/4/11)& (0/0/16) / (1/1/14)& (0/0/16) / (0/1/15)& (0/0/16) / (0/1/15)& (0/3/13) / (0/0/16)& (0/2/14) / (0/1/15)\\

\bottomrule
    \end{tabular}
    }

    \label{tab:l1o3}
\end{table*}

\begin{table*}[htbp]
    \centering
        \caption{Cross validation using leave-one-out on DMH-04 (highlighted with grey blocks): average makespan and tardiness over 30 independent trials of five different seeds on each instance. Bold numbers indicate the best makespan and tardiness. ``+'',``$\approx$'' and ``-'' indicate the policy performs statistically better/similar/worse than ACERL policy. The number of policies that are ``better'', ``similar'' and ``worse'' than ACERL in terms of makespan and tardiness on each instance is summarised in the bottom row. The last column with header ``$M/C (P)$'' indicates the average normalised makespan, tardiness and percentage of constraint satisfaction. Horizontal rules in the table separate different groups of algorithms.}
    \resizebox{\textwidth}{!}{
      \setlength{\tabcolsep}{2pt}
    \begin{tabular}{c|c|c|c|c|c|c|c|c|c}
\toprule
\multicolumn{1}{c|}{\multirow{2}{*}{Algorithm} }&DMH-01 & DMH-02 & DMH-03 &\cellcolor[rgb]{.9,.9,.9} DMH-04 & DMH-05 & DMH-06 & DMH-07 & DMH-08 & \multirow{2}{*}{$M/C (P)$} \\
 &$F_m$/$F_t$ & $F_m$/$F_t$& $F_m$/$F_t$&\cellcolor[rgb]{.9,.9,.9} $F_m$/$F_t$& $F_m$/$F_t$& $F_m$/$F_t$& $F_m$/$F_t$& $F_m$/$F_t$&\\
\midrule
ACERL&\textbf{1801.2}/\textbf{29.0} & \textbf{1862.0}/29.9 & \textbf{1846.8}/\textbf{23.7} &\cellcolor[rgb]{.9,.9,.9} 1969.0/46.1 & \textbf{1844.2}/\textbf{27.1} & \textbf{1843.8}/\textbf{31.3} & 1932.8/\textbf{10.0} & \textbf{1862.0}/30.2&0.98/0.98 (97\%)\\
\midrule
MAPPO&1902.2-/61.0- & 1920.2-/35.6- & 1904.2-/40.4- & \cellcolor[rgb]{.9,.9,.9}\textbf{1948.0}+/48.8- & 1977.2-/58.0- & 1982.0-/65.6- & \textbf{1927.0}+/10.8$\approx$ & 1870.4-/31.9$\approx$&0.79/0.81 (65\%)\\
RCPOM&1891.4-/66.6- & 1956.8-/47.0- & 1929.8-/47.9- & \cellcolor[rgb]{.9,.9,.9}1971.9$\approx$/42.8$\approx$ & 1954.8-/47.7- & 1998.1-/76.3- & 1961.8-/17.6- & 1923.3-/40.5-&0.69/0.76 (54\%)\\
LSAC&1927.2-/68.5- & 1970.8-/46.8- & 1929.3-/42.6- & \cellcolor[rgb]{.9,.9,.9}2017.6-/59.8- & 1998.4-/53.1- & 2023.6-/84.8- & 1942.5$\approx$/17.5- & 1919.0-/35.7-&0.63/0.73 (57\%)\\
AMAPPO&1892.0-/59.6- & 1903.4-/36.1- & 1918.4-/48.3- & \cellcolor[rgb]{.9,.9,.9}1970.0-/40.6$\approx$ & 1894.8-/39.6- & 2002.0-/75.1- & 1927.0+/14.4- & 1866.4$\approx$/28.8+&0.81/0.83 (64\%)\\
ARCPOM&1890.7-/59.2- & 1958.0-/48.1- & 1916.3-/48.0- & \cellcolor[rgb]{.9,.9,.9}1994.6-/47.5- & 1971.1-/51.7- & 1972.3-/83.0- & 1980.9-/24.0- & 1900.5-/30.2$\approx$&0.69/0.75 (55\%)\\
ALSAC&1903.5-/67.5- & 1955.4-/47.2- & 1922.2-/53.3- & \cellcolor[rgb]{.9,.9,.9}1991.0-/51.0- & 1937.2-/40.9- & 2021.8-/94.6- & 1951.3-/16.5- & 1921.9-/36.5-&0.68/0.73 (53\%)\\
\midrule
SAC&1916.9-/73.6- & 1941.6-/46.9- & 1934.7-/55.9- & \cellcolor[rgb]{.9,.9,.9}1974.7$\approx$/48.1$\approx$ & 1987.7-/51.1- & 2028.0-/99.1- & 1970.6-/19.9- & 1933.6-/38.7-&0.65/0.70 (56\%)\\
PPO&1922.3-/64.9- & 1982.1-/55.1- & 1938.1-/44.6- & \cellcolor[rgb]{.9,.9,.9}1987.4-/49.5- & 1987.4-/57.6- & 2030.9-/91.7- & 1950.7-/15.5- & 1932.3-/37.5-&0.63/0.72 (52\%)\\
ASAC&1900.9-/65.0- & 1970.0-/52.3- & 1916.2-/43.8- & \cellcolor[rgb]{.9,.9,.9}1963.8+/44.4$\approx$ & 1967.4-/46.4- & 1987.0-/74.7- & 1978.6-/18.3- & 1908.1-/36.4-&0.69/0.76 (56\%)\\
APPO&1908.3-/68.1- & 1977.6-/53.3- & 1934.4-/44.6- & \cellcolor[rgb]{.9,.9,.9}1989.7-/49.0- & 1978.7-/53.7- & 2019.3-/99.4- & 1958.2-/15.8- & 1922.5-/36.8-&0.65/0.71 (53\%)\\
\midrule
MIX&1939.9-/60.3- & 2000.9-/60.4- & 1932.2-/45.0- & \cellcolor[rgb]{.9,.9,.9}2006.1-/52.8- & 2028.2-/64.2- & 2027.1-/69.5- & 1969.2-/16.1- & 1971.0-/46.9-&0.55/0.71 (54\%)\\
FCFS&2081.1-/90.2- & 2084.5-/82.3- & 2014.7-/64.2- & \cellcolor[rgb]{.9,.9,.9}2136.7-/105.0- & 2194.6-/122.5- & 2123.5-/85.2- & 1927.9+/11.2- & 1933.6-/\textbf{27.4}$\approx$&0.28/0.49 (37\%)\\
EDD&1903.2-/33.9$\approx$ & 1968.6-/\textbf{29.6}$\approx$ & 1977.8-/26.5- & \cellcolor[rgb]{.9,.9,.9}1988.1-/\textbf{32.8}+ & 1950.7-/33.7- & 2016.8-/46.8$\approx$ & 1940.5-/12.1- & 2020.8-/45.3-&0.60/0.92 (86\%)\\
NVF&1876.5-/69.6- & 1958.4-/56.4- & 1946.7-/49.6- & \cellcolor[rgb]{.9,.9,.9}2040.6-/66.7- & 1933.9-/56.3- & 1953.4-/78.5- & 1996.5-/21.8- & 1944.6-/35.5$\approx$&0.63/0.69 (51\%)\\
STD&1868.8-/66.6- & 1961.7-/49.2- & 1921.3-/51.7- & \cellcolor[rgb]{.9,.9,.9}1970.7$\approx$/35.6+ & 1917.1-/41.4- & 1955.4-/62.7- & 1983.7-/30.4- & 1883.5-/35.1$\approx$&0.74/0.77 (53\%)\\
Random&2098.7-/124.5- & 2113.8-/103.1- & 2091.2-/143.7- & \cellcolor[rgb]{.9,.9,.9}2135.1-/123.1- & 2149.3-/119.0- & 2159.1-/129.3- & 2083.0-/70.2- & 2067.8-/89.5-&0.02/0.00 (12\%)\\
\midrule
& (0/0/16) / (0/1/15)& (0/0/16) / (0/1/15)& (0/0/16) / (0/0/16)& \cellcolor[rgb]{.9,.9,.9}(2/3/11) / (2/4/10)& (0/0/16) / (0/0/16)& (0/0/16) / (0/1/15)& (3/1/12) / (0/1/15)& (0/1/15) / (1/5/10)\\

\bottomrule
    \end{tabular}
    }

    \label{tab:l1o4}
\end{table*}

\begin{table*}[htbp]
    \centering
        \caption{Cross validation using leave-one-out on DMH-05 (highlighted with grey blocks): average makespan and tardiness over 30 independent trials of five different seeds on each instance. Bold numbers indicate the best makespan and tardiness. ``+'',``$\approx$'' and ``-'' indicate the policy performs statistically better/similar/worse than ACERL policy. The number of policies that are ``better'', ``similar'' and ``worse'' than ACERL in terms of makespan and tardiness on each instance is summarised in the bottom row. The last column with header ``$M/C (P)$'' indicates the average normalised makespan, tardiness and percentage of constraint satisfaction. Horizontal rules in the table separate different groups of algorithms.}
    \resizebox{\textwidth}{!}{
      \setlength{\tabcolsep}{2pt}
    \begin{tabular}{c|c|c|c|c|c|c|c|c|c}
\toprule
\multicolumn{1}{c|}{\multirow{2}{*}{Algorithm} }&DMH-01 & DMH-02 & DMH-03 & DMH-04 & \cellcolor[rgb]{.9,.9,.9}DMH-05 & DMH-06 & DMH-07 & DMH-08 & \multirow{2}{*}{$M/C (P)$} \\
 &$F_m$/$F_t$ & $F_m$/$F_t$& $F_m$/$F_t$& $F_m$/$F_t$& \cellcolor[rgb]{.9,.9,.9}$F_m$/$F_t$& $F_m$/$F_t$& $F_m$/$F_t$& $F_m$/$F_t$&\\
\midrule
ACERL&\textbf{1805.2}/41.6 & \textbf{1849.0}/\textbf{29.0} & \textbf{1846.0}/28.4 & \textbf{1900.6}/\textbf{31.6} &\cellcolor[rgb]{.9,.9,.9} 1962.1/51.7 & \textbf{1863.2}/\textbf{34.0} & \textbf{1927.0}/\textbf{9.4} & \textbf{1851.4}/28.0&0.98/0.96 (92\%)\\
\midrule
MAPPO&1879.8-/68.8- & 1924.4-/35.5- & 1912.0-/51.6- & 1962.6-/41.1- & \cellcolor[rgb]{.9,.9,.9}1956.8$\approx$/56.1- & 2004.8-/87.2- & 1961.6-/22.8- & 1874.0-/31.7$\approx$&0.75/0.76 (52\%)\\
RCPOM&1904.7-/72.2- & 1940.4-/46.3- & 1941.2-/48.0- & 1992.7-/47.8- & \cellcolor[rgb]{.9,.9,.9}1967.8$\approx$/46.5$\approx$ & 2004.3-/82.7- & 1952.5-/13.3- & 1925.8-/33.1$\approx$&0.67/0.77 (57\%)\\
LSAC&1917.8-/68.7- & 1935.9-/43.9- & 1923.5-/47.3- & 1959.4-/39.2- & \cellcolor[rgb]{.9,.9,.9}1938.1+/41.3+ & 1961.1-/66.8- & 1965.4-/17.1- & 1929.7-/38.8-&0.71/0.80 (62\%)\\
AMAPPO&1857.2-/48.0- & 1930.4-/34.4- & 1944.6-/43.8- & 1961.6-/42.0- & \cellcolor[rgb]{.9,.9,.9}1975.2$\approx$/57.2- & 1988.6-/68.1- & 1927.0$\approx$/14.4- & 1872.6-/28.5$\approx$&0.77/0.85 (70\%)\\
ARCPOM&1908.7-/57.4- & 1965.2-/49.6- & 1927.4-/45.1- & 2003.2-/55.4- & \cellcolor[rgb]{.9,.9,.9}1955.1$\approx$/48.9$\approx$ & 1987.0-/76.6- & 1979.0-/14.7- & 1922.9-/32.5$\approx$&0.65/0.78 (56\%)\\
ALSAC&1896.8-/65.2- & 1975.8-/49.8- & 1930.6-/48.8- & 1988.3-/48.8- & \cellcolor[rgb]{.9,.9,.9}1984.0-/52.3$\approx$ & 2021.2-/92.6- & 1954.7-/15.7- & 1917.4-/35.3-&0.65/0.74 (53\%)\\
\midrule
SAC&1926.4-/64.0- & 1952.3-/50.2- & 1912.7-/57.6- & 1999.6-/53.3- & \cellcolor[rgb]{.9,.9,.9}1968.5$\approx$/48.1+ & 2005.9-/75.1- & 1953.2-/13.7- & 1943.7-/41.4-&0.66/0.75 (52\%)\\
PPO&1916.9-/61.2- & 1997.0-/58.3- & 1939.7-/42.2- & 1992.1-/51.8- & \cellcolor[rgb]{.9,.9,.9}1988.7-/56.3- & 2032.2-/89.1- & 1956.5-/15.0- & 1925.0-/36.4-&0.62/0.73 (53\%)\\
ASAC&1887.9-/74.7- & 1945.1-/46.2- & 1917.5-/55.4- & 1942.9-/34.7$\approx$ & \cellcolor[rgb]{.9,.9,.9}1923.6+/47.1+ & 1981.1-/93.3- & 1965.4-/19.3- & 1955.4-/49.5-&0.72/0.72 (56\%)\\
APPO&1906.1-/64.7- & 1976.0-/53.8- & 1933.9-/47.6- & 1994.5-/51.9- & \cellcolor[rgb]{.9,.9,.9}1975.6$\approx$/53.2$\approx$ & 2021.1-/92.9- & 1961.9-/15.4- & 1921.3-/35.2-&0.64/0.73 (52\%)\\
\midrule
MIX&1939.9-/60.3- & 2000.9-/60.4- & 1932.2-/45.0- & 2006.1-/52.8- & \cellcolor[rgb]{.9,.9,.9}2028.2-/64.2- & 2027.1-/69.5- & 1969.2-/16.1- & 1971.0-/46.9-&0.55/0.72 (54\%)\\
FCFS&2081.1-/90.2- & 2084.5-/82.3- & 2014.7-/64.2- & 2136.7-/105.0- & \cellcolor[rgb]{.9,.9,.9}2194.6-/122.5- & 2123.5-/85.2- & 1927.9$\approx$/11.2- & 1933.6-/\textbf{27.4}$\approx$&0.28/0.50 (37\%)\\
EDD&1903.2-/\textbf{33.9}+ & 1968.6-/29.6$\approx$ & 1977.8-/\textbf{26.5}$\approx$ & 1988.1-/32.8$\approx$ &\cellcolor[rgb]{.9,.9,.9} 1950.7$\approx$/\textbf{33.7}+ & 2016.8-/46.8$\approx$ & 1940.5-/12.1- & 2020.8-/45.3-&0.60/0.94 (86\%)\\
NVF&1876.5-/69.6- & 1958.4-/56.4- & 1946.7-/49.6- & 2040.6-/66.7- & \cellcolor[rgb]{.9,.9,.9}1933.9+/56.3$\approx$ & 1953.4-/78.5- & 1996.5-/21.8- & 1944.6-/35.5-&0.64/0.70 (51\%)\\
STD&1868.8-/66.6- & 1961.7-/49.2- & 1921.3-/51.7- & 1970.7-/35.6- & \cellcolor[rgb]{.9,.9,.9}\textbf{1917.1}+/41.4+ & 1955.4-/62.7- & 1983.7-/30.4- & 1883.5-/35.1$\approx$&0.74/0.78 (53\%)\\
Random&2098.7-/124.5- & 2113.8-/103.1- & 2091.2-/143.7- & 2135.1-/123.1- & \cellcolor[rgb]{.9,.9,.9}2149.3-/119.0- & 2159.1-/129.3- & 2083.0-/70.2- & 2067.8-/89.5-&0.02/0.00 (12\%)\\
\midrule
& (0/0/16) / (1/0/15)& (0/0/16) / (0/1/15)& (0/0/16) / (0/1/15)& (0/0/16) / (0/2/14)& \cellcolor[rgb]{.9,.9,.9}(4/7/5) / (5/5/6)& (0/0/16) / (0/1/15)& (0/2/14) / (0/0/16)& (0/0/16) / (0/6/10)\\

\bottomrule
    \end{tabular}
    }

    \label{tab:l1o5}
\end{table*}

\begin{table*}[t]
    \centering
        \caption{Cross validation using leave-one-out on DMH-06 (highlighted with grey blocks): average makespan and tardiness over 30 independent trials of five different seeds on each instance. Bold numbers indicate the best makespan and tardiness. ``+'',``$\approx$'' and ``-'' indicate the policy performs statistically better/similar/worse than ACERL policy. The number of policies that are ``better'', ``similar'' and ``worse'' than ACERL in terms of makespan and tardiness on each instance is summarised in the bottom row. The last column with header ``$M/C (P)$'' indicates the average normalised makespan, tardiness and percentage of constraint satisfaction. Horizontal rules in the table separate different groups of algorithms.}
    \resizebox{\textwidth}{!}{
      \setlength{\tabcolsep}{2pt}
    \begin{tabular}{c|c|c|c|c|c|c|c|c|c}
\toprule
\multicolumn{1}{c|}{\multirow{2}{*}{Algorithm} }&DMH-01 & DMH-02 & DMH-03 & DMH-04 & DMH-05 & \cellcolor[rgb]{.9,.9,.9}DMH-06 & DMH-07 & DMH-08 & \multirow{2}{*}{$M/C (P)$} \\
&$F_m$/$F_t$ & $F_m$/$F_t$& $F_m$/$F_t$& $F_m$/$F_t$& $F_m$/$F_t$& \cellcolor[rgb]{.9,.9,.9}$F_m$/$F_t$& $F_m$/$F_t$& $F_m$/$F_t$&\\
\midrule
ACERL&\textbf{1782.4}/37.9 & \textbf{1866.0}/31.5 & \textbf{1838.5}/\textbf{24.1} & \textbf{1919.0}/\textbf{32.7} & \textbf{1852.4}/\textbf{26.7} & \cellcolor[rgb]{.9,.9,.9}\textbf{1933.1}/54.9 & 1932.8/\textbf{10.8} & \textbf{1861.0}/29.0&1.00/0.97 (87\%)\\
\midrule
MAPPO&1867.2-/71.3- & 1905.2-/38.0- & 1909.6-/58.2- & 1955.2-/42.6- & 1924.4-/50.2- & \cellcolor[rgb]{.9,.9,.9}1973.4-/85.2- & \textbf{1927.0}+/15.3- & 1863.6$\approx$/39.7-&0.84/0.76 (50\%)\\
RCPOM&1889.4-/64.6- & 1972.5-/49.8- & 1931.7-/45.3- & 1958.7-/39.6- & 1979.2-/51.6- & \cellcolor[rgb]{.9,.9,.9}1987.8-/70.1- & 1961.7-/16.0- & 1892.5-/28.6$\approx$&0.71/0.81 (63\%)\\
LSAC&1912.0-/59.4- & 1968.9-/56.3- & 1946.0-/45.9- & 2003.8-/54.0- & 1935.4-/40.8- & \cellcolor[rgb]{.9,.9,.9}2001.4-/88.4- & 1946.9$\approx$/11.9$\approx$ & 1917.2-/31.7$\approx$&0.68/0.77 (57\%)\\
AMAPPO&1879.2-/50.4- & 1940.6-/38.3- & 1884.0-/36.8- & 1984.2-/51.9- & 1934.4-/56.9- & \cellcolor[rgb]{.9,.9,.9}1989.2-/58.0$\approx$ & 1927.0+/13.7- & 1866.0$\approx$/\textbf{26.4}+&0.80/0.86 (70\%)\\
ARCPOM&1921.4-/67.0- & 1979.8-/49.1- & 1914.8-/42.2- & 2002.9-/57.5- & 1945.6-/44.8- & \cellcolor[rgb]{.9,.9,.9}1996.6-/76.0- & 1948.7-/14.0- & 1924.0-/35.4-&0.68/0.78 (57\%)\\
ALSAC&1908.6-/66.1- & 1957.6-/45.1- & 1952.7-/48.6- & 1976.7-/49.3- & 1969.3-/52.3- & \cellcolor[rgb]{.9,.9,.9}2004.7-/67.6- & 1964.3-/19.8- & 1919.5-/36.5-&0.67/0.78 (58\%)\\
\midrule
SAC&1910.5-/62.3- & 1942.9-/50.5- & 1911.8-/39.0- & 1985.2-/47.7- & 1946.6-/37.3- & \cellcolor[rgb]{.9,.9,.9}1986.9-/73.3- & 1974.7-/14.9- & 1914.3-/32.8-&0.70/0.81 (63\%)\\
PPO&1898.7-/61.3- & 1976.8-/53.9- & 1926.4-/44.7- & 1983.9-/47.3- & 1975.9-/53.4- & \cellcolor[rgb]{.9,.9,.9}2008.4-/88.3- & 1954.5-/15.5- & 1922.5-/36.8-&0.67/0.75 (56\%)\\
ASAC&1911.9-/70.7- & 1966.8-/51.3- & 1932.5-/40.7- & 1982.0-/52.5- & 1966.6-/45.1- & \cellcolor[rgb]{.9,.9,.9}2027.6-/86.0- & 1966.3-/17.8$\approx$ & 1921.8-/36.7-&0.65/0.75 (61\%)\\
APPO&1906.0-/66.2- & 1971.5-/52.1- & 1934.2-/48.1- & 1983.5-/48.0- & 1976.3-/53.4- & \cellcolor[rgb]{.9,.9,.9}2015.5-/94.1- & 1956.3-/15.6- & 1916.0-/35.6-&0.67/0.74 (53\%)\\
\midrule
MIX&1939.9-/60.3- & 2000.9-/60.4- & 1932.2-/45.0- & 2006.1-/52.8- & 2028.2-/64.2- & \cellcolor[rgb]{.9,.9,.9}2027.1-/69.5$\approx$ & 1969.2-/16.1$\approx$ & 1971.0-/46.9-&0.56/0.73 (54\%)\\
FCFS&2081.1-/90.2- & 2084.5-/82.3- & 2014.7-/64.2- & 2136.7-/105.0- & 2194.6-/122.5- & \cellcolor[rgb]{.9,.9,.9}2123.5-/85.2- & 1927.9+/11.2$\approx$ & 1933.6-/27.4$\approx$&0.28/0.51 (37\%)\\
EDD&1903.2-/\textbf{33.9}$\approx$ & 1968.6-/\textbf{29.6}+ & 1977.8-/26.5$\approx$ & 1988.1-/32.8$\approx$ & 1950.7-/33.7- & \cellcolor[rgb]{.9,.9,.9}2016.8-/\textbf{46.8}+ & 1940.5-/12.1- & 2020.8-/45.3-&0.60/0.95 (86\%)\\
NVF&1876.5-/69.6- & 1958.4-/56.4- & 1946.7-/49.6- & 2040.6-/66.7- & 1933.9-/56.3- & \cellcolor[rgb]{.9,.9,.9}1953.4-/78.5- & 1996.5-/21.8- & 1944.6-/35.5$\approx$&0.65/0.70 (51\%)\\
STD&1868.8-/66.6- & 1961.7-/49.2- & 1921.3-/51.7- & 1970.7-/35.6$\approx$ & 1917.1-/41.4- & \cellcolor[rgb]{.9,.9,.9}1955.4-/62.7$\approx$ & 1983.7-/30.4- & 1883.5-/35.1$\approx$&0.75/0.79 (53\%)\\
Random&2098.7-/124.5- & 2113.8-/103.1- & 2091.2-/143.7- & 2135.1-/123.1- & 2149.3-/119.0- & \cellcolor[rgb]{.9,.9,.9}2159.1-/129.3- & 2083.0-/70.2- & 2067.8-/89.5-&0.02/0.00 (12\%)\\
\midrule
& (0/0/16) / (0/1/15)& (0/0/16) / (1/0/15)& (0/0/16) / (0/1/15)& (0/0/16) / (0/2/14)& (0/0/16) / (0/0/16)& \cellcolor[rgb]{.9,.9,.9}(0/0/16) / (1/3/12)& (3/1/12) / (0/4/12)& (0/2/14) / (1/5/10)\\

\bottomrule
    \end{tabular}
    }

    \label{tab:l1o6}
\end{table*}

\begin{table*}[htbp]
    \centering
        \caption{Cross validation using leave-one-out on DMH-07 (highlighted with grey blocks): average makespan and tardiness over 30 independent trials of five different seeds on each instance. Bold numbers indicate the best makespan and tardiness. ``+'',``$\approx$'' and ``-'' indicate the policy performs statistically better/similar/worse than ACERL policy. The number of policies that are ``better'', ``similar'' and ``worse'' than ACERL in terms of makespan and tardiness on each instance is summarised in the bottom row. The last column with header ``$M/C (P)$'' indicates the average normalised makespan, tardiness and percentage of constraint satisfaction. Horizontal rules in the table separate different groups of algorithms.}
    \resizebox{\textwidth}{!}{
      \setlength{\tabcolsep}{2pt}
    \begin{tabular}{c|c|c|c|c|c|c|c|c|c}
\toprule
\multicolumn{1}{c|}{\multirow{2}{*}{Algorithm} }&DMH-01 & DMH-02 & DMH-03 & DMH-04 & DMH-05 & DMH-06 & \cellcolor[rgb]{.9,.9,.9}DMH-07 & DMH-08 & \multirow{2}{*}{$M/C (P)$} \\
&$F_m$/$F_t$ & $F_m$/$F_t$& $F_m$/$F_t$& $F_m$/$F_t$& $F_m$/$F_t$& $F_m$/$F_t$&\cellcolor[rgb]{.9,.9,.9} $F_m$/$F_t$& $F_m$/$F_t$&\\
\midrule
ACREL &  \textbf{1783.2}/35.6 & \textbf{1865.2}/33.7 & \textbf{1852.8}/27.8 & \textbf{1888.6}/33.1 & \textbf{1864.6}/35.2 & \textbf{1848.1}/\textbf{38.4} &\cellcolor[rgb]{.9,.9,.9} \cellcolor[rgb]{.9,.9,.9}1962.6/13.4 & \textbf{1848.6}/34.6&\textbf{0.97}/\textbf{0.97} (\textbf{95\%})\\
\midrule
ACERL&\textbf{1783.2}/35.6 & \textbf{1865.2}/33.7 & \textbf{1852.8}/27.8 & \textbf{1888.6}/33.1 & \textbf{1864.6}/35.2 & \textbf{1848.1}/\textbf{38.4} & \cellcolor[rgb]{.9,.9,.9}1962.6/13.4 & \textbf{1848.6}/34.6&0.97/0.97 (95\%)\\
MAPPO&1910.4-/74.9- & 1927.4-/37.8- & 1890.2-/46.0- & 1941.6-/42.0- & 1921.0-/41.1- & 1958.8-/92.9- & \cellcolor[rgb]{.9,.9,.9}\textbf{1927.0}+/12.8+ & 1886.4-/27.7+&0.78/0.80 (62\%)\\
RCPOM&1903.5-/67.9- & 1951.6-/45.3- & 1946.0-/48.9- & 1970.2-/45.7- & 1946.5-/42.4- & 1999.5-/88.1- & \cellcolor[rgb]{.9,.9,.9}1973.0-/15.6$\approx$ & 1909.6-/40.5-&0.66/0.77 (61\%)\\
LSAC&1900.2-/60.4- & 1949.1-/50.9- & 1907.0-/36.2- & 1983.2-/52.9- & 1950.9-/45.2- & 2015.4-/80.1- & \cellcolor[rgb]{.9,.9,.9}1979.6-/17.5$\approx$ & 1905.2-/33.1$\approx$&0.66/0.79 (59\%)\\
AMAPPO&1869.4-/58.7- & 1935.6-/37.9- & 1894.0-/45.4- & 1931.6-/36.7- & 1909.4-/42.3- & 1974.8-/76.6- & \cellcolor[rgb]{.9,.9,.9}1932.8+/14.3- & 1871.6-/26.7+&0.80/0.85 (70\%)\\
ARCPOM&1898.4-/58.8- & 1962.6-/61.9- & 1933.2-/53.9- & 1987.8-/52.7- & 1964.3-/39.7$\approx$ & 2039.4-/79.4- & \cellcolor[rgb]{.9,.9,.9}1951.8+/15.0- & 1925.3-/\textbf{26.5}+&0.63/0.78 (50\%)\\
ALSAC&1895.7-/63.5- & 1973.5-/51.6- & 1918.0-/40.3- & 1987.7-/49.1- & 1957.8-/46.4- & 2000.6-/81.3- & \cellcolor[rgb]{.9,.9,.9}1961.6$\approx$/16.3$\approx$ & 1912.1-/34.1+&0.66/0.78 (57\%)\\
\midrule
SAC&1896.2-/59.4- & 1972.4-/54.8- & 1941.7-/55.9- & 1965.1-/48.5- & 1947.2-/44.0- & 2002.0-/78.8- & \cellcolor[rgb]{.9,.9,.9}1969.8$\approx$/18.4- & 1921.0-/34.0$\approx$&0.65/0.77 (59\%)\\
PPO&1891.5-/63.2- & 1959.4-/53.1- & 1925.2-/43.4- & 1979.3-/47.3- & 1967.1-/54.9- & 2008.9-/90.3- & \cellcolor[rgb]{.9,.9,.9}1969.7$\approx$/18.0$\approx$ & 1917.1-/34.4$\approx$&0.65/0.75 (54\%)\\
ASAC&1902.2-/66.2- & 1919.5-/42.6- & 1941.0-/46.1- & 1979.0-/45.4- & 1936.3-/48.2- & 1984.4-/74.8- & \cellcolor[rgb]{.9,.9,.9}1964.5$\approx$/16.4$\approx$ & 1930.4-/38.7$\approx$&0.68/0.79 (58\%)\\
APPO&1894.0-/69.7- & 1953.5-/49.5- & 1931.7-/51.4- & 1979.6-/45.7- & 1958.4-/52.5- & 2003.7-/99.0- & \cellcolor[rgb]{.9,.9,.9}1977.1-/18.3- & 1908.2-/34.8$\approx$&0.65/0.73 (53\%)\\
\midrule
MIX&1939.9-/60.3- & 2000.9-/60.4- & 1932.2-/45.0- & 2006.1-/52.8- & 2028.2-/64.2- & 2027.1-/69.5- & \cellcolor[rgb]{.9,.9,.9}1969.2$\approx$/16.1$\approx$ & 1971.0-/46.9-&0.53/0.73 (54\%)\\
FCFS&2081.1-/90.2- & 2084.5-/82.3- & 2014.7-/64.2- & 2136.7-/105.0- & 2194.6-/122.5- & 2123.5-/85.2- & \cellcolor[rgb]{.9,.9,.9}1927.9+/\textbf{11.2}+ & 1933.6-/27.4+&0.28/0.50 (37\%)\\
EDD&1903.2-/\textbf{33.9}$\approx$ & 1968.6-/\textbf{29.6}+ & 1977.8-/\textbf{26.5}$\approx$ & 1988.1-/\textbf{32.8}$\approx$ & 1950.7-/\textbf{33.7}$\approx$ & 2016.8-/46.8$\approx$ & \cellcolor[rgb]{.9,.9,.9}1940.5+/12.1$\approx$ & 2020.8-/45.3-&0.58/0.95 (86\%)\\
NVF&1876.5-/69.6- & 1958.4-/56.4- & 1946.7-/49.6- & 2040.6-/66.7- & 1933.9-/56.3- & 1953.4-/78.5- & \cellcolor[rgb]{.9,.9,.9}1996.5-/21.8- & 1944.6-/35.5$\approx$&0.61/0.71 (51\%)\\
STD&1868.8-/66.6- & 1961.7-/49.2- & 1921.3-/51.7- & 1970.7-/35.6$\approx$ & 1917.1-/41.4$\approx$ & 1955.4-/62.7- & \cellcolor[rgb]{.9,.9,.9}1983.7$\approx$/30.4$\approx$ & 1883.5-/35.1$\approx$&0.71/0.79 (53\%)\\
Random&2098.7-/124.5- & 2113.8-/103.1- & 2091.2-/143.7- & 2135.1-/123.1- & 2149.3-/119.0- & 2159.1-/129.3- & \cellcolor[rgb]{.9,.9,.9}2083.0-/70.2- & 2067.8-/89.5-&0.02/0.00 (12\%)\\
\midrule
& (0/0/16) / (0/1/15)& (0/0/16) / (1/0/15)& (0/0/16) / (0/1/15)& (0/0/16) / (0/2/14)& (0/0/16) / (0/3/13)& (0/0/16) / (0/1/15)& \cellcolor[rgb]{.9,.9,.9}(5/6/5) / (2/8/6)& (0/0/16) / (5/7/4)\\

\bottomrule
    \end{tabular}
    }

    \label{tab:l1o7}
\end{table*}

\begin{table*}[htbp]
    \centering
        \caption{Cross validation using leave-one-out on DMH-08 (highlighted with grey blocks): average makespan and tardiness over 30 independent trials of five different seeds on each instance. Bold numbers indicate the best makespan and tardiness. ``+'',``$\approx$'' and ``-'' indicate the policy performs statistically better/similar/worse than ACERL policy. The number of policies that are ``better'', ``similar'' and ``worse'' than ACERL in terms of makespan and tardiness on each instance is summarised in the bottom row. The last column with header ``$M/C (P)$'' indicates the average normalised makespan, tardiness and percentage of constraint satisfaction. Horizontal rules in the table separate different groups of algorithms.}
    \resizebox{\textwidth}{!}{
      \setlength{\tabcolsep}{2pt}
    \begin{tabular}{c|c|c|c|c|c|c|c|c|c}
\toprule
\multicolumn{1}{c|}{\multirow{2}{*}{Algorithm} }&DMH-01 & DMH-02 & DMH-03 & DMH-04 & DMH-05 & DMH-06 & DMH-07 & \cellcolor[rgb]{.9,.9,.9}DMH-08 & \multirow{2}{*}{$M/C (P)$} \\
&$F_m$/$F_t$ & $F_m$/$F_t$& $F_m$/$F_t$& $F_m$/$F_t$& $F_m$/$F_t$& $F_m$/$F_t$& $F_m$/$F_t$&\cellcolor[rgb]{.9,.9,.9} $F_m$/$F_t$&\\
\midrule
ACERL&\textbf{1793.3}/\textbf{26.5} & \textbf{1867.0}/32.5 & \textbf{1843.8}/\textbf{23.2} & \textbf{1908.6}/37.5 & \textbf{1832.4}/\textbf{25.9} & \textbf{1861.6}/\textbf{37.6} & 1929.6/\textbf{8.0} & \cellcolor[rgb]{.9,.9,.9}1942.3/28.8&0.95/0.99 (100\%)\\
\midrule
MAPPO&1883.4-/72.5- & 1919.0-/39.5- & 1911.8-/54.5- & 1950.2-/43.8- & 1919.4-/47.4- & 1998.8-/116.2- & 1927.0+/12.9- & \cellcolor[rgb]{.9,.9,.9}1884.4+/38.7-&0.77/0.71 (55\%)\\
RCPOM&1910.6-/67.9- & 1970.6-/53.4- & 1924.3-/48.0- & 2003.3-/47.0- & 1932.0-/44.3- & 1995.6-/71.1- & 1948.6-/12.5- & \cellcolor[rgb]{.9,.9,.9}1905.0+/35.4-&0.67/0.77 (56\%)\\
LSAC&1925.4-/61.0- & 1983.2-/53.3- & 1942.6-/48.9- & 1997.1-/55.5- & 2008.1-/57.7- & 2023.9-/73.5- & 1967.2-/18.9- & \cellcolor[rgb]{.9,.9,.9}1938.0+/39.8-&0.58/0.72 (55\%)\\
AMAPPO&1869.6-/51.8- & 1911.2-/39.0- & 1909.0-/55.4- & 1958.4-/44.9- & 1928.4-/43.4- & 1999.0-/67.3- & \textbf{1926.8}+/13.8- & \cellcolor[rgb]{.9,.9,.9}\textbf{1852.8}+/29.3-&0.80/0.82 (55\%)\\
ARCPOM&1902.5-/70.1- & 1950.9-/49.9- & 1925.4-/53.3- & 1999.2-/51.0- & 1959.3-/56.0- & 1999.8-/99.0- & 1959.2-/19.8- & \cellcolor[rgb]{.9,.9,.9}1938.8$\approx$/43.2-&0.64/0.68 (48\%)\\
ALSAC&1891.4-/61.9- & 1968.9-/52.0- & 1924.8-/44.4- & 1982.7-/50.1- & 1923.3-/43.0- & 2009.1-/80.4- & 1965.5-/18.6- & \cellcolor[rgb]{.9,.9,.9}1926.5+/34.7-&0.66/0.75 (57\%)\\
\midrule
SAC&1905.8-/55.1- & 1964.0-/53.7- & 1927.9-/39.6- & 1992.8-/53.7- & 2002.4-/53.0- & 2026.9-/77.3- & 1950.1$\approx$/13.8- & \cellcolor[rgb]{.9,.9,.9}1941.8$\approx$/41.9-&0.62/0.75 (61\%)\\
PPO&1922.0-/61.6- & 2006.3-/60.5- & 1937.1-/43.4- & 2003.1-/53.3- & 2012.4-/59.9- & 2041.6-/91.8- & 1944.6$\approx$/14.6- & \cellcolor[rgb]{.9,.9,.9}1942.7$\approx$/39.8-&0.57/0.70 (52\%)\\
ASAC&1905.2-/64.2- & 1974.6-/57.0- & 1926.7-/48.0- & 1981.7-/48.1- & 1961.2-/48.7- & 2009.6-/84.8- & 1978.1-/17.6- & \cellcolor[rgb]{.9,.9,.9}1937.6$\approx$/37.5-&0.62/0.72 (54\%)\\
APPO&1914.9-/65.0- & 1985.9-/55.7- & 1946.2-/44.8- & 1991.4-/49.4- & 1990.4-/55.7- & 2041.7-/97.3- & 1949.0-/14.5- &\cellcolor[rgb]{.9,.9,.9} 1930.0$\approx$/39.5-&0.60/0.70 (52\%)\\
\midrule
MIX&1939.9-/60.3- & 2000.9-/60.4- & 1932.2-/45.0- & 2006.1-/52.8- & 2028.2-/64.2- & 2027.1-/69.5- & 1969.2-/16.1- & \cellcolor[rgb]{.9,.9,.9}1971.0$\approx$/46.9-&0.53/0.71 (54\%)\\
FCFS&2081.1-/90.2- & 2084.5-/82.3- & 2014.7-/64.2- & 2136.7-/105.0- & 2194.6-/122.5- & 2123.5-/85.2- & 1927.9$\approx$/11.2- & \cellcolor[rgb]{.9,.9,.9}1933.6$\approx$/\textbf{27.4}+&0.28/0.49 (37\%)\\
EDD&1903.2-/33.9$\approx$ & 1968.6-/\textbf{29.6}+ & 1977.8-/26.5- & 1988.1-/\textbf{32.8}+ & 1950.7-/33.7- & 2016.8-/46.8$\approx$ & 1940.5-/12.1- &\cellcolor[rgb]{.9,.9,.9} 2020.8-/45.3$\approx$&0.58/0.92 (86\%)\\
NVF&1876.5-/69.6- & 1958.4-/56.4- & 1946.7-/49.6- & 2040.6-/66.7- & 1933.9-/56.3- & 1953.4-/78.5- & 1996.5-/21.8- &\cellcolor[rgb]{.9,.9,.9} 1944.6$\approx$/35.5-&0.61/0.69 (51\%)\\
STD&1868.8-/66.6- & 1961.7-/49.2- & 1921.3-/51.7- & 1970.7-/35.6$\approx$ & 1917.1-/41.4- & 1955.4-/62.7- & 1983.7-/30.4- & \cellcolor[rgb]{.9,.9,.9}1883.5+/35.1-&0.72/0.77 (53\%)\\
Random&2098.7-/124.5- & 2113.8-/103.1- & 2091.2-/143.7- & 2135.1-/123.1- & 2149.3-/119.0- & 2159.1-/129.3- & 2083.0-/70.2- & \cellcolor[rgb]{.9,.9,.9}2067.8-/89.5-&0.02/0.00 (12\%)\\
\midrule
& (0/0/16) / (0/1/15)& (0/0/16) / (1/0/15)& (0/0/16) / (0/0/16)& (0/0/16) / (1/1/14)& (0/0/16) / (0/0/16)& (0/0/16) / (0/1/15)& (2/3/11) / (0/0/16)& \cellcolor[rgb]{.9,.9,.9}(6/8/2) / (1/1/14)\\

\bottomrule
    \end{tabular}
    }

    \label{tab:l1o8}
\end{table*}

\clearpage
\subsection{Supplementary experimental results of ablation study}\label{sec:supab}
Tables~\ref{tab:ablation_single} and~\ref{tab:ablation_unseen_single} present the results of training with only one instance. When only one instance is included in the training process, although the policy exhibits excellent performance on the training instance, it performs poorly on new instances. Specifically, it obtains a high makespan and struggles to satisfy the constraints on new instances.
This reveals that the performance of the policy obtained by considering only one instance during training is limited.

\begin{table*}[!ht]
    \centering
        \caption{Ablation study on training instances (DMH-01 to DMH-08). ``1--8" means that the agent is only trained on the specific instance. Bold number indicates the best makespan and tardiness. ``$M/C (P)$'' indicates the average normalised makespan, tardiness and percentage of constraint satisfaction.  Strategy denotes the ranking strategies including ISR, CF, R and F.
        }
    \resizebox{\textwidth}{!}{
      \setlength{\tabcolsep}{2pt}
    \begin{tabular}{lc|c|c|c|c|c|c|c|c|c}
\toprule
\multirow{2}{*}{Strategy}& \multirow{2}{*}{Mode}&DMH-01 & DMH-02 & DMH-03 & DMH-04 & DMH-05 & DMH-06 & DMH-07 & DMH-08 & \multirow{2}{*}{$M/C (P)$} \\
& &$F_m$/$F_t$ & $F_m$/$F_t$& $F_m$/$F_t$& $F_m$/$F_t$& $F_m$/$F_t$& $F_m$/$F_t$& $F_m$/$F_t$& $F_m$/$F_t$&\\
\midrule

ISR&DMH-1&\textbf{1750.8}/\textbf{19.9} & 1909.8/35.5 & 1926.8/47.0 & 2013.4/65.7 & 1985.4/65.6 & 1949.2/54.7 & 1963.0/12.7 & 1924.1/29.5&0.67/0.77 (63\%)\\
CF&DMH-1&1772.2/25.8 & 1936.2/42.4 & 1956.3/50.7 & 2001.9/49.3 & 2016.4/64.4 & 1962.9/53.5 & 1957.0/11.0 & 1924.6/21.1&0.63/0.79 (72\%)\\
R&DMH-1&1769.6/28.3 & 1950.9/52.9 & 1951.2/50.1 & 1963.3/41.9 & 2004.9/49.6 & 1979.4/59.3 & 1960.7/12.9 & 1934.4/23.4&0.63/0.78 (68\%)\\
F&DMH-1&1785.8/41.2 & 1951.2/52.8 & 1930.7/47.0 & 1997.2/58.3 & 1966.9/55.0 & 1941.4/65.5 & 1973.1/15.1 & 1906.6/24.9&0.65/0.73 (61\%)\\
ISR&DMH-2&1907.7/67.5 & 1846.0/27.4 & 1956.1/63.2 & 2002.6/53.6 & 1932.5/51.7 & 2016.7/69.0 & 1962.0/14.6 & 1921.6/36.8&0.62/0.71 (55\%)\\
CF&DMH-2&1929.7/73.0 & 1840.8/24.3 & 1926.1/47.8 & 1993.1/45.5 & 1926.4/48.1 & 2030.8/96.7 & 1949.8/12.6 & 1938.2/40.6&0.63/0.70 (56\%)\\
R&DMH-2&1972.8/91.7 & \textbf{1840.2}/\textbf{23.9} & 1962.9/71.4 & 2018.7/55.3 & 1950.6/54.1 & 2032.7/85.9 & 1939.2/12.4 & 1937.6/44.8&0.59/0.64 (43\%)\\
F&DMH-2&1912.5/63.7 & 1842.6/24.3 & 1923.2/44.1 & 1981.9/46.7 & 1947.6/50.3 & 2002.0/89.4 & 1980.8/21.6 & 1900.2/30.8&0.65/0.72 (65\%)\\
ISR&DMH-3&1931.0/58.6 & 1964.4/41.4 & \textbf{1831.0}/26.1 & 1980.6/54.0 & 1984.8/39.5 & 1985.8/46.4 & 1946.0/11.1 & 1934.9/41.8&0.63/0.77 (68\%)\\
CF&DMH-3&1916.8/49.2 & 1929.8/45.3 & 1834.8/\textbf{18.9} & 1995.2/59.6 & 1949.3/42.2 & 1961.6/57.1 & 1992.2/13.1 & 1908.9/30.5&0.64/0.78 (63\%)\\
R&DMH-3&1938.4/67.7 & 1922.8/38.9 & 1839.4/23.2 & 1969.8/52.6 & 1955.6/56.4 & 2011.9/76.5 & 1927.0/11.5 & 1923.8/42.7&0.67/0.71 (61\%)\\
F&DMH-3&1915.0/67.4 & 1953.9/47.5 & 1838.6/27.3 & 2017.0/57.1 & 1940.1/37.4 & 2014.8/65.7 & 1963.9/19.9 & 1888.6/35.9&0.64/0.72 (51\%)\\
ISR&DMH-4&1913.8/72.8 & 1977.9/43.1 & 1928.6/46.1 & 1892.0/33.7 & 1963.7/48.2 & 2039.3/103.7 & 1963.5/15.3 & 1983.5/53.3&0.58/0.66 (63\%)\\
CF&DMH-4&1902.8/68.5 & 1953.2/44.9 & 1919.7/43.1 & 1885.4/\textbf{28.8} & 1919.9/47.1 & 2002.6/77.1 & 1976.4/18.0 & 1958.5/51.3&0.63/0.70 (62\%)\\
R&DMH-4&1975.6/81.5 & 1973.4/47.1 & 1911.4/61.5 & 1878.8/32.9 & 1935.1/42.3 & 2077.2/110.1 & 1978.6/15.4 & 1917.7/38.8&0.59/0.65 (54\%)\\
F&DMH-4&1921.1/77.5 & 1969.2/47.8 & 1951.4/58.9 & \textbf{1877.2}/35.4 & 1936.7/34.5 & 2000.4/72.7 & 1990.7/27.4 & 1895.8/35.1&0.62/0.69 (57\%)\\
ISR&DMH-5&1893.4/69.3 & 1940.4/49.2 & 1940.7/58.0 & 1952.0/49.6 & 1825.0/14.6 & 1962.6/70.1 & 1998.3/22.7 & 1893.7/24.2&0.66/0.73 (63\%)\\
CF&DMH-5&1926.3/84.7 & 1933.2/39.2 & 1960.9/61.1 & 1999.4/59.6 & 1813.4/\textbf{12.3} & 1956.5/96.6 & 2012.0/28.6 & 1923.6/33.3&0.60/0.66 (54\%)\\
R&DMH-5&1937.5/86.2 & 1942.6/39.4 & 1911.9/63.6 & 1988.2/54.6 & \textbf{1809.0}/15.8 & 1985.5/102.9 & 2048.8/30.3 & 1912.2/26.0&0.59/0.66 (57\%)\\
F&DMH-5&1919.8/85.2 & 1929.1/46.5 & 1932.4/67.7 & 1959.5/47.9 & 1826.0/18.0 & 1953.5/98.0 & 2035.2/31.0 & 1902.4/30.1&0.63/0.64 (52\%)\\
ISR&DMH-6&1907.0/47.7 & 1968.4/56.3 & 1962.9/52.0 & 1964.6/38.3 & 1969.1/45.4 & 1835.0/31.4 & 1941.6/9.0 & 1899.4/27.8&0.67/0.79 (71\%)\\
CF&DMH-6&1890.1/49.4 & 1910.7/36.4 & 1931.6/45.3 & 1988.1/51.7 & 1951.2/40.7 & \textbf{1824.6}/\textbf{26.7} & 1981.1/19.2 & 1902.1/30.8&0.68/0.80 (83\%)\\
R&DMH-6&1882.5/53.4 & 1932.6/48.0 & 1941.7/41.0 & 1964.7/44.3 & 1960.8/50.2 & 1846.0/33.6 & 1976.6/16.4 & 1894.6/29.5&0.68/0.78 (74\%)\\
F&DMH-6&1905.7/56.3 & 1923.3/42.7 & 1935.2/43.9 & 1959.1/52.6 & 1935.6/37.7 & 1830.4/29.9 & 2005.9/25.1 & 1896.4/33.5&0.67/0.77 (72\%)\\
ISR&DMH-7&1972.5/84.4 & 1980.2/42.6 & 1919.0/37.2 & 2002.8/53.2 & 2033.6/57.6 & 2067.5/116.8 & 1916.0/9.8 & 1954.2/33.5&0.52/0.64 (52\%)\\
CF&DMH-7&1949.5/77.7 & 1950.4/46.8 & 1952.5/45.2 & 2024.3/49.1 & 1964.8/50.6 & 2059.0/107.1 & 1927.0/\textbf{3.3} & 1922.5/41.5&0.55/0.66 (67\%)\\
R&DMH-7&1939.7/84.2 & 1991.0/52.0 & 1937.1/42.9 & 2055.9/57.6 & 1983.8/46.1 & 2078.0/104.6 & \textbf{1910.6}/12.9 & 1917.2/39.6&0.53/0.63 (53\%)\\
F&DMH-7&1924.1/76.7 & 1989.8/58.1 & 1954.6/51.7 & 1991.5/57.5 & 1961.0/37.0 & 2076.4/102.2 & 1910.6/14.8 & 1898.3/34.8&0.58/0.64 (53\%)\\
ISR&DMH-8&1962.8/97.8 & 1992.0/43.2 & 1952.6/52.3 & 2038.6/62.5 & 1947.3/40.0 & 2080.2/131.2 & 1964.7/19.8 & \textbf{1832.4}/26.6&0.54/0.59 (54\%)\\
CF&DMH-8&1965.0/69.3 & 1955.5/50.7 & 1915.6/41.4 & 1971.4/38.3 & 1980.1/38.5 & 2001.3/79.0 & 1984.9/25.1 & 1838.4/\textbf{15.1}&0.61/0.73 (58\%)\\
R&DMH-8&1934.0/84.6 & 2015.1/53.2 & 1926.1/57.4 & 1998.0/52.3 & 1957.0/44.0 & 2048.4/104.2 & 1939.0/8.5 & 1837.6/26.0&0.60/0.65 (53\%)\\
F&DMH-8&1948.4/90.9 & 1969.2/44.7 & 1889.0/62.8 & 2004.2/46.6 & 1960.7/50.0 & 2039.6/133.2 & 1965.7/26.6 & 1842.8/29.3&0.61/0.58 (51\%)\\

\bottomrule
 \multicolumn{5}{l}{``ISR": Intrinsic stochastic ranking with rank-based fitness} & \multicolumn{5}{l}{``CF": Weighted sum of raw rewards and penalties} \\
  \multicolumn{5}{l}{``R": Raw reward with rank-based fitness} & \multicolumn{3}{l}{``F": Raw reward}\\
\multicolumn{5}{l}{``AIS": Adaptive instance sampler} & \multicolumn{3}{l}{``Uniform": Fixed instances}& \multicolumn{3}{l}{``Random": Randomly selecting instances}\\
    \end{tabular}
    }

    \label{tab:ablation_single}
\end{table*}

\begin{table*}[!t]
    \centering
        \caption{Ablation study on test instances (DMH-09 to DMH-16). ``1--8" means that the agent is only trained on the specific instance. Bold number indicates the best makespan and tardiness. ``$M/C (P)$'' indicates the average normalised makespan, tardiness and percentage of constraint satisfaction.  Strategy denotes the ranking strategies including ISR, CF, R and F.
        }
    \resizebox{\textwidth}{!}{
      \setlength{\tabcolsep}{2pt}
    \begin{tabular}{lc|c|c|c|c|c|c|c|c|c}
\toprule
\multirow{2}{*}{Strategy}& \multirow{2}{*}{Mode}&DMH-09 & DMH-10 & DMH-11 & DMH-12 & DMH-13 & DMH-14 & DMH-15 & DMH-16 & \multirow{2}{*}{$M/C (P)$} \\
& &$F_m$/$F_t$ & $F_m$/$F_t$& $F_m$/$F_t$& $F_m$/$F_t$& $F_m$/$F_t$& $F_m$/$F_t$& $F_m$/$F_t$& $F_m$/$F_t$&\\
\midrule

ISR&DMH-1&1791.4/30.9 & 1906.9/34.5 & 1942.3/49.9 & 2025.6/71.4 & 1986.4/66.7 & 1925.4/51.1 & 1979.4/15.7 & 1926.4/28.4&0.65/0.76 (65\%)\\
CF&DMH-1&1773.2/\textbf{26.2} & 1941.5/43.1 & 1956.6/52.6 & 1982.5/47.0 & 2014.8/64.9 & 1977.2/55.4 & 1958.8/11.0 & 1925.9/\textbf{22.0}&0.64/0.80 (70\%)\\
R&DMH-1&\textbf{1770.4}/28.7 & 1950.8/52.8 & 1945.2/50.1 & 1967.3/41.4 & 2023.5/52.7 & 1979.7/61.2 & 1964.7/13.3 & 1930.9/25.8&0.64/0.79 (65\%)\\
F&DMH-1&1787.2/41.4 & 1945.1/51.9 & 1894.5/46.4 & 1983.7/51.1 & 1976.9/58.0 & 1947.3/66.2 & 1977.5/17.0 & 1928.8/27.5&0.68/0.75 (63\%)\\
ISR&DMH-2&1884.5/70.1 & 1859.4/27.6 & 1956.5/61.5 & 2008.6/53.8 & 1957.8/50.4 & 2007.9/75.9 & 1965.8/15.6 & 1920.4/35.9&0.63/0.72 (55\%)\\
CF&DMH-2&1913.5/70.1 & 1839.8/23.7 & 1940.4/45.7 & 1986.8/44.3 & 1956.6/42.5 & 2028.0/99.8 & 1951.6/12.7 & 1953.4/41.9&0.64/0.73 (60\%)\\
R&DMH-2&1974.2/93.1 & \textbf{1839.2}/\textbf{23.5} & 1961.6/71.6 & 2003.7/50.8 & 1988.6/40.6 & 2026.9/87.8 & 1942.2/12.4 & 1943.7/41.7&0.60/0.69 (51\%)\\
F&DMH-2&1908.4/64.5 & 1856.0/24.4 & 1939.9/45.9 & 2000.1/48.9 & 1942.5/47.7 & 1986.3/89.4 & 1988.7/24.4 & 1904.5/31.5&0.65/0.73 (65\%)\\
ISR&DMH-3&1919.7/56.4 & 1945.8/41.2 & \textbf{1862.4}/31.1 & 1969.3/54.5 & 1941.3/33.1 & 1972.9/46.5 & 1951.2/11.4 & 1935.4/42.6&0.68/0.79 (70\%)\\
CF&DMH-3&1915.7/48.5 & 1926.0/42.7 & 1880.0/\textbf{23.6} & 2020.3/59.7 & 1930.2/38.8 & 1976.2/59.5 & 1994.1/14.6 & 1914.7/31.4&0.63/0.79 (65\%)\\
R&DMH-3&1913.7/66.7 & 1916.5/37.8 & 1872.0/25.8 & 1981.1/54.7 & 1944.5/44.7 & 2045.5/76.5 & \textbf{1924.8}/12.0 & 1925.3/43.4&0.68/0.74 (64\%)\\
F&DMH-3&1922.6/67.7 & 1952.4/47.8 & 1888.0/30.1 & 2027.7/57.4 & 1944.3/35.5 & 2013.3/66.1 & 1957.3/17.0 & 1913.2/38.4&0.62/0.74 (55\%)\\
ISR&DMH-4&1933.5/78.4 & 1972.3/43.1 & 1917.0/45.4 & \textbf{1900.0}/34.1 & 1958.9/39.1 & 2038.6/102.1 & 1959.8/18.9 & 1991.0/54.4&0.60/0.68 (63\%)\\
CF&DMH-4&1896.8/69.0 & 1952.0/44.6 & 1920.7/42.7 & 1927.0/\textbf{29.3} & 1959.0/43.2 & 2000.7/76.6 & 1965.0/18.0 & 1958.8/52.3&0.64/0.73 (65\%)\\
R&DMH-4&1960.4/81.7 & 1971.9/46.7 & 1899.9/61.2 & 1915.6/32.5 & 1911.0/33.6 & 2023.6/104.2 & 1979.6/15.8 & 1916.2/38.6&0.64/0.69 (54\%)\\
F&DMH-4&1917.7/77.8 & 1962.0/48.1 & 1959.8/56.7 & 1953.0/37.2 & 1934.0/36.0 & 1968.1/74.9 & 2000.4/25.4 & 1899.4/36.1&0.61/0.71 (58\%)\\
ISR&DMH-5&1883.8/65.6 & 1940.1/48.8 & 1940.8/58.3 & 1972.4/47.7 & 1888.3/21.7 & 1960.5/70.5 & 2010.6/27.8 & 1892.2/24.2&0.65/0.74 (62\%)\\
CF&DMH-5&1941.2/85.8 & 1935.3/39.0 & 1951.4/59.9 & 2000.7/58.5 & \textbf{1813.6}/\textbf{12.1} & 1951.5/106.7 & 1972.6/22.7 & 1898.7/31.8&0.66/0.68 (55\%)\\
R&DMH-5&1946.0/85.7 & 1945.4/39.2 & 1912.7/63.8 & 1994.2/53.7 & 1870.0/20.8 & 1982.0/102.3 & 1982.5/17.8 & 1911.6/26.4&0.64/0.69 (57\%)\\
F&DMH-5&1924.4/84.8 & 1928.8/46.0 & 1936.5/71.1 & 1965.6/46.9 & 1896.2/29.2 & 1982.7/104.1 & 1991.8/17.9 & 1920.0/35.0&0.64/0.66 (52\%)\\
ISR&DMH-6&1843.9/44.2 & 1965.5/56.2 & 1969.6/53.5 & 1962.5/37.8 & 1988.1/44.4 & 1864.4/37.7 & 1955.4/12.3 & 1909.5/29.5&0.68/0.81 (68\%)\\
CF&DMH-6&1895.8/49.7 & 1914.2/35.3 & 1944.3/47.2 & 1998.3/51.1 & 1974.6/40.6 & 1850.0/33.1 & 1978.4/18.8 & 1902.1/31.3&0.68/0.81 (77\%)\\
R&DMH-6&1875.8/55.6 & 1932.2/47.6 & 1943.9/41.1 & 1966.1/44.2 & 1958.7/50.3 & 1871.4/39.3 & 1977.0/15.4 & 1901.5/34.6&0.69/0.79 (71\%)\\
F&DMH-6&1890.1/54.9 & 1949.8/47.3 & 1907.5/42.8 & 1974.9/51.5 & 1947.3/40.5 & \textbf{1829.2}/\textbf{30.5} & 2006.0/25.5 & 1897.3/33.8&0.70/0.78 (75\%)\\
ISR&DMH-7&1964.7/81.9 & 1966.8/40.4 & 1911.0/41.8 & 2003.7/54.4 & 2004.0/45.8 & 2092.5/119.8 & 1943.4/11.2 & 1956.1/37.8&0.53/0.66 (56\%)\\
CF&DMH-7&1917.0/74.4 & 1951.4/44.8 & 1924.0/44.9 & 2028.8/48.7 & 1949.6/43.1 & 2064.3/109.0 & 1930.0/\textbf{3.4} & 1919.7/41.4&0.59/0.69 (70\%)\\
R&DMH-7&1960.3/85.3 & 1996.9/54.5 & 1923.5/39.2 & 2047.6/56.2 & 1997.9/47.1 & 2082.1/107.5 & 1966.1/15.6 & 1921.1/40.7&0.49/0.63 (55\%)\\
F&DMH-7&1935.6/79.2 & 2003.3/57.7 & 1929.8/51.4 & 2013.4/59.1 & 1941.0/33.9 & 2076.3/95.5 & 1963.0/17.5 & 1906.1/36.4&0.55/0.65 (51\%)\\
ISR&DMH-8&1973.4/105.1 & 1966.9/41.5 & 1928.8/52.3 & 2043.2/63.9 & 1917.1/26.6 & 2080.5/133.7 & 1972.0/18.8 & \textbf{1835.0}/27.2&0.58/0.62 (59\%)\\
CF&DMH-8&1881.8/59.5 & 1958.5/49.9 & 1907.9/44.2 & 1955.9/39.0 & 1954.3/33.7 & 2028.8/86.9 & 1984.4/25.2 & 1884.7/23.9&0.65/0.75 (61\%)\\
R&DMH-8&1956.6/88.0 & 1987.7/50.2 & 1965.8/63.7 & 2001.0/52.3 & 1920.5/32.2 & 2028.1/104.6 & 1942.0/8.4 & 1850.8/27.5&0.61/0.67 (56\%)\\
F&DMH-8&1944.8/95.8 & 1950.8/43.2 & 1925.4/64.0 & 2004.6/46.0 & 1894.5/26.5 & 2024.9/132.1 & 2000.2/35.9 & 1844.4/29.8&0.62/0.61 (54\%)\\

\bottomrule
 \multicolumn{5}{l}{``ISR": Intrinsic stochastic ranking with rank-based fitness} & \multicolumn{5}{l}{``CF": Weighted sum of raw rewards and penalties} \\
  \multicolumn{5}{l}{``R": Raw reward with rank-based fitness} & \multicolumn{3}{l}{``F": Raw reward}\\
\multicolumn{5}{l}{``AIS": Adaptive instance sampler} & \multicolumn{3}{l}{``Uniform": Fixed instances}& \multicolumn{3}{l}{``Random": Randomly selecting instances}\\

    \end{tabular}
    }

    \label{tab:ablation_unseen_single}
\end{table*}

\end{document}